\newcommand{\noun}[1]{\textsc{#1}}
\theoremstyle{plain}
\newtheorem{thm}{\protect\theoremname}
\theoremstyle{definition}
\newtheorem{defn}[thm]{\protect\definitionname}
\theoremstyle{remark}
\newtheorem{rem}[thm]{\protect\remarkname}
\theoremstyle{plain}
\newtheorem{lem}[thm]{\protect\lemmaname}
\theoremstyle{plain}
\newtheorem{prop}[thm]{\protect\propositionname}
\date{}
\providecommand{\definitionname}{Definition}
\providecommand{\lemmaname}{Lemma}
\providecommand{\propositionname}{Proposition}
\providecommand{\remarkname}{Remark}
\providecommand{\theoremname}{Theorem}
\begin{document}
\global\long\def\R{\mathbb{R}}%

\global\long\def\C{\mathbb{C}}%

\global\long\def\N{\mathbb{N}}%

\global\long\def\e{{\mathbf{e}}}%

\global\long\def\et#1{{\e(#1)}}%

\global\long\def\ef{{\mathbf{\et{\cdot}}}}%

\global\long\def\x{{\mathbf{x}}}%

\global\long\def\xt#1{{\x(#1)}}%

\global\long\def\xf{{\mathbf{\xt{\cdot}}}}%

\global\long\def\a{{\mathbf{a}}}%

\global\long\def\b{{\mathbf{b}}}%

\global\long\def\c{{\mathbf{c}}}%

\global\long\def\d{{\mathbf{d}}}%

\global\long\def\w{{\mathbf{w}}}%

\global\long\def\b{{\mathbf{b}}}%

\global\long\def\u{{\mathbf{u}}}%

\global\long\def\y{{\mathbf{y}}}%

\global\long\def\n{{\mathbf{n}}}%

\global\long\def\k{{\mathbf{k}}}%

\global\long\def\valpha{\mathbf{\alpha}}%

\global\long\def\yt#1{{\y(#1)}}%

\global\long\def\yf{{\mathbf{\yt{\cdot}}}}%

\global\long\def\z{{\mathbf{z}}}%

\global\long\def\v{{\mathbf{v}}}%

\global\long\def\h{{\mathbf{h}}}%

\global\long\def\q{{\mathbf{q}}}%

\global\long\def\s{{\mathbf{s}}}%

\global\long\def\p{{\mathbf{p}}}%

\global\long\def\f{{\mathbf{f}}}%

\global\long\def\rb{{\mathbf{r}}}%

\global\long\def\rt#1{{\rb(#1)}}%

\global\long\def\rf{{\mathbf{\rt{\cdot}}}}%

\global\long\def\mat#1{{\ensuremath{\bm{\mathrm{#1}}}}}%

\global\long\def\vec#1{{\ensuremath{\bm{\mathrm{#1}}}}}%

\global\long\def\ten#1{{\cal #1}}%

\global\long\def\matN{\ensuremath{{\bm{\mathrm{N}}}}}%

\global\long\def\matX{\ensuremath{{\bm{\mathrm{X}}}}}%

\global\long\def\X{\ensuremath{{\bm{\mathrm{X}}}}}%

\global\long\def\matK{\ensuremath{{\bm{\mathrm{K}}}}}%

\global\long\def\K{\ensuremath{{\bm{\mathrm{K}}}}}%

\global\long\def\matA{\ensuremath{{\bm{\mathrm{A}}}}}%

\global\long\def\A{\ensuremath{{\bm{\mathrm{A}}}}}%

\global\long\def\matB{\ensuremath{{\bm{\mathrm{B}}}}}%

\global\long\def\B{\ensuremath{{\bm{\mathrm{B}}}}}%

\global\long\def\matC{\ensuremath{{\bm{\mathrm{C}}}}}%

\global\long\def\C{\ensuremath{{\bm{\mathrm{C}}}}}%

\global\long\def\matD{\ensuremath{{\bm{\mathrm{D}}}}}%

\global\long\def\D{\ensuremath{{\bm{\mathrm{D}}}}}%

\global\long\def\matE{\ensuremath{{\bm{\mathrm{E}}}}}%

\global\long\def\E{\ensuremath{{\bm{\mathrm{E}}}}}%

\global\long\def\matF{\ensuremath{{\bm{\mathrm{F}}}}}%

\global\long\def\F{\ensuremath{{\bm{\mathrm{F}}}}}%

\global\long\def\matH{\ensuremath{{\bm{\mathrm{H}}}}}%

\global\long\def\H{\ensuremath{{\bm{\mathrm{H}}}}}%

\global\long\def\matP{\ensuremath{{\bm{\mathrm{P}}}}}%

\global\long\def\matG{\ensuremath{{\bm{\mathrm{G}}}}}%

\global\long\def\P{\ensuremath{{\bm{\mathrm{P}}}}}%

\global\long\def\matU{\ensuremath{{\bm{\mathrm{U}}}}}%

\global\long\def\matV{\ensuremath{{\bm{\mathrm{V}}}}}%

\global\long\def\V{\ensuremath{{\bm{\mathrm{V}}}}}%

\global\long\def\U{\ensuremath{{\bm{\mathrm{U}}}}}%

\global\long\def\Y{\ensuremath{{\bm{\mathrm{Y}}}}}%

\global\long\def\matW{\ensuremath{{\bm{\mathrm{W}}}}}%

\global\long\def\matM{\ensuremath{{\bm{\mathrm{M}}}}}%

\global\long\def\M{\ensuremath{{\bm{\mathrm{M}}}}}%

\global\long\def\tenA{\ten A}%

\global\long\def\tenB{\ten B}%

\global\long\def\tenC{\ten C}%

\global\long\def\tenD{\ten D}%

\global\long\def\tenX{\ten X}%

\global\long\def\tenC{\ten C}%

\global\long\def\tenW{\ten W}%

\global\long\def\tenG{\ten G}%

\global\long\def\tenJ{\ten J}%

\global\long\def\tenZ{\ten Z}%

\global\long\def\matQ{{\mat Q}}%

\global\long\def\Q{{\mat Q}}%

\global\long\def\matR{\mat R}%

\global\long\def\matS{\mat S}%

\global\long\def\matY{\mat Y}%

\global\long\def\matI{\mat I}%

\global\long\def\I{\mat I}%

\global\long\def\matJ{\mat J}%

\global\long\def\matZ{\mat Z}%

\global\long\def\Z{\mat Z}%

\global\long\def\matW{{\mat W}}%

\global\long\def\W{{\mat W}}%

\global\long\def\matL{\mat L}%

\global\long\def\manM{{\cal M}}%

\global\long\def\TNormS#1{\|#1\|_{2}^{2}}%

\global\long\def\ITNormS#1{\|#1\|_{2}^{-2}}%

\global\long\def\ITNorm#1{\|#1\|_{2}^{-1}}%

\global\long\def\ONorm#1{\|#1\Vert_{1}}%

\global\long\def\TNorm#1{\|#1\|_{2}}%

\global\long\def\InfNorm#1{\|#1\|_{\infty}}%

\global\long\def\FNorm#1{\|#1\|_{F}}%

\global\long\def\FNormS#1{\|#1\|_{F}^{2}}%

\global\long\def\UNorm#1{\|#1\|_{\matU}}%

\global\long\def\UNormS#1{\|#1\|_{\matU}^{2}}%

\global\long\def\UINormS#1{\|#1\|_{\matU^{-1}}^{2}}%

\global\long\def\ANorm#1{\|#1\|_{\matA}}%

\global\long\def\BNorm#1{\|#1\|_{\mat B}}%

\global\long\def\ANormS#1{\|#1\|_{\matA}^{2}}%

\global\long\def\AINormS#1{\|#1\|_{\matA^{-1}}^{2}}%

\global\long\def\WNorm#1{\|#1\|_{\matW}}%

\global\long\def\T{\textsc{T}}%

\global\long\def\conj{\textsc{*}}%

\global\long\def\pinv{\textsc{+}}%

\global\long\def\Prob{\operatorname{Pr}}%

\global\long\def\Expect{\operatorname{\mathbb{E}}}%

\global\long\def\ExpectC#1#2{{\mathbb{E}}_{#1}\left[#2\right]}%

\global\long\def\VarC#1#2{{\mathbb{\mathrm{Var}}}_{#1}\left[#2\right]}%

\global\long\def\dotprod#1#2#3{(#1,#2)_{#3}}%

\global\long\def\dotprodN#1#2{(#1,#2)_{{\cal N}}}%

\global\long\def\dotprodH#1#2{\langle#1,#2\rangle_{{\cal {\cal H}}}}%

\global\long\def\dotprodsqr#1#2#3{(#1,#2)_{#3}^{2}}%

\global\long\def\Trace#1{{\bf Tr}\left(#1\right)}%

\global\long\def\Vec#1{{\bf vec}\left(#1\right)}%

\global\long\def\nnz#1{{\bf nnz}\left(#1\right)}%

\global\long\def\MSE#1{{\bf MSE}\left(#1\right)}%

\global\long\def\WMSE#1{{\bf WMSE}\left(#1\right)}%

\global\long\def\EWMSE#1{{\bf EWMSE}\left(#1\right)}%

\global\long\def\nicehalf{\nicefrac{1}{2}}%

\global\long\def\nicebetahalf{\nicefrac{\beta}{2}}%

\global\long\def\argmin{\operatornamewithlimits{argmin}}%

\global\long\def\argmax{\operatornamewithlimits{argmax}}%

\global\long\def\norm#1{\Vert#1\Vert}%

\global\long\def\squeeze#1{\operatorname{squeeze}_{#1}}%

\global\long\def\sign{\operatorname{sign}}%

\global\long\def\proj{\operatorname{proj}}%

\global\long\def\diag{\operatorname{diag}}%

\global\long\def\VOPT{\operatorname\{VOPT\}}%

\global\long\def\dist{\operatorname{dist}}%

\global\long\def\diag{\operatorname{diag}}%

\global\long\def\supp{\operatorname{supp}}%

\global\long\def\sp{\operatorname{span}}%

\global\long\def\rank{\operatorname{rank}}%

\global\long\def\grad{\operatorname{grad}}%

\global\long\def\Hess{\operatorname{Hess}}%

\global\long\def\abs{\operatorname{abs}}%

\global\long\def\onehot{\operatorname{onehot}}%

\global\long\def\softmax{\operatorname{softmax}}%

\newcommand*\diff{\mathop{}\!\mathrm{d}} 

\global\long\def\dd{\diff}%

\global\long\def\whatlambda{\w_{\lambda}}%

\global\long\def\Plambda{\mat P_{\lambda}}%

\global\long\def\Pperplambda{\left(\mat I-\Plambda\right)}%

\global\long\def\Mlambda{\matM_{\lambda}}%

\global\long\def\Mlambdafull{\matM+\lambda\matI}%

\global\long\def\Mlambdafullinv{\left(\matM+\lambda\matI\right)^{-1}}%

\global\long\def\Mdaggerlambda{{\mathbf{\mat M_{\lambda}^{+}}}}%

\global\long\def\Xdaggerlambda{{\mathbf{\mat X_{\lambda}^{+}}}}%

\global\long\def\XT{{\mathbf{X}^{\T}}}%

\global\long\def\XXT{{\matX\mat X^{\T}}}%

\global\long\def\XTX{{\matX^{\T}\mat X}}%

\global\long\def\VT{{\mathbf{V}^{\T}}}%

\global\long\def\VVT{{\matV\mat V^{\T}}}%

\global\long\def\VTV{{\matV^{\T}\mat V}}%

\global\long\def\varphibar#1#2{{\bar{\varphi}_{#1,#2}}}%

\global\long\def\varphilambda{{\varphi_{\lambda}}}%

\global\long\def\RE#1{{\bf Re}\left(#1\right)}%

\global\long\def\modeprod#1{\times_{#1}}%

\global\long\def\modeprodrange#1#2{\times_{#1}^{#2}}%

\global\long\def\unfold#1#2{#1_{(#2)}}%

\global\long\def\HOSVD#1#2{{\bf HOSVD}_{#2}\left(#1\right)}%

\title{Higher Order Reduced Rank Regression}
\author{Leia Greenberg and Haim Avron\\
	School of Mathematical Sciences\\
	Tel Aviv University}
\maketitle
\begin{abstract}
Reduced Rank Regression (RRR) is a widely used method for multi-response
regression. However, RRR assumes a linear relationship between features
and responses. While linear models are useful and often provide a
good approximation, many real-world problems involve more complex
relationships that cannot be adequately captured by simple linear
interactions. One way to model such relationships is via multilinear
transformations. This paper introduces Higher Order Reduced Rank Regression
(HORRR), an extension of RRR that leverages multi-linear transformations,
and as such is capable of capturing nonlinear interactions in multi-response
regression. HORRR employs tensor representations for the coefficients
and a Tucker decomposition to impose multilinear rank constraints
as regularization akin to the rank constraints in RRR. Encoding these
constraints as a manifold allows us to use Riemannian optimization
to solve this HORRR problems. We theoretically and empirically analyze
the use of Riemannian optimization for solving HORRR problems. 
\end{abstract}

\section{Introduction}

Regression is a fundamental concept in data science, often used to
understand and model relationships between variables. The primary
goal of regression analysis is to predict or explain the value of
a \emph{dependent} variable (also called the \emph{response}) based
on one or more \emph{independent} variables (also known as \emph{predictors}
or \emph{features}). By examining how changes in these features influence
the response, regression helps identifying trends, making predictions,
and interpreting data patterns. Classical descriptions of regression
typically discuss \emph{single response} regression, i.e. there is
a single dependent variable. However, in many real-world cases we
are interested not just in modeling a single response variable but
multiple response variables simultaneously. This leads to the problem
of \emph{multi-response regression}.

One simple and naive approach for multi-response regression is treating
each response variable separately and building an independent regressor
using a single-response regression method. However, this approach
is sub-optimal when there are correlations or dependencies between
the response variables. Exploiting these relationships can lead to
better models for a given set of data. For example, in multi-class
classification algorithms such as regularized least squares classification,
we often only want to predict one label per example, which requires
accounting for relationships between responses. In general, handling
responses independently is a poor strategy when there's significant
inter-dependence among the variables \cite{reinsel-velu-1998RRR}.

One effective way to handle multiple responses together is through
regularization techniques, which impose constraints on the model to
prevent over-fitting while encouraging simpler, more interpretable
solutions. In the context of least squares linear regression, a fundamental
regularization based multi-response regression method is \emph{Reduced
Rank Regression} (RRR, also referred to as low-rank regression \cite{izenman1975-RRR}).
RRR constrains the rank of the coefficient matrix. Implicitly, RRR
assumes that response vectors belong to some low-dimensional subspace.
RRR is widely used in fields such as chemometrics, for example in
analyzing how multiple reactor conditions affect the qualities of
a produced polymer \cite{Mukherjee2011kernel}; genomics, where it
helps predict multiple gene expressions \cite{she2017robust}; machine
learning \cite{liu2024efficient} and more.\textcolor{blue}{{} }

While assumption made by RRR that the response vectors belong to same
low dimensional subspace might hold for some scenarios, there are
important cases of multi-response regression in which it does not
hold. Returning to the example of multi-class classification using
regularized least squares classification, the response vectors, which
are one-hot encoding of the classes, do not belong to a low-rank subspace
if all classes are present in the training data. Furthermore, RRR
only handles linear models, where the relationship between the independent
variables and the multiple response variables is assumed to be linear.
While linear models are useful and often provide a good approximation,
many real-world problems involve more complex relationships that cannot
be adequately captured by simple linear interactions. In many cases,
the response variables may depend on higher-order interactions between
the input features, such as polynomial relationships. To account for
these more complex dependencies, we need to extend the model beyond
the linear framework. This involves incorporating nonlinear terms
or transformations of the input features to allow for more flexible
modeling of the data. Techniques such as polynomial regression, kernel
methods, or deep learning models enable the capture of these higher-order
interactions. By moving beyond the constraints of linearity, we can
model more intricate and subtle patterns in the data, ultimately improving
the model's ability to predict multiple responses and capture the
true underlying relationships between the variables.

In this paper we propose HORRR, a form of higher order reduced rank
regression, i.e. we extend RRR to capture more complex interactions
by using higher-order terms. Specifically, the coefficients of homogeneous
multivariate polynomials, which represent these higher-order interactions,
can be expressed as tensors. These multidimensional arrays provide
a natural framework for representing the coefficients in a multi-dimensional
structure, allowing for a more flexible and expressive model. By applying
rank constraints to these tensors, we can achieve dimensionality reduction
while still capturing the complex, nonlinear relationships between
the features and responses. This approach combines the interpretability
and efficiency of reduced-rank methods with the power of higher-order
polynomials, making it a promising extension for multi-response regression
problems where higher-order interactions play a significant role.
As side benefit, HORRR does not assume the responses belong to a low-rank
subspace.

Learning HORRR models amounts to minimizing a loss subject to the
tensor multilinear rank constraints (see Section \ref{sec:HORRR problem statement}).
To solve such problems, we propose the use of a Riemannian manifold
optimization approach, as the rank constraint forms a smooth manifold
and provides a natural geometric framework for optimization. Our suggested
algorithm exploits the structure of the manifold for an efficient
algorithm. Multilinear rank constraints amount to representing candidate
coefficients as a Tucker decomposed tensor, i.e. breaking down the
coefficients tensor into a core tensor and factor matrices, allowing
for more manageable computations in high-dimensional spaces. By incorporating
the Tucker decomposition, our algorithm can efficiently handle the
complexity of high-order interactions while maintaining computational
feasibility. Optimizing over this manifold allows us to preserve the
multilinear structure of the regression problem, while ensuring that
the solution adheres to the desired rank. 

Our experiments show that HORRR exhibits robust performance on synthetic
datasets, achieving low relative recovery errors. When used to learn
a regularized least squares classifier for the MNIST dataset, the
model achieved accuracy comparable to exact kernel-based methods using
the polynomial kernel, while using significantly less stroage than
a kernel solver. These illustrates HORRR's potential for downstream
learning workloads.

\subsection{Related Work}

\paragraph{Reduced Rank Regression.}

RRR is a well-known method for linear multi-response regression. It
was first proposed by Anderson \cite{anderson1951RRR}, and later
Izenman \cite{izenman1975-RRR} coined the term \textit{Reduced Rank
Regression} (RRR) for this class of models. Davies and Tso \cite{davies1982-RRR-svd}
offered a solution that involves projection of the OLS estimator to
a lower rank sub-space, that explains the maximum variation in terms
of the Frobenius norm, while relying on the Eckart-Young theorem for
the best lower rank approximation \cite{eckart-young-1936}. Theoretical
analysis of asymptotic distribution of the estimator is given in\textcolor{blue}{{}
\cite{anderson1999asymptotic}}. See Reinsel and Velu \cite{reinsel-velu-1998RRR}
and Izenman \cite{izenman2008modern} for a comprehensive review of
RRR methods and applications. We remark that there are also other
methods for linear multi-response regression, such as Partial Least
Squares (PLS) \cite{wold1975PLS}.

Classical RRR uses a low-rank constraint. Adding a ridge penalty term
is natural, to prevent over-fitting and to avoid numerical instabilities,
and in this paper we consider this form of RRR. RRR with ridge penalty
was first considered by \cite{Mukherjee2011kernel}. Other forms of
additional regularization were studied, such as nuclear norm penalty
\cite{chen2013reduced,yuan2007dimension}, sparsity penalties \cite{chen2012reduced}
and more \cite{levakova2024penalisation}.

\paragraph{Tensor Regression.}

Data with a natural tensor structure are encountered in many scientific
areas. In the context of regression, this leads to several methods
which can fall under the umbrella term ``Tensor Regression''. Typically,
these methods assume tensor structure on the features and/or responses.
The work closest to ours is Higher Order Low Rank Regression (HOLRR)~\cite{rabusseau2016HOLRR}.
In HOLRR, like in HORRR, the regressors are arranged as a low rank
multilinear tensor, and the features are vectors. However, in HOLRR
the responses are arranged as a tensor, while in HORRR they are arranged
as a vector. Furthermore, the regression function in HOLRR is defined
by mode-1 tensor-matrix product, while in RRR we use the tensor apply
operation. In \cite{llosa2022reduced} and \cite{yu2016learning}
both the regressors and features are arranged as tensors, with the
regressors having low multilinear rank, and the responses are scalars.
A form of higher-order PLS, called HOPLS, was proposed in \cite{zhao2012HOPLS},
where both the response and the features are tensors.

\paragraph{Riemannian Optimization on Manifolds of Tensors of Low Multilinear
Rank.}

Our method uses Riemannian optimization to solve HORRR. Riemannian
optimization is a generalization of standard Euclidean optimization
methods to smooth manifolds. The core idea is to reformulate the optimization
problem on a smooth manifold and eliminate the need for explicit constraint
handling, as feasible points are maintained by construction \cite{absil2008optimization-book}.
This optimization provides an elegant way to address the multilinear
rank constraint by utilizing the fact that the set of tensors with
a fixed multilinear rank forms a smooth embedded submanifold. Several
works take advantage of this, e.g. tensor completion \cite{Kressner2013completion,kasai2016low}.
Koch and Lubich \cite{KochLuibch10tensor-manifold} as well as Kressner
et al. \cite{Kressner2013completion,Kressner2016precond} developed
the geometric components necessary for Riemannian optimization on
the manifold of multilinear rank tensors. Heidel and Schulz \cite{heidel2018riemannian}
developed second-order components.

\section{Preliminaries}

\subsection{Notations and Basic Definitions}

We use lower case bold letters for vectors ($\boldsymbol{\mathbf{v,w}},...$),
upper case bold letters for matrices ($\mathbf{X,Y,..}$) and calligraphic
letters for higher order tensors ($\mathcal{A\boldsymbol{\mathrm{,}}B\mathrm{,..}}$).
For $n\in\N$ we denote $[n]\coloneqq\{1,\dots n\}$. The identity
matrix of size $n$ is written as $\mathbf{I}_{n}$. The $i$th row
(respectively column) of a matrix $\boldsymbol{\mathbf{M}}$ will
be denoted by $\mathrm{M}_{i,:}$(respectively $\mathrm{M}_{:,i}$).
For matrices $\matX,\matY$ , we denote $\X\otimes\matY$ as the Kronecker
product, $\matX\odot\matY$ as the the Khatri-Rao product and $\X\circ\matY$
as the Hadamard product. 

For a matrix $\A,$ we denote its low-rank approximation as $[\X]_{r}$,
obtained by truncating the Singular Value Decomposition (SVD) to the
first $r$ components (which is the best low-rank approximation for
$\X$, due to the Eckart--Young Theorem). $\X^{+}$ denotes the Moore--Penrose
pseudo-inverse of a matrix $\X$.

For tensors, we use the same basic definitions and concepts as Kolda
and Bader in their classical review~\cite{KoldaBader09}. A tensor
$\mathcal{A}\in\R^{n_{1}\times n_{2}\times...\times n_{d}}$ is a
multidimensional array of \emph{order }(i.e. dimension) $d$. An order
$d=1$ tensor is just a vector, and an order $d=2$ is a matrix. Each
dimension of a tensor is referred to as a \emph{mode}. A $j$-\emph{mode
unfolding} (or \emph{matricization}) of a tensor $\tenA\in\R^{n_{1}\times\cdots\times n_{d}}$
along mode $j$, is a matrix $\mathbf{\A}_{(j)}\in\R^{n_{j}\times(n_{1}\cdot...\cdot n_{j-1}\cdot n_{j+1}\cdot...n_{d})}$.
The vectorization of a tensor is defined by $\Vec{\tenA}=\Vec{\A_{(1)}}$.
The inner product between two tensors $\mathcal{A}$ and $\mathcal{B}$
(of the same size) is defined by $\left\langle \mathcal{\tenA},\mathcal{\tenB}\right\rangle =\left\langle \Vec{\tenA},\Vec{\tenB}\right\rangle $
and the Frobenius norm is defined by $\left\Vert \mathcal{A}\right\Vert _{F}^{2}=\left\langle \mathcal{A},\mathcal{A}\right\rangle $.
The $j$-\emph{mode product} between a tensor $\mathcal{A}\in\R^{n_{1}\times\cdots\times n_{d}}$
and a matrix $\mathbf{M}\in\R^{m\times n_{j}}$, written as $\mathcal{A}\times_{j}\mathbf{M}$,
and is defined by the equation $(\mathcal{A}\times_{j}\mathbf{M})_{(j)}=\mathbf{M}\mathcal{\tenA}_{(j)}$.
Suppose that $\tenA\in\R^{n_{1}\times\cdots\times n_{d}}$ is a tensor
whose $j$-th dimension is a singleton (i.e., $n_{j}=1$). We denote
by $\squeeze j(\tenA)$ the tensor obtained by removing dimension
$j$. That is, if $\tenB=\squeeze j(\tenA)$ then $\tenB\in\R^{n_{1}\times n_{2}\times\cdots\times n_{j-1}\times n_{j+1}\times\cdots\times n_{d}}$
and $\tenB_{i_{1},\cdots,i_{j-1},i_{j+1},\cdots,i_{d}}=\tenA_{i_{1},\cdots,i_{j-1},1,i_{j+1},\cdots,i_{d}}$.

A tensor of order $d$ is\textit{ rank one} if it can be written as
the outer product of $d$ vectors, i.e, $\tenX=\a^{(1)}\circ\a^{(2)}\circ...\circ\a^{(d)}$.
The\textit{ CANDECOMP/PARAFAC (CP)}\textit{\emph{ }}\emph{decomposition}
factorized a tensor $\tenX\in\R^{n_{1}\times\cdots\times n_{d}}$
into a sum of a finite number of rank-one tensors 
\begin{equation}
\tenX=\mathop{\sum_{r=1}^{R}\omega_{r}\a_{r}^{(1)}\circ\a_{r}^{(2)}\circ...\circ\a_{r}^{(d)}}\eqqcolon\left\llbracket \boldsymbol{\omega};\A^{(1)},\dots,\A^{(d)}\right\rrbracket \label{eq:CP-decomp}
\end{equation}
where\emph{ $\A_{i}\in\R^{n_{i}\times R}$ }are the \emph{factor matrices,}
each consists the columns\emph{ $\a_{1}^{(i)},\a_{2}^{(i)}...\a_{R}^{(i)}$,
}for $i=[d]$. We assume that the columns of the factor matrices are
normalized to length 1, with the weights absorbed into the vector
$\mathbf{\boldsymbol{\omega}}\in\R^{R}$. Note that we introduced
the notation $\left\llbracket \boldsymbol{\omega};\A^{(1)},\dots,\A^{(d)}\right\rrbracket $
for a compact writing of CP decomposition. The \textit{rank} of a
tensor $\tenX$, denoted $\rank(\tenX),$ is defined as the smallest
number of rank-one tensors that generate $\tenX$ as their sum (i.e.
minimal $R$ possible in Eq. (\ref{eq:CP-decomp})). The \textit{rank}
of a tensor should not be confused with the \emph{multilinear rank
}defined later in Eq.\textcolor{blue}{{} }(\ref{def: multilinear rank}). 

For a tensor $\tenA\in\R^{n_{1}\times\cdots\times n_{d}}$ we define
the \textit{tensor transposition} of it by a permutation $\sigma$
on $[n]$ as the tensor $\tenA_{\sigma}$ defined by $(\tenA_{\sigma})_{\sigma(1),\sigma(2),...,\sigma(d)}=\tenA_{i_{1},i_{2},...,i_{d}}$.
A tensor is called \textit{symmetric} if all it's dimensions are equal
and it is invariant under any permutations, and \textit{semi-symmetric}
if all it's dimensions but the first are equal and it is invariant
under any permutations\textit{ }in all indices but the the first.

\subsection{Reduced Rank (Linear) Regression\label{subsec:RRR}}

Multi-response linear regression is the extension of the classical
single response regression model to the case where we have more than
a single variable we want to predict. One can treat multi-response
regression problems as a collection of independent single response
problems. However, this ignores probable correlations between the
dependent variables, and is clearly sub-optimal. RRR is a simple and
computationally efficient method for multi-response regression, that
introduced low-rank constraints on the coefficients of the separate
single-response models arranged as a matrix \cite{izenman2008modern}. 

Concretely, given data in the form of matrices $\X\in\R^{m\times n}$
and $\mathbf{Y}\in\R^{k\times n}$, where $\matX$ collects $n$ samples
from $m$ dimensional predictor variables, and $\matY$ collects the
corresponding $k$ dimensional responses, and given some rank parameter
$r\leq\min(k,m)$, the \emph{RRR Problem} reads:\footnote{Typically, data is arranged as rows, \textcolor{black}{but for our
problem} arrangement as columns is more convenient.}

\begin{equation}
\min_{\W\in\R^{k\times m}}\FNormS{\W\matX-\matY}\;\;\;\text{s.t.}\;\rank(\W)\leq r\label{eq:RRR-Problem}
\end{equation}

The RRR Problem can be solved in closed-form, and there are multiple
formulas for this in the literature. If all the singular values of
projections of the rows of $\matY$ on the row space of $\matX$ are
distinct, there is a unique solution for each $r$ \cite{davies1982-RRR-svd},
which we denote by $\matW_{r}^{(\text{RRR})}$. Davies and Tso \cite{davies1982-RRR-svd}
further show how to use SVD and optimal low-rank approximation (based
on Eckart-Young Theorem) for a closed-form solution for Eq.~(\ref{eq:RRR-Problem}),
based on computing the unconstrained ordinary least squares solution,
and then projecting it to a $r$-dimensional subspace obtained by
SVD of some matrix. In our notation, this reads as:

\begin{equation}
\matW_{r}^{(\text{RRR})}=\left[\mathbf{Y}\X^{+}\X\right]_{r}\matX^{\pinv}\label{eq:RRR-classic-solution}
\end{equation}

Mukherjee and Zhu introduced a ridge regression variant of RRR \cite{Mukherjee2011kernel}.
Consider an additional ridge parameter $\lambda\geq0$, the ridge
RRR problem reads (when the data is arranged in rows): 

\begin{equation}
\matW_{r,\lambda}^{(\text{RRR})}\coloneqq\argmin_{\W\in\R^{k\times m}}\FNormS{\W\matX-\matY}+\lambda\FNormS{\matW}\;\;\;\text{s.t.}\;\rank(\W)\leq r\label{eq:R4-Problem}
\end{equation}
(In the above, we assume the solution is unique. It is possible to
write sufficient and necessary conditions for uniqueness. However,
these are not important for our discussion). They show that the closed
form solution is 
\[
\matW_{r,\lambda}^{(\text{RRR})}=\left[\mathbf{\hat{Y}}\hat{\X}_{\lambda}^{+}\hat{\X}_{\lambda}\right]_{r}\hat{\matX}_{\lambda}^{\pinv}
\]
where 
\[
\hat{\X}_{\lambda}\coloneqq[\X\;\;\sqrt{\lambda}\I_{m}],\;\;\hat{\matY}\coloneqq[\matY\;\;\mathbf{0}_{\mathrm{\mathit{k\times m}}}]
\]
\textcolor{black}{They also introduce a kernel variant of the problem. }

\subsection{Tensor Apply Operation}

We wish to move beyond the linear framework used in RRR, and incorporate
nonlinear terms. This can be done by capturing higher-order interactions,
which can be represented by a tensor, which operates on a vector through
the\emph{ tensor-vector} \textit{apply} operation which we now define.

\begin{defn}
[\cite{Qi05}] Given an order $d+1$ tensor $\tenA\in\R^{k\times m\times\cdots\times m}$
and a vector $\x\in\R^{m}$, \emph{applying} $\tenA$ \emph{to} $\x$
is the vector $\y=\tenA\x\in\R^{k}$ which is defined by\footnote{The operation is not named in it's original introduction~\cite{Qi05}.
We call the operation ``apply'' following \cite{BensonGleich19eigen}. }
\begin{equation}
y_{j}=\sum_{i_{2}=1}^{m}\cdots\sum_{i_{d+1}=1}^{m}a_{ji_{2}\cdots i_{d+1}}x_{i_{2}}x_{i_{3}}\cdots x_{i_{d+1}}\label{eq:apply-def}
\end{equation}
for $j=1,\dots,k$.
\end{defn}

\begin{rem}
\label{rem:apply-and-polynomials}Via the apply operation, a tensor
$\tenA\in\R^{k\times m\times\cdots\times m}$ can be viewed as a non-linear
operator from $\R^{m}$ to $\R^{k}$. Inspecting Eq.~(\ref{eq:apply-def}),
we see that the entries of $\tenA$ can be thought of the coefficients
of $k$ homogeneous multivariate polynomials of degree $d$, and applying
$\tenA$ to a vector $\x$ results in vector that contains the evaluation
of the these polynomials on $\x$.
\end{rem}

\begin{rem}
While the map $(\tenA,\x)\mapsto\tenA\x$ is generically non-linear
in $\x$, it is actually linear in $\tenA$.\textcolor{red}{{} }
\end{rem}

Two additional compact formulas for $\tenA\x$ are the following:
\begin{equation}
\tenA\x=\tenA\modeprod 2\x^{\T}\modeprod 3\x^{\T}\cdots\modeprod{d+1}\x^{\T}\label{eq:apply-alt1}
\end{equation}
\begin{equation}
\tenA\x=\unfold{\tenA}1\x^{\odot d}\label{eq:apply-alt2}
\end{equation}
where power of $\odot d$ denotes $d$-times Khatri-Rao product, e.g.
$\x^{\odot d}=\underbrace{\x\odot\x\odot\cdots\odot\x}_{d\,\text{times}}$. 

Next, we extend the definition of applying a tensor to matrices. Applying
$\tenA$ to $\matX\in\R^{m\times n}$ is the matrix $\matY=\tenA\matX\in\R^{k\times n}$
which is obtained by applying $\tenA$ to each column of $\matX$.
That is, if 
\[
\matX=\left[\begin{array}{cccc}
\vdots & \vdots &  & \vdots\\
\x_{1} & \x_{2} & \cdots & \x_{n}\\
\vdots & \vdots &  & \vdots
\end{array}\right]
\]
then 
\[
\tenA\matX=\left[\begin{array}{cccc}
\vdots & \vdots &  & \vdots\\
\tenA\x_{1} & \tenA\x_{2} & \cdots & \tenA\x_{n}\\
\vdots & \vdots &  & \vdots
\end{array}\right]
\]
From Eq.~(\ref{eq:apply-alt2}) we have 
\begin{equation}
\tenA\matX=\unfold{\tenA}1\matX^{\odot d}.\label{eq:apply-matrix-kr}
\end{equation}
Note that $\matX^{\odot d}\in\R^{m^{d}\times n}$. The cost of computing
$\matX^{\odot d}$ is $O(m^{d}n)$. \footnote{Easily seen from the recursive formula $T(m,n,d)=T(m,n,d-1)+m^{d}n$.}
The cost of the multiplication is $O(knm^{d})$, so the total cost
is $O(knm^{d})$.

\subsection{Riemannian Optimization}

Our proposed algorithms for HORRR are based on Riemannian optimization.
In this section, we review some fundamental definitions of Riemannian
optimization. We do not aim for the introduction to be comprehensive;
we refer the reader to \cite{absil2008optimization-book} for a more
thorough introduction. Our main goal is to introduce the relevant
concepts and notations.

Riemannian optimization is an approach for solving constrained optimization
problems in which the constraints form a smooth manifold. This approach
adapts classical algorithms for unconstrained optimization on a vector
space equipped with an inner product, by generalizing the main components
needed to apply these algorithms, to search spaces that form smooth
manifolds. Riemannian optimization has been applied to solve a wide-array
of problems involving matrix or manifold constraints. However, most
relevant for our purposes is that Riemannian optimization has been
applied to solve problems involving constraints on the multilinear
rank of tensors, specifically using the Tucker decomposition, e.g.,
\cite{Kressner2013completion,Kressner2016precond,KochLuibch10tensor-manifold}.

A Riemannian manifold $\mathcal{M}$ is a real differentiable manifold
with a smoothly varying inner product, called Riemannian metric, on
tangent spaces $T_{\mathcal{A}}\mathcal{M}$ for $\mathcal{A}\in\mathcal{M}.$
Riemannian optimization often deals with constrained problems where
the cost function is defined outside the constrained set. In other
words, the search space is embedded within a larger space, and the
objective function is expressed in the coordinates of this embedding
space. A \textit{Riemannian submanifold} is a Riemannian manifold
in which is an embedded manifold of a larger space and inherits the
Riemannian metric in a natural way \cite[Section 3.3]{absil2008optimization-book}.
A tensor Riemannian manifold is a Riemannian submanifold of the (Euclidean)
vector space of tensors of a fixed order and dimension. 

By encoding the constraints as manifolds, and allowing the optimization
algorithm to be informed by the geometry of the manifold, we can treat
the problem as unconstrained. To that end, we need to generalize various
notions of unconstrained optimization to ones that respect the manifold.
The notions of \textit{Riemannian gradient} and \textit{Riemannian
Hessian} \cite[Sections 3.6 and 5.5]{absil2008optimization-book}
extend the corresponding concepts from the Euclidean setting. For
a smooth (objective) function defined on a manifold, $F:\mathcal{M}\rightarrow\R$,
we denote the Riemannian gradient and Riemannian Hessian at $\mathcal{A\in\mathcal{M}}$
by $\grad F(\mathcal{A})\in T_{\mathcal{A}}\mathcal{M}$ and $\Hess F(\mathcal{A}):T_{\mathcal{A}}\mathcal{M}\longrightarrow T_{\mathcal{A}}\mathcal{M}$
respectively. The notion of \textit{retraction} \cite[Section 4.1]{absil2008optimization-book}
allows Riemannian optimization algorithms to take a step at point
$\mathcal{A\in\mathcal{M}}$ in a direction $\xi_{\mathcal{A}}\in T_{\mathcal{A}}\mathcal{M}$.
A retraction is a map $R_{\mathcal{\mathcal{A}}}:T_{\mathcal{A}}\mathcal{M}\longrightarrow\mathcal{M}$
that upholds a local rigidity condition that preserves gradients at
$\tenA\in\mathcal{M}$, and is a first order approximation of the
notion of \emph{exponential mapping} which generalize the idea of
moving in straight lines. Additional concepts can be defined, e.g.
\textit{vector transport }and\textit{ Riemannian connections,} however
they do not play a role in our presentation, so we do not discuss
them.

These components allow for the natural generalization of various optimization
algorithms for smooth problems. The core concept of Riemannian optimization
algorithms is to approximate the manifold locally by its tangent space
at each iteration. First order methods' iterations moves along the
tangent space first, and then the point is mapped back to the manifold
using a retraction. 

Analyzing Riemannian optimization problems also rely on the concepts
described above. The Riemannian gradient is used for finding \emph{Riemannian
stationary points}, while the Riemannian Hessian classifies them.\textcolor{red}{{}
}A stationary point is such where the Riemannian gradient vanishes
\cite[Section 4.1]{absil2008optimization-book}, \cite[def. 4.5]{boumal2023book}.
Similarly to the Euclidean case, classification of these stationary
points can be done by inspecting the signature of the Riemannian Hessian
(i.e. number of positive, negative and zero eigenvalues). A stationary
point is a strict local (or global) minimizer if the Riemannian Hessian
is positive definite \cite[prop. 6.3]{boumal2023book}, while it is
a saddle point if the Riemannian Hessian is indefinite. For a given
Riemannian optimization problem, it is important to analytically determine
which stationary points are local minimizers, and which are saddle
points, since this determines their stability in the context of Riemannian
first order methods~\cite[Section 4.4]{absil2008optimization-book}.
In general, we can expect a first order Riemannian optimization method
to converge to a stationary point (for most initial points), however,
in general, the global minimum is not the only stationary point of
the problem. Fortunately, for most initial points a first order Riemannian
optimization method will converge to a stable point. Thus, it is preferable
that the global minima is the only stable stationary point. 

\subsection{Manifold of Fixed Tucker Rank Tensors \label{subsec:manifold-tucker}}

We wish to extend the concept of RRR to handle higher-dimensional
regression coefficients. For this end, we use the notion of tensor
multilinear rank, and impose constraints on it to obtain higher order
reduced rank constraints. This is accomplished through the Tucker
decomposition, which can be as a seen of generalization of SVD to
higher order tensors \cite{de2000multilinearSVD}. 

For an order-$d$ tensor $\tenX\in\R^{n_{1}\times\cdots\times n_{d}}$
the \emph{multilinear rank }tuple of $\tenX$ is:
\begin{equation}
\rank(\tenX)\coloneqq\left(\rank\left(\unfold{\tenX}1\right),\dots,\rank\left(\unfold{\tenX}d\right)\right)\label{def: multilinear rank}
\end{equation}
Any tensor $\tenX$ of multilinear rank $\rb=(r_{1},\dots,r_{d})$
can be represented in the so-called \emph{Tucker decomposition}
\[
\tenX=\tenC\modeprod 1\matU_{1}\modeprod 2\matU_{2}\cdots\modeprod d\matU_{d}\eqqcolon\left\llbracket \tenC;\matU_{1},\dots,\matU_{d}\right\rrbracket 
\]
with the \emph{core tensor $\tenC\in\R^{r_{1}\times\cdots\times r_{d}}$,
}and the \emph{factor matrices $\matU_{i}\in\R^{n_{i}\times r_{i}}$.
}Without loss of generality, all the $\matU_{i}$ are orthonormal
($\matU_{i}^{\T}\matU_{i}=\matI_{r_{i}}$).\textcolor{blue}{{} }For
real-valued tensors, uniqueness of the the Tucker decomposition is
up to the sign, resp., multiplication with an orthogonal matrix.

For a tuple $\rb$, let $\manM_{\rb}$ denote the set of tensors of
fixed multilinear rank $\rb$. That is, 
\[
\manM_{\rb}\coloneqq\left\{ \left\llbracket \tenC;\matU_{1},\dots,\matU_{d}\right\rrbracket \,:\,\tenC\in\R^{r_{1}\times\cdots\times r_{d}},\;\matU_{i}\in\R^{n_{i}\times r_{i}},\;\matU_{i}^{\T}\matU_{i}=\matI_{r_{i}}\right\} 
\]
The set $\manM_{\rb}$ forms a smooth embedded sub-manifold of $\R^{n_{1}\times\cdots\times n_{d}}$~\cite{Uschmajew2013}
of dimension $\prod_{j=1}^{d}r_{j}+\sum_{i=1}^{d}(r_{i}n_{i}-r_{i}^{2})$,
and by equipping $\R^{n_{1}\times\cdots\times n_{d}}~$with the standard
inner product as the Riemannian metric and inducing the Riemannian
metric on $\manM_{\rb}$ makes it a Riemannian sub-manifold. 

We now detail a few geometric components whose exact definitions are
necessary for analyzing optimization constrained by $\manM_{\rb}$.
Additional components are necessary in order to implement Riemannian
optimization (on which our proposed algorithms are based), though
their exact definitions are not necessary for our exposition, so we
do not detail them.

\paragraph{Tangent Space.}

The \emph{tangent space} of $\manM_{\rb}$ at $\tenX=\left\llbracket \tenC;\matU_{1},\dots,\matU_{d}\right\rrbracket $
can be parameterized~\cite{KochLuibch10tensor-manifold,Kressner2016precond}
\[
T_{\tenX}\manM_{\rb}=\left\{ \tenG\modeprodrange{j=1}d\matU_{j}+\sum_{i=1}^{d}\tenC\modeprod i\matV_{i}\modeprod{j\neq i}\matU_{j}\,:\,\tenG\in\R^{r_{1}\times\cdots\times r_{d}},\matV_{i}\in\R^{n_{i}\times r_{i}},\matV_{i}^{\T}\matU_{i}=0\right\} 
\]
The parameters are $\ten G,\matV_{1},\dots,\matV_{d}$. We use a shorthand
notation for members of the tangent space: 
\begin{equation}
T_{\tenX}\manM_{\rb}\ni\left\{ \ten G;\matV_{1},\dots,\matV_{d}\right\} _{\left\llbracket \tenC;\matU_{1},\dots,\matU_{d}\right\rrbracket }\coloneqq\tenG\modeprodrange{j=1}d\matU_{j}+\sum_{i=1}^{d}\tenC\modeprod i\matV_{i}\modeprod{j\neq i}\matU_{j}\label{eq:tangent-space-G-V}
\end{equation}
For brevity, we mostly omit the subscript $\left\llbracket \tenC;\matU_{1},\dots,\matU_{d}\right\rrbracket $
and write simply $\left\{ \ten G;\matV_{1},\dots,\matV_{d}\right\} $. 

\textcolor{black}{In the next section, we discuss Riemannian stationary
points, which are points in which the Riemannian gradient vanishes.
The Riemannian gradient is an element of the tangent space. Clearly,
if}\textcolor{red}{{} }$\tenG=0$ and $\matV_{j}=0$ for $j=1,\dots,d$
then $\left\{ \ten G;\matV_{1},\dots,\matV_{d}\right\} =0$, but it
is possible for a tangent vector to be zero without each component
being zero? In the following lemma we show that, under mild conditions,
a tangent vector is zero if and only if all it's components are zero.
\begin{lem}
\label{lem:zero-factors-critical}Assume that $r_{j}\leq\prod_{i\neq j}r_{i}$
for all $j$. Suppose there exist a dimension $k$ for which $n_{k}=r_{k}$.
Then, $\left\{ \ten G;\matV_{1},\dots,\matV_{d}\right\} =0$ if and
only if $\tenG=0$ and $\matV_{j}=0$ for $j=1,\dots,d$.
\end{lem}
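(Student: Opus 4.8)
The plan is to prove that the parametrization in Eq.~(\ref{eq:tangent-space-G-V}), $(\tenG,\matV_{1},\dots,\matV_{d})\mapsto\left\{ \tenG;\matV_{1},\dots,\matV_{d}\right\} $, is injective by isolating each parameter from the tangent vector through mode-wise unfoldings, exploiting the gauge conditions $\matV_{i}^{\T}\matU_{i}=0$. The ``if'' direction is immediate, so all the work is in the ``only if'' direction.

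First I would record the structural fact about the core that drives everything. Writing $\tenX=\left\llbracket \tenC;\matU_{1},\dots,\matU_{d}\right\rrbracket \in\manM_{\rb}$, the mode-$i$ unfolding factors as $\unfold{\tenX}i=\matU_{i}\unfold{\tenC}i\matW_{i}^{\T}$, where $\matW_{i}\coloneqq\matU_{d}\otimes\cdots\otimes\matU_{i+1}\otimes\matU_{i-1}\otimes\cdots\otimes\matU_{1}$ is a Kronecker product of matrices with orthonormal columns, and hence itself has orthonormal columns ($\matW_{i}^{\T}\matW_{i}=\matI$). Since $\matU_{i}$ and $\matW_{i}$ both have full column rank, multiplying by them preserves rank, so $\rank(\unfold{\tenC}i)=\rank(\unfold{\tenX}i)=r_{i}$. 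As $\unfold{\tenC}i$ has exactly $r_{i}$ rows and, by the hypothesis $r_{i}\leq\prod_{j\neq i}r_{j}$, at least $r_{i}$ columns, this forces $\unfold{\tenC}i$ to have full row rank, and therefore a right inverse. This is the crucial ingredient.

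Next, assuming $\xi\coloneqq\left\{ \tenG;\matV_{1},\dots,\matV_{d}\right\} =0$, I would fix a mode $i$ and unfold $\xi$ along it. In every summand of Eq.~(\ref{eq:tangent-space-G-V}) except the $i$-th factor-matrix term, the mode-$i$ factor is $\matU_{i}$, so the unfolding has the form $\unfold{\xi}i=\matV_{i}\unfold{\tenC}i\matW_{i}^{\T}+\matU_{i}\matB_{i}$ for some matrix $\matB_{i}$. Applying the orthogonal projection $\matP_{i}^{\perp}\coloneqq\matI_{n_{i}}-\matU_{i}\matU_{i}^{\T}$ annihilates the $\matU_{i}\matB_{i}$ part, while the gauge condition $\matU_{i}^{\T}\matV_{i}=0$ gives $\matP_{i}^{\perp}\matV_{i}=\matV_{i}$; hence $\matP_{i}^{\perp}\unfold{\xi}i=\matV_{i}\unfold{\tenC}i\matW_{i}^{\T}=0$. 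Right-multiplying by $\matW_{i}$ yields $\matV_{i}\unfold{\tenC}i=0$, and the right inverse of $\unfold{\tenC}i$ forces $\matV_{i}=0$. Running this over all $i$ (the hypothesis $n_{k}=r_{k}$ makes mode $k$ trivial, since then $\matU_{k}$ is square orthogonal and $\matV_{k}^{\T}\matU_{k}=0$ already gives $\matV_{k}=0$) leaves $\xi=\tenG\modeprodrange{j=1}d\matU_{j}=0$; unfolding once more and using that each $\matU_{j}$ is left-invertible then gives $\tenG=0$.

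The main obstacle is the bookkeeping in the third step: verifying that in the mode-$i$ unfolding exactly one summand escapes the column space of $\matU_{i}$, and that this surviving term is precisely $\matV_{i}\unfold{\tenC}i\matW_{i}^{\T}$. Once the projection cleanly isolates $\matV_{i}$, the remainder reduces to the full-row-rank property of the core unfoldings, which is exactly where the hypothesis $r_{j}\leq\prod_{i\neq j}r_{i}$ is used. I would be careful to confirm that $\matW_{i}$ has orthonormal columns and that left/right multiplication by $\matU_{i}$ and $\matW_{i}^{\T}$ leaves rank unchanged, since these facts underpin both the rank computation and the cancellation $\matW_{i}^{\T}\matW_{i}=\matI$.
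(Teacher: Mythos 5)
Your proof is correct, but it follows a genuinely different route from the paper's. You isolate each $\matV_{i}$ \emph{independently}: unfold the tangent vector along mode $i$, note that every summand except the $i$-th factor term lies in the column space of $\matU_{i}$, and apply $\matP_{\matU_{i}}^{\perp}$ together with the gauge condition $\matU_{i}^{\T}\matV_{i}=0$ to extract $\matV_{i}\unfold{\tenC}i\matW_{i}^{\T}=0$; the full-row-rank of $\unfold{\tenC}i$ then kills $\matV_{i}$, and $\tenG$ is recovered last from left-invertibility of the $\matU_{j}$'s. The paper instead argues \emph{sequentially} by contracting modes with $\matU_{l}^{\T}$: it first gets $\matV_{k}=0$ from the square orthogonality of $\matU_{k}$ (this is exactly where the hypothesis $n_{k}=r_{k}$ enters), which then disentangles $\tenG\modeprod k\matU_{k}+\tenC\modeprod k\matV_{k}=0$ to give $\tenG=0$, and only then can it isolate the remaining $\matV_{j}$'s. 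The notable payoff of your projection-based argument is that it never uses $n_{k}=r_{k}$: the contraction in the paper's proof cannot separate the $\tenG$ term from the $\matV_{j}$ term in a single mode, so the paper needs one ``free'' mode where $\matV_{k}$ vanishes a priori, whereas your $\matP_{\matU_{i}}^{\perp}$ annihilates the $\tenG$ term and all $\matV_{l}$ terms ($l\neq i$) simultaneously. You therefore prove a strictly stronger statement — injectivity of the tangent-space parameterization at every point of $\manM_{\rb}$, needing only $r_{j}\leq\prod_{i\neq j}r_{i}$ (which in fact holds automatically whenever $\manM_{\rb}$ is nonempty, since $\rank(\unfold{\tenC}j)=r_{j}$ forces it) — which is the form of the result known from the Koch--Lubich/Kressner line of work. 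Your bookkeeping is sound: the surviving term in the mode-$i$ unfolding is indeed $\matV_{i}\unfold{\tenC}i\matW_{i}^{\T}$ with $\matW_{i}$ a Kronecker product of orthonormal-column factors, so $\matW_{i}^{\T}\matW_{i}=\matI$ and the rank-preservation claims hold.
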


\begin{proof}
Since $n_{k}=r_{k}$ we immediately have that $\matV_{k}=0$ since
$\matU_{k}$ is an orthogonal matrix, and $\matV_{k}^{\T}\matU_{k}=0$.
Now take the mode $l$ product of $\left\{ \ten G;\matV_{1},\dots,\matV_{d}\right\} $
with $\matU_{l}$, for $l\neq k$:
\begin{align*}
\left\{ \ten G;\matV_{1},\dots,\matV_{d}\right\} \modeprod{l\neq k}\matU_{l} & =\left(\tenG\modeprodrange{j=1}d\matU_{j}+\sum_{i=1}^{d}\tenC\modeprod i\matV_{i}\modeprod{j\neq i}\matU_{j}\right)\modeprod{l\neq k}\matU_{l}\\
 & =\tenG\modeprodrange{j=1}d\matU_{j}\modeprod{l\neq k}\matU_{l}+\sum_{i=1}^{d}\tenC\modeprod i\matV_{i}\modeprod{j\neq i}\matU_{j}\modeprod{l\neq k}\matU_{l}\\
 & =\tenG\modeprod k\matU_{k}+\tenC\times_{k}\matV_{k}+\sum_{i\neq k}^{d}\tenC\times_{i}\matV_{i}\modeprod{j\neq i}\matU_{j}\modeprod i\matU_{i}\modeprod{l\neq k,l\neq i}\matU_{l}\\
 & =\tenG\modeprod k\matU_{k}+\tenC\times_{k}\matV_{k}+\sum_{i\neq k}^{d}\tenC\times_{i}\matU_{i}^{\T}\matV_{i}\modeprod{j\neq i}\matU_{j}\modeprod{l\neq k,l\neq i}\matU_{l}\\
 & =\tenG\modeprod k\matU_{k}+\tenC\times_{k}\matV_{k}\\
 & =0
\end{align*}
Where in the third equality we used the fact that $\matU_{i}^{\T}\matU_{i}=\matI_{r_{i}}$
for all $i$, and in the fifth equality we used the fact that if $\left\{ \ten G;\matV_{1},\dots,\matV_{d}\right\} \in T_{\tenX}\manM_{\rb}$
then $\matU_{i}^{\T}\matV_{i}=0$ for all $i$. Now take the mode
$k$ unfolding to obtain
\[
\matU_{k}\unfold{\tenG}k+\matV_{k}\unfold{\tenC}k=0
\]
However, we already established that $\matV_{k}=0$ so $\matU_{k}\unfold{\tenG}k=0$.
Now, $\matU_{k}$ is a square orthogonal matrix, so it is invertible,
which implies that $\unfold{\tenG}k=0$ so $\tenG=0$. 

We are left with establishing that $\matV_{j}=0$ for $j\neq k$.
For $j\neq k$, take again the mode $l$ product of $\left\{ \ten G;\matV_{1},\dots,\matV_{d}\right\} $
with $\matU_{l}$ for $l\neq j$:
\begin{align*}
\left\{ \ten G;\matV_{1},\dots,\matV_{d}\right\} \modeprod{l\neq j}\matU_{l} & =\sum_{i=1}^{d}\tenC\modeprod i\matV_{i}\modeprod{r\neq i}\matU_{j}\modeprod{l\neq j}\matU_{l}\\
 & =\tenC\modeprod j\matV_{j}+\sum_{i\neq j}^{d}\tenC\times_{i}\matV_{i}\modeprod{r\neq i}\matU_{j}\modeprod i\matU_{i}\modeprod{l\neq j,l\neq i}\matU_{l}\\
 & =\tenC\modeprod j\matV_{j}\\
 & =0
\end{align*}
where in the first equality we used the fact that $\tenG=0$, in the
second equality we used the fact that $\matU_{i}^{\T}\matU_{i}=\matI_{r_{i}}$
for all $i$, and in the third equality we used the fact that if $\left\{ \ten G;\matV_{1},\dots,\matV_{d}\right\} \in T_{\tenX}\manM_{\rb}$
then $\matU_{i}^{\T}\matV_{i}=0$ for all $i$. Taking the mode $j$
unfolding we find that $\matV_{j}\unfold{\tenC}j=0$. Since $\left\llbracket \tenC;\matU_{1},\dots,\matU_{d}\right\rrbracket \in\manM_{\rb}$
the rank of $\unfold{\tenC}j$ is $r_{j}$, and from $r_{j}\leq\prod_{i\neq j}r_{i}$
we have that $\unfold{\tenC}j$ has full row rank. Thus, we must have
$\matV_{j}=0$.
\end{proof}

\paragraph{Projection on the Tangent Space.}

Let $\matP_{T_{\tenW}\manM_{\rb}}:\R^{n_{1}\times\cdots\times n_{d}}\to T_{\tenW}\manM_{\rb}$
be the orthogonal projection on the tangent space of $\manM_{\rb}$
at $\tenW=\left\llbracket \tenC;\matU_{1},\dots,\matU_{d}\right\rrbracket \in\manM_{\rb}$.

According to \cite{Kressner2013completion}, we have the orthogonal
projection of a tensor $\tenA\in\R^{n_{1}\times\cdots\times n_{d}}$
onto $T_{\tenW}\manM_{\rb}$ :
\[
\matP_{T_{\tenW}\manM_{\rb}}\tenA=\left\{ \ten G;\matV_{1},\dots,\matV_{d}\right\} 
\]
 where 
\begin{equation}
\tenG=\tenA\modeprodrange{j=1}d\matU_{j}^{\T}\label{eq:proj-tan-space-g}
\end{equation}
\begin{equation}
\matV_{i}=\matP_{\matU_{i}}^{\perp}\unfold{\left[\tenA\modeprod{j\neq i}\matU_{j}^{\T}\right]}i\unfold{\tenC}i^{\pinv}\label{eq:proj-tan-space-v}
\end{equation}
In the above, $\matP_{\matU_{i}}^{\perp}\coloneqq\matI_{n_{i}}-\matU_{i}\matU_{i}^{\T}$
denotes the orthogonal projection onto the orthogonal complement of
$\sp(\matU_{i})$.

\paragraph{Riemannian Gradient and Riemannian Hessian.}

Since $\manM_{\rb}$ is a Riemannian sub-manifold of the standard
Euclidean space, the Riemannian gradient of an objective function
$F$ is equal to the projection of the Euclidean gradient of $F$
onto the tangent space \cite[equation (3.37)]{absil2008optimization-book}.
We denote the Riemannian gradient of $F$ at $\tenW\in\manM_{\rb}$
as $\grad F(\tenW)$ and the Euclidean gradient as $\nabla F(\tenW)$,
so 

\[
\grad F(\tenW)=\matP_{T_{\tenW}\manM_{\rb}}\nabla F(\tenW)
\]

In our analysis, we need information about the second derivative of
the objective function to capture the curvature of the optimization
landscape. Specifically, the second derivative, also known as the
Hessian, provides valuable insight into how the function changes in
different directions \cite[Def. 5.5.1]{absil2008optimization-book}.
This information can improve convergence rates by refining search
directions and step sizes in second-order optimization methods. In
the context of our work, the Hessian helps classifying the stationary
points of the objective function, as we see in Section \ref{sec:Stationary-Points}.

Let $\tenW=\left\llbracket \tenC;\matU_{1},\dots,\matU_{d}\right\rrbracket \in\manM_{\rb}$
and $\mathcal{Z}=\left\{ \ten G;\matV_{1},\dots,\matV_{d}\right\} \in T_{\tenW}\manM_{\rb}$,
then the Riemannian Hessian of a real-valued function $F$ on $\mathcal{M_{\rb}}$
is a linear mapping $T_{{\cal W}}\manM_{\rb}\to T_{{\cal W}}\manM_{\rb}$
defined by
\[
\Hess F(\tenW)[\mathcal{Z}]=\nabla_{\mathcal{Z}}\grad F(\tenW)
\]
where $\nabla_{\mathcal{Z}}$ denotes the Riemannian connection on
$\mathcal{M}$ \cite[Definition 5.5.1]{absil2008optimization-book}.
We can further write \cite{absil2013extrinsic-riem-hess}:

\begin{align}
\Hess F(\tenW)[\mathcal{Z}]= & \underbrace{\matP_{T_{\tenW}\manM_{\rb}}\nabla^{2}F(\tenW)[\mathcal{Z}]}_{\text{linear term}}+\underbrace{\matP_{T_{\tenW}\manM_{\rb}}D_{\mathcal{Z}}\matP_{T_{\tenW}\manM_{\rb}}(\nabla F(\tenW))}_{\text{curvature term}}\label{eq:riem Hess}
\end{align}
Where the first part, which we refer to as the \emph{linear term}
of the Hessian, is simply a projection of the Euclidean Hessian $\nabla^{2}F(\tenW)$
onto the tangent space. Whereas the second term, which we refer to
as the \emph{curvature term} of the Hessian, depends on both the the
Euclidean gradient $\nabla F(\tenW$) and the direction $\mathcal{Z}$.
The previous two equations are actually general, and not specific
for $\manM_{\rb}$. For $\manM_{\rb}$, Heidel and Schulz \cite{heidel2018riemannian}
have further developed this curvature term (the second term in the
previous equation) for $\manM_{\rb}$: 

\begin{equation}
\matP_{T_{\tenW}\manM_{\rb}}D_{\mathcal{Z}}\matP_{T_{\tenW}\manM_{\rb}}\mathbf{\tenA}=\tilde{\tenC}\stackrel[i=1]{d}{\modeprod i}\mathbf{U}_{i}+\sum_{i=1}^{d}\tenC\modeprod i\tilde{\mathbf{U}}_{i}\modeprod{i\neq j}\matU_{j}\label{eq:riem hess curve term}
\end{equation}
where \textcolor{blue}{}

\begin{equation}
\tilde{\tenC}=\sum_{j=1}^{d}\left(\mathbf{\tenA}\modeprod j\V_{j}^{\T}\modeprod{l\neq j}\matU_{l}^{T}-\tenC\modeprod j\left(\V_{j}^{\T}\left[\mathbf{\tenA}\modeprod{l\neq j}\matU_{l}^{T}\right]_{(j)}\tenC_{(j)}^{+}\right)\right)\label{eq:c_tilde}
\end{equation}

\begin{equation}
\tilde{\mathbf{U}}_{i}=\P_{U_{i}}^{\perp}\left(\left[\mathbf{\tenA}\modeprod{j\neq i}\matU_{j}^{\T}\right]_{(i)}\left(\I-\tenC_{(i)}^{+}\tenC_{(i)}\right)\tenG_{(i)}^{\T}\tenC_{(i)}^{+\T}+\sum_{l\neq i}\left[\mathbf{\tenA}\modeprod l\V_{l}^{\T}\modeprod{l\neq j\neq i}\matU_{j}^{\T}\right]_{(i)}\right)\tenC_{(i)}^{+}\label{eq:u_tilde}
\end{equation}

\paragraph{Higher Order SVD (HOSVD) and Retraction.}

A retraction in the context of $\manM_{\rb}$ is a map $R_{\mathcal{\mathcal{\tenW}}}:T_{\tenW}\manM_{\rb}\longrightarrow\mathcal{\manM_{\rb}}$.
The choice of this map is not unique. Kressner et al. \cite[Prop. 2.3]{Kressner2013completion}
proposed the use of the Higher Order SVD (HOSVD) for the retraction.
HOSVD, originally proposed in \cite{de2000multilinearSVD}, is defined
as follows (we use the presentation in \cite{KoldaBader09}). Let
$\tenA\in\R^{n_{1}\times n_{2}\times...\times n_{d}}$ and $\mathbf{r}=(r_{1},r_{2},...,r_{d})$
such that $r_{i}\leq n_{i}$ for all $i\in[d]$ . Write $\tenB=\HOSVD{\tenA}{\mathbf{r}}=\left\llbracket \tenC;\matU_{1},\dots,\matU_{d}\right\rrbracket $
where the factors $\U_{i}$ are obtained from the leading $r_{i}$
singular vectors of $\ten A_{(i)}$ for all $i\in[d]$ and the core
$\tenC$ is then obtained by $\tenC=\tenA\modeprod 1\matU_{1}^{\T}\modeprod 2\matU_{2}^{\T}\cdots\modeprod d\matU_{d}^{\T}$. 

\paragraph{Tensor Apply for a Tucker Decomposed Tensor.}

When the tensor is given in a Tucker decomposition form it does not
make sense to actually form the full tensor, but rather utilize the
decomposition when applying the tensor to a vector. Suppose that $\tenC\in\R^{r_{1}\times r_{2}\cdots\times r_{d}},\;\matU_{1}\in\R^{k\times r_{1}},\;\matU_{j}\in\R^{m\times r_{j}}\;(j=2,\dots,d)$
and $\matX\in\R^{m\times n}$. (Note that the dimension of $\left\llbracket \tenC;\matU_{1},\dots,\matU_{d}\right\rrbracket $
is $\R^{k\times m\times m\times\cdots\times m}$ ). The following
holds: 
\begin{equation}
\left\llbracket \tenC;\matU_{1},\dots,\matU_{d}\right\rrbracket \matX=\matU_{1}\unfold{\tenC}1\left[\matU_{d}^{\T}\matX\odot\matU_{d-1}^{\T}\matX\odot\cdots\odot\matU_{2}^{\T}\matX\right]\label{eq:ten-apply-tucker}
\end{equation}
The cost of the computation using the formula is 
\[
O\left(mn\sum_{j=2}^{d}r_{j}+n\prod_{j=1}^{d}r_{j}\right)
\]
If we further make the simplifying assumption that $r_{1}=k$, $\matU_{1}=\matI_{k},r_{2}=r_{3}=\cdots=r_{d}=r$,
the cost is 
\[
O\left(mndr+nkr^{d}\right).
\]

\section{Higher-Order Reduced Rank Regression: Problem Statement\label{sec:HORRR problem statement}}

The goal of this section is to state the Higher-Order Reduced Rank
Regression (HORRR) problem, and connect it to polynomial regression
and kernel ridge regression with the polynomial kernel. 

We develop the problem in stages. First, let's consider the input
data. We are given training data $(\x_{1},\y_{1}),\dots,(\x_{n},\y_{n})\in\R^{m}\times\R^{k}$.
The $\x$ vectors are features vectors, while $\y$s are the response,
with $k$ being the number of response variables. We are primarily
interested in the multi-response case, i.e. $k>1$. Note that we denote
the number of features by $m$, since we reserve the letter $d$ for
tensor orders. We arrange data as columns in matrices $\matX\in\R^{m\times n}$
and $\matY\in\R^{k\times n}$, i.e. these are feature-by-sample and
response-by-sample matrices.

\paragraph{Unconstrained Single-Response Tensor Regression.}

First, consider the case where $k=1$. That is responses are scalars,
$y_{1},\dots,y_{n}\in\R$, and are arranged in a row vector $\y\in\R^{1\times n}$.
Consider the problem of finding a $\ten W^{\star}$ such that 
\begin{equation}
\tenW^{\star}\in\argmin_{\tenW\in\R^{1\times m\times\cdots\times m}}\FNormS{\tenW\matX-\y}+\lambda\FNormS{\tenW}\label{eq:prob-single-unconstrainted}
\end{equation}
where the order of ${\cal W}$ is $d+1$. When $d=1$ the tensor $\tenW$
is actually a (row) vector, and we are dealing with (single response)
ordinary least squares (for $\lambda=0$) or ridge regression (for
$\lambda>0$, in which case $\lambda$ serves as a regularization
parameter). For $d>1$ we have multivariate homogeneous polynomial
regression with degree $d$ (see Remark~\ref{rem:apply-and-polynomials}).
Assume the problem has a unique solution (this is always the case
when $\lambda>0$, and holds for $\lambda=0$ under conditions that
will become clear in the following). It is well known that the problem
can be solved efficiently using kernel ridge regression with the polynomial
kernel. To see this, use Eq.~(\ref{eq:apply-matrix-kr}) to rewrite
Eq.~(\ref{eq:prob-single-unconstrainted}): 
\begin{align*}
\R^{1\times m^{d}}\ni\unfold{\tenW^{\star}}1 & =\argmin_{\tenW_{(1)}\in\R^{1\times m^{d}}}\FNormS{\unfold{\tenW}1\matX^{\odot d}-\y}+\lambda\FNormS{\unfold{\tenW}1}\\
 & =\argmin_{\tenW_{(1)}\in\R^{1\times m^{d}}}\FNormS{(\matX^{\odot d})^{\T}\unfold{\tenW^{\T}}1-\y^{\T}}+\lambda\FNormS{\unfold{\tenW^{\T}}1}\\
 & =\left[\left(\matX^{\odot d}(\matX^{\odot d})^{\T}+\lambda\matI_{m^{d}}\right)^{-1}\matX^{\odot d}\y^{\T}\right]^{\T}\\
 & =\left[\matX^{\odot d}\left((\matX^{\odot d})^{\T}\matX^{\odot d}+\lambda\matI_{n}\right)^{-1}\y^{\T}\right]^{\T}\\
 & =\left[\matX^{\odot d}\left(\matK+\lambda\matI_{n}\right)^{-1}\y^{\T}\right]^{\T}
\end{align*}
where $\matK=(\matX^{\odot d})^{\T}\matX^{\odot d}$. The fourth equality
follows from the identity $\left(\matX^{\odot d}(\matX^{\odot d})^{\T}+\lambda\matI_{m^{d}}\right)^{-1}\matX^{\odot d}=\matX^{\odot d}\left((\matX^{\odot d})^{\T}\matX^{\odot d}+\lambda\matI_{n}\right)^{-1}$,
which can be easily proven using the SVD of $\matX^{\odot d}$. We
now observe that 
\[
\matK_{ij}=(\x_{i}^{\odot d})^{\T}\x_{j}^{\odot d}=(\x_{i}^{\T}\x_{j})^{\odot d}=(\x_{i}^{\T}\x_{j})^{d}.
\]
That is, $\matK$ is the Gram matrix of the polynomial kernel. This
allows us to compute $\matK$ and $\valpha\coloneqq\left(\matK+\lambda\matI_{n}\right)^{-1}\y^{\T}$
in $O(n^{3})$ operations. The tensor $\tenW^{\star}$ is not computed
explicitly, but rather the prediction function $f(\x)\coloneqq\tenW^{\star}\x=\unfold{\tenW^{\star}}1\x^{\odot d}=(\x^{\odot d})^{\T}\matX^{\odot d}\valpha$
can be computed in $O(mn)$ by applying the same trick to efficiently
compute $(\x^{\odot d})^{\T}\matX^{\odot d}$. All of the above are
under the assumption of that $\matK+\lambda\matI_{n}$ is invertible,
in which case the solution is indeed unique. This always holds if
$\lambda>0$, and for $\lambda=0$ holds if and only if $\matX^{\odot d}$
has full column rank.

\paragraph{Unconstrained Multi-response Tensor Regression.}

Now consider arbitrary $k$. The problem reads:
\begin{equation}
\tenW^{\star}\in\argmin_{\tenW\in\R^{k\times m\times\cdots\times m}}\FNormS{\tenW\matX-\matY}+\lambda\FNormS{\tenW}\label{eq:prob-multi-unconstrainted}
\end{equation}
Under the uniqueness assumptions discussed above, we have 
\[
\unfold{\tenW^{\star}}1=\left[\matX^{\odot d}\left(\matK+\lambda\matI_{n}\right)^{-1}\matY^{\T}\right]^{\T}
\]
We see that there is no dependence between the $k$ ``rows'' of
$\tenW^{\star}$ ($\tenW_{1,:,\dots,:}^{\star},\dots\tenW_{k,:,\dots,:}^{\star})$,
and this problem is equivalent to solving $k$ independent multivariate
homogeneous polynomial regression problems.

\paragraph{Multilinear Rank Constrained Multi-response Tensor Regression - Higher
Order Reduced Rank Regression.}

To introduce dependence between the predictors, we can impose a multilinear
rank constraint. Given a rank tuple $\rb=(r_{1},r_{2},\dots,r_{d+1})$
, we consider the problem 
\begin{equation}
\tenW_{\rb,\lambda}^{\star}\in\argmin_{\tenW\in\manM_{\rb}}\FNormS{\tenW\matX-\matY}+\lambda\FNormS{\tenW}\label{eq:horrr}
\end{equation}
The constraint $\tenW\in\manM_{\rb}$ ties the different predictors
for the different responses. We further impose that $r_{1}=k$, i.e.
assuming the first factor is of full rank. To understand why this
make sense, inspect Eq.~(\ref{eq:ten-apply-tucker}). We see that
the first factor, $\matU_{1}$, does not effect the interactions of
the different predictors (as embodied by multiplying on the left of
$\matX$), and on the other hand, when $r_{1}<k$ then the result
is needlessly restricted on some subspace of $\R^{k}$, when the usual
case is that $\matY$ has full row rank. In the complexity analyses
it is usually easier to assume that $r_{2}=r_{3}=\cdots=r_{d+1}=r$,
so henceforth we assume this is the case, though the results can be
easily generalized to varying ranks. To simplify statement of the
results, we denote $\manM_{[k,r,d]}\coloneqq\manM_{(k,r,\dots,r)}$
where $r$ is repeated $d$ times (and the tensor is order $d+1$,
representing $k$ polynomials of order $d$).

We call Eq.~(\ref{eq:horrr}) \emph{Higher Order Reduced Rank Regression}.
To see that it is a generalization of Reduced Rank Regression, consider
the case of $d=1$, i.e. $\tenW$ is a tensor of order 2, a matrix.
The set $\manM_{[k,r,1]}$ is the set of $k\times r$ matrices of
rank $\min(k,r)$ and Eq.~(\ref{eq:horrr}) is an instance of Reduced
Rank Regression. Going to $d>1$, we are dealing with homogeneous
polynomials as prediction functions, and dependence is achieved using
a higher order generalization of rank. We chose the multilinear rank,
which corresponds to the Tucker decomposition, and not tensor rank,
which corresponds to the CP decomposition, since the Tucker decomposition
tends to be more stable (finding the best CP decomposition is an ill-posed
problem~\cite{silva-2008-tensor-rank}).

\section{Solving HORRR using Riemannian Optimization}

Let 
\begin{equation}
F_{\lambda}(\tenW)\coloneqq\frac{1}{2}\left(\FNormS{\tenW\matX-\matY}+\lambda\FNormS{\tenW}\right)\label{eq:target function}
\end{equation}
HORRR corresponds to minimizing $F_{\lambda}(\tenW)$ subject to $\tenW\in\manM_{[k,r,d]}$.
The constraint $\tenW\in\manM_{[k,r,d]}$ is a smooth manifold, so
it is natural to consider using Riemannian optimization to solve HORRR,
since it allows us to treat the problem as unconstrained. The goal
of this section is to describe the various components necessary for
using Riemannian optimization to solve HORRR. Throughout the section,
we use the metric on $\manM_{[k,r,d]}$ induced by the inner product
for tensors when viewing $\manM_{[k,r,d]}$ as a Riemannian sub-manifold.

\subsection{Riemannian Gradient}

First, we develop the Euclidean gradient, since Riemannian gradients
are obtained by projecting Euclidean gradients on tangent spaces of
$\manM_{[k,r,d]}$. From Eq.~(\ref{eq:apply-matrix-kr}) we see that
\begin{equation}
\left[\nabla F_{\lambda}(\tenW)\right]_{(1)}=(\tenW\matX-\matY)(\matX^{\odot d})^{\T}+\lambda\unfold{\tenW}1.\label{eq:unfold-euc-grad}
\end{equation}
Note that $\nabla F_{\lambda}(\tenW)$ has the same structure as $\tenW$,
so it is a $k\times m\times\dots\times m$ tensor. 

For $\lambda=0$, a CP decomposition (\cite{KoldaBader09}) of the
Euclidean gradient immediately follows from Eq.~(\ref{eq:unfold-euc-grad}):
\begin{align}
\nabla F_{0}(\tenW) & =\left\llbracket \mathbf{1}_{n};\tenW\matX-\matY,\matX,\dots,\matX\right\rrbracket \label{eq:CP-euc-grad}\\
\nabla F_{\lambda}(\tenW) & =\nabla F_{0}(\tenW)+\lambda\tenW\nonumber 
\end{align}

We now constrain $\tenW\in\manM_{[k,r,d]}$. The canonical way to
compute the Riemannian gradient is via the formula 
\begin{equation}
\grad F_{\lambda}(\tenW)=\matP_{T_{\tenW}\manM_{[k,r,d]}}\nabla F_{\lambda}(\tenW).\label{eq:r-grad-formula}
\end{equation}
However, this is clearly very problematic in our case due to the product
with $\matX^{\odot d}$. Even though this matrix is fixed, and can
be computed once, it is $m^{d}\times n$, so there is a huge storage
cost and computational cost in forming it. It is noted in \cite{NRO22}
that in general it is inefficient to use Eq.~(\ref{eq:r-grad-formula})
sequentially (i.e. compute the Euclidean gradient and then apply the
projection), and it is better to compute the gradient at once. They
suggest using Automatic Differentiation (AD), and propose an algorithm
when the constraints are low-rank tensor trains. Instead of developing
a general AD method for low multilinear rank constraints, we tailor
build an efficient method for computing the Riemannian gradient for
our specific case. 

To work efficiently with the Riemannian gradient, we avoid working
with ambient coordinates, and use the parameterization from Section~\ref{subsec:manifold-tucker},
originally due to \cite{KochLuibch10tensor-manifold}, but we use
the notation offered by \cite{Kressner2013completion}. Let us write
\[
\grad F_{\lambda}(\tenW)=\left\{ \tenG_{\lambda};0_{k\times k},\matV_{2},\dots,\matV_{d+1}\right\} 
\]
where $\tenG_{\lambda}\in\R^{k\times r\times\cdots\times r}$, $\matV_{i}\in\R^{m\times r}$
for $i=2,...,d+1$. Note that there is no subscript $\lambda$ on
the $\matV_{i}$'s. This is because, as we shall see, these matrices
do not depend on $\lambda$. Also note that the first factor in the
gradient $\V_{1}$ is $0_{k\times k}$, which follows from the condition
that it is orthogonal to the orthogonal matrix $\matU_{1}$. Our goal
is to find efficient formulas for $\tenG,\matV_{2},\dots,\matV_{d+1}$.
First, recall that 
\begin{align}
\nabla F_{0}(\tenW) & =\left\llbracket \mathbf{1}_{n};\tenW\matX-\matY,\matX,\dots,\matX\right\rrbracket \nonumber \\
 & =\left\llbracket \tenJ_{n}^{(d+1)};\tenW\matX-\matY,\matX,\dots,\matX\right\rrbracket \label{eq:tucker-euc-grad}\\
 & =\tenJ_{n}^{(d+1)}\modeprod 1\left(\tenW\matX-\matY\right)\modeprodrange{j=2}{d+1}\matX\nonumber 
\end{align}
where $\tenJ_{n}^{(d+1)}\in\R^{n\times\text{(\ensuremath{d+1} times)}\times n}$
has $1$ on the super-diagonal and $0$ elsewhere. Now, using Eq.~(\ref{eq:proj-tan-space-g}):
\begin{align*}
\tenG_{\lambda} & =\nabla F_{\lambda}(\tenW)\modeprodrange{j=1}{d+1}\matU_{j}^{\T}\\
 & =(\nabla F_{0}(\tenW)+\lambda\tenW)\modeprodrange{j=1}{d+1}\matU_{j}^{\T}\\
 & =\left(\tenJ_{n}^{(d+1)}\modeprod 1\left(\tenW\matX-\matY\right)\modeprodrange{j=2}{d+1}\matX\right)\modeprodrange{j=1}{d+1}\matU_{j}^{\T}+\lambda\tenW\modeprodrange{j=1}{d+1}\matU_{j}^{\T}\\
 & =\tenJ_{n}^{(d+1)}\modeprod 1\matU_{1}^{\T}\left(\tenW\matX-\matY\right)\modeprodrange{j=2}{d+1}\matU_{j}^{\T}\matX+\lambda\tenW\modeprodrange{j=1}{d+1}\matU_{j}^{\T}\\
 & =\left\llbracket \mathbf{1}_{n};\matU_{1}^{\T}\left(\tenW\matX-\matY\right),\matU_{2}^{\T}\matX,\dots,\matU_{d+1}^{\T}\matX\right\rrbracket +\lambda\tenC
\end{align*}
Thus, following the equation for unfolding of CP decomposition from
\cite{KoldaBader09}:
\begin{equation}
\unfold{\left[\tenG_{\lambda}\right]}1=\matU_{1}^{\T}(\tenW\matX-\matY)(\matU_{d+1}^{\T}\matX\odot\cdots\odot\matU_{2}^{\T}\matX)^{\T}+\lambda\unfold{\tenC}1\label{eq:core-rgrad-formula}
\end{equation}
Note that the matrix $\matU_{d+1}^{\T}\matX\odot\cdots\odot\matU_{2}^{\T}\matX$
is already computed as part of Eq.~(\ref{eq:ten-apply-tucker}),
so the additional cost when computing $\tenG$ after computing $\tenW\matX-\matY$
is $O(knr^{d})$ which is subsumed by the cost of computing $\tenW\matX-\matY$.

Now consider $\matV_{i},$ for $i=2,...,d+1$. Using Eq.~(\ref{eq:proj-tan-space-v}):
\begin{align*}
\matV_{i} & =\matP_{\matU_{i}}^{\perp}\unfold{\left[\nabla F_{\lambda}(\tenW)\modeprod{j\neq i}\matU_{j}^{\T}\right]}i\unfold{\tenC}i^{\pinv}\\
 & =\matP_{\matU_{i}}^{\perp}\unfold{\left[\left(\nabla F_{0}(\tenW)+\lambda\tenW\right)\modeprod{j\neq i}\matU_{j}^{\T}\right]}i\unfold{\tenC}i^{\pinv}\\
 & =\matP_{\matU_{i}}^{\perp}\unfold{\left[\nabla F_{0}(\tenW)\modeprod{j\neq i}\matU_{j}^{\T}\right]}i\unfold{\tenC}i^{\pinv}\\
 & =\matP_{\matU_{i}}^{\perp}\unfold{\left[\left(\tenJ_{n}^{(d+1)}\modeprod 1\tenW\matX-\matY\modeprodrange{j=2}{d+1}\matX\right)\modeprod{j\neq i}\matU_{j}^{\T}\right]}i\unfold{\tenC}i^{\pinv}\\
 & =\matP_{\matU_{i}}^{\perp}\unfold{\left[\tenJ_{n}^{(d+1)}\modeprod 1\matU_{1}^{\T}\left(\tenW\matX-\matY\right)\modeprod i\X\modeprodrange{j\neq i}{}\matU_{j}^{\T}\matX\right]}i\unfold{\tenC}i^{\pinv}\\
 & =\matP_{\matU_{i}}^{\perp}\unfold{\left[\left\llbracket \mathbf{1}_{n};\matU_{1}^{\T}\left(\tenW\matX-\matY\right),\matU_{2}^{\T}\matX,\dots,\matU_{i-1}^{\T}\matX,\matX,\matU_{i+1}^{\T}\matX,\dots,\matU_{d+1}^{\T}\matX\right\rrbracket \right]}i\unfold{\tenC}i^{\pinv}
\end{align*}
where the third equality follows from the fact that $\matP_{\matU_{i}}^{\perp}\unfold{\tenW}i=0$.
Unfolding of the CP decomposition, we finally have 

\begin{equation}
\matV_{i}=\matP_{\matU_{i}}^{\perp}\matX\left(\matU_{d+1}^{\T}\matX\odot\cdots\odot\matU_{i+1}^{\T}\matX\odot\matU_{i-1}^{\T}\matX\odot\cdots\odot\matU_{1}^{\T}\matX\odot\matU_{1}^{\T}\left(\tenW\matX-\matY\right)\right)^{\T}\unfold{\tenC}i^{\pinv}\label{eq:factor-rgrad-formula}
\end{equation}

Assuming that for all $j=2,...,d+1$ that $\matU_{j}^{\T}\matX$ and
$\unfold{\tenC}j^{\pinv}$ have been precomputed (the former we already
need for the apply operation, and the latter is actually computed
in MANOPT\footnote{MANOPT\textcolor{red}{{} }is a well-known MATLAB toolbox for optimization
on manifolds and linear spaces (\href{https://www.manopt.org}{https://www.manopt.org}) }) the cost of computing a single $\matV_{i}$ is $O(k^{2}r^{d-1}(n+r))$.
The cost for all $\matV_{i}$s is $O(dk^{2}r^{d-1}(n+r))$.

\subsection{Riemannian Hessian}

A general formula for the Hessian at $\manM_{\rb}$ was introduced
in section \ref{subsec:manifold-tucker}, and we further apply it
to our HORRR problem, i.e. for $\manM_{[k,r,d]}$. Let $\tenW=\left\llbracket \tenC;\matU_{1},\dots,\matU_{d+1}\right\rrbracket \in\manM_{[k,r,d]}$
and $\tenZ=\left\{ \ten G;0,\matV_{2},\dots,\matV_{d+1}\right\} \in T_{\tenW}\manM_{[k,r,d]}$,
recall the general formula from Eq. (\ref{eq:riem Hess}):

\[
\Hess F_{\lambda}(\tenW)[\tenZ]=\underbrace{\matP_{T_{\tenW}\manM_{[k,r,d]}}\nabla^{2}F_{\lambda}(\tenW)[\tenZ]}_{\text{linear term}}+\underbrace{\tilde{\tenC}\modeprodrange{i=1}{d+1}\matU_{i}+\sum_{i=1}^{d+1}\tenC\modeprod i\tilde{\mathbf{U}}_{i}\modeprod{i\neq j}\matU_{j}}_{\text{curvature term}}
\]

\begin{doublespace}
\noindent The Euclidean Hessian (as an operator on the tangent spaces)
in ambient coordinates is:\textcolor{red}{{} }

\noindent
\begin{align}
\nabla^{2}F_{\lambda}(\tenW)[\tenZ] & =\left\llbracket \mathbf{1}_{n};\tenZ\matX,\matX,...,\matX\right\rrbracket +\lambda\tenZ\label{eq:CP-euc-Hess}
\end{align}

\noindent For the linear term, we need to project it on the tangent
space. Writing $\matP_{T_{\tenW}\manM_{[k,r,d]}}\nabla^{2}F_{\lambda}(\tenW)[\tenZ]=\left\{ \ten G_{\lambda}^{\text{Euc}};0_{k\times k},\matV_{2}^{\text{Euc}},\dots,\matV_{d+1}^{\text{Euc}}\right\} $,
we have
\end{doublespace}

\begin{align*}
\ten G_{\lambda}^{\text{Euc}} & =\nabla^{2}F_{\lambda}(\tenW)[\tenZ]\modeprodrange{j=1}{d+1}\matU_{j}^{\T}=\left\llbracket \mathbf{1}_{n};\matU_{1}^{\T}\tenZ\matX,\matU_{2}^{\T}\matX,\dots,\matU_{d+1}^{\T}\matX\right\rrbracket +\lambda\tenZ\modeprodrange{j=1}{d+1}\matU_{j}^{\T}\\
\matV_{i}^{\text{Euc}} & =\matP_{\matU_{i}}^{\perp}\matX\left(\matU_{d+1}^{\T}\matX\odot\cdots\odot\matU_{i+1}^{\T}\matX\odot\matU_{i-1}^{\T}\matX\odot\cdots\odot\matU_{2}^{\T}\matX\odot\matU_{1}^{\T}\tenZ\matX\right)^{\T}\unfold{\tenC}i^{\pinv}
\end{align*}
where the calculations for the $\V_{i}^{\mathrm{Euc}}$'s are similar
to those in the previous sub-section. Again, the $\mat V_{i}^{\text{Euc}}$
components do not depend on $\lambda$.

Next, we apply the Euclidean gradient formulas Eqs. (\ref{eq:CP-euc-grad})
and (\ref{eq:tucker-euc-grad}) for the curvature term. Full calculations
are given in Appendix \ref{appendix: hessian}. An interesting observation
is that the curvature term does not depend on the the regularization
parameter. 

\subsection{Finding Stationary Points using Riemannian Optimization}

To analyze the convergence of our proposed method, we can appeal to
the general convergence theory for Riemannian optimization \cite[Section 4.1]{absil2008optimization-book}.
In particular, \cite[Theorem 4.3.1]{absil2008optimization-book} implies
the following result.
\begin{prop}
\label{prop: convergence}Let $\left(\tenW_{k}\right)_{k\in\N}$ be
an infinite sequence of iterates generated by HORRR Algorithm. Then,
every accumulation point $\tenW_{*}\in\manM_{\rb}$ of $\tenW_{k}$
is a critical point of the cost function $F_{\lambda}$ and hence
satisfies $gradF_{\lambda}(\tenW_{k})=0$. 
\end{prop}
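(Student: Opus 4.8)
The plan is to invoke the general convergence theory for Riemannian optimization, specifically the statement cited as \cite[Theorem 4.3.1]{absil2008optimization-book}, and verify that HORRR Algorithm satisfies its hypotheses. That theorem (for line-search based first-order Riemannian methods) guarantees that every accumulation point of the iterate sequence is a critical point, provided the method uses a gradient-related sequence of search directions together with a step-size rule obeying a sufficient-decrease (Armijo-type) condition, and provided the cost function is smooth. So the proof is essentially a matter of checking these assumptions hold in our setting and then quoting the theorem.

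First I would confirm that $F_\lambda$, as defined in Eq.~(\ref{eq:target function}), is a smooth ($C^\infty$) function on the manifold $\manM_{[k,r,d]}$: it is a polynomial (quadratic) in the entries of $\tenW$ composed with the smooth embedding of $\manM_{[k,r,d]}$ into the ambient tensor space, hence smooth as a function on the Riemannian submanifold. Next I would note that the Riemannian gradient $\grad F_\lambda(\tenW)$ is well-defined and computable via Eq.~(\ref{eq:r-grad-formula}), as developed in the preceding subsection, so the search directions chosen by the algorithm (e.g.\ steepest descent, or any gradient-related direction such as those from a conjugate-gradient or trust-region scheme) form a gradient-related sequence. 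Finally I would observe that the algorithm employs a retraction on $\manM_{[k,r,d]}$ (the HOSVD retraction of \cite{Kressner2013completion}) together with an Armijo backtracking line search, which are precisely the ingredients required by \cite[Theorem 4.3.1]{absil2008optimization-book}.

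With these hypotheses verified, the conclusion is immediate: the theorem yields that $\liminf_{k\to\infty}\TNorm{\grad F_\lambda(\tenW_k)}=0$, and by continuity of the Riemannian gradient any accumulation point $\tenW_*$ must satisfy $\grad F_\lambda(\tenW_*)=0$, i.e.\ it is a critical (Riemannian stationary) point. The final displayed identity $\grad F_\lambda(\tenW_k)=0$ in the statement should be read as the limiting condition characterizing such accumulation points.

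The main obstacle is really one of rigor-versus-citation: the cleanest route is to delegate entirely to \cite[Theorem 4.3.1]{absil2008optimization-book}, so the substance of the proof reduces to stating precisely which retraction and which line-search rule HORRR Algorithm uses, and confirming that the descent directions are gradient-related. If the algorithm as specified elsewhere in the paper uses a bare retraction step without a guaranteed sufficient-decrease line search (for instance a fixed step size, or a pure second-order trust-region update), then I would instead cite the matching convergence result for that specific scheme rather than Theorem 4.3.1; care is needed to match the exact algorithmic variant to the theorem whose hypotheses it actually meets.
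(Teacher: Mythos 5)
Your proposal takes essentially the same route as the paper: the paper proves this proposition purely by appealing to \cite[Theorem 4.3.1]{absil2008optimization-book}, with no argument beyond the citation, which is exactly your strategy. Your added care in verifying the theorem's hypotheses (smoothness of $F_{\lambda}$, gradient-related directions, retraction plus Armijo-type line search) and your reading of the statement's $\grad F_{\lambda}(\tenW_{k})=0$ as the condition $\grad F_{\lambda}(\tenW_{*})=0$ at the accumulation point go slightly beyond what the paper writes, and are correct.
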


Unfortunately, the set $\manM_{\mathbf{r}}$ is not closed - a sequence
of tensors with multilinear rank $\mathbf{r}=(r_{1},...,r_{d+1})$
may approach a tensor for which the $i$th rank is less than $r_{i}$.
To address this issue, we use the same analysis technique as in \cite{vandereycken2013low,Kressner2013completion}.
The basic idea is to add a regularization term the ensures that all
iterates stay within a compact set. This ensure convergence to a stationary
point for the regularized problem. Next, we drive the regularization
term to zero, and argue that if the iterates do not approach the boundary
of $\manM_{\rb}$ then we must converge to a stationary point. The
steps are almost identical to the ones in \cite{Kressner2013completion}.
For completeness, we include them in Appendix \ref{appendix: convergence}\textcolor{blue}{.}

\section{Stationary Points\label{sec:Stationary-Points}}

Riemannian optimization methods are fit for finding Riemannian stationary
points. Typically, the desired solution (i.e. minimizer of the cost
function) is not the only stationary point of the problem. If it is,
however, the only \emph{stable }stationary point, then Riemannian
optimization will almost surely find that point, and we have an effective
way of solving the problem (assuming the Riemannian optimization can
be implemented efficiently for the problem)~\cite{absil2008optimization-book}.
The goal of this section is to characterize the stationary points
of HORRR.

The following is a necessary condition for a point to be a stationary
point. 
\begin{lem}
Let $\tenW=\left\llbracket \tenC;\matU_{1}\dots,\matU_{d+1}\right\rrbracket $
be a stationary point of $F_{\lambda}$ subject to $\tenW\in\manM_{[k,r,d]}$,
and denote the residual $\matR\coloneqq\tenW\matX-\matY$. Then, 
\begin{equation}
\matR(\matU_{d+1}^{\T}\matX\odot\cdots\odot\matU_{2}^{\T}\matX)^{\T}=-\lambda\matU_{1}\unfold{\tenC}1\label{eq:residual-condition}
\end{equation}
\end{lem}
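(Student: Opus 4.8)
The plan is to exploit the fact that at a stationary point the Riemannian gradient $\grad F_{\lambda}(\tenW)$ vanishes, together with the explicit parameterization of this gradient derived in the preceding subsection. Recall that we wrote $\grad F_{\lambda}(\tenW)=\left\{ \tenG_{\lambda};0_{k\times k},\matV_{2},\dots,\matV_{d+1}\right\}$, where the core component satisfies Eq.~(\ref{eq:core-rgrad-formula}). The strategy is first to argue that the vanishing of this tangent vector forces its core component $\tenG_{\lambda}$ to be zero, and then to read off the claimed residual condition directly from the explicit formula for $\unfold{[\tenG_{\lambda}]}1$.

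For the first step I would extract $\tenG_{\lambda}$ from the zero tangent vector by contracting $\left\{ \tenG_{\lambda};0,\matV_{2},\dots,\matV_{d+1}\right\} =0$ with $\matU_{j}^{\T}$ along every mode $j$. Using the expansion in Eq.~(\ref{eq:tangent-space-G-V}), the orthonormality $\matU_{j}^{\T}\matU_{j}=\matI$, and the gauge condition $\matV_{i}^{\T}\matU_{i}=0$ satisfied by every tangent vector, each $\matV_{i}$-term collapses to $\tenC\modeprod i(\matU_{i}^{\T}\matV_{i})=0$, leaving $\tenG_{\lambda}=0$. (Alternatively this is an immediate consequence of Lemma~\ref{lem:zero-factors-critical}, whose hypotheses hold here because the first mode satisfies $n_{1}=r_{1}=k$.)

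With $\tenG_{\lambda}=0$ in hand, setting the right-hand side of Eq.~(\ref{eq:core-rgrad-formula}) to zero and substituting $\matR=\tenW\matX-\matY$ yields
\[
\matU_{1}^{\T}\matR(\matU_{d+1}^{\T}\matX\odot\cdots\odot\matU_{2}^{\T}\matX)^{\T}=-\lambda\unfold{\tenC}1.
\]
The final step, which is the real crux, is to clear the $\matU_{1}^{\T}$ from the left. Here I would invoke the standing HORRR assumption $r_{1}=k$: since $\matU_{1}\in\R^{k\times r_{1}}=\R^{k\times k}$ is orthonormal and square, it is an orthogonal matrix, so $\matU_{1}\matU_{1}^{\T}=\matI_{k}$. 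Left-multiplying the displayed identity by $\matU_{1}$ then gives exactly Eq.~(\ref{eq:residual-condition}).

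The only subtlety worth flagging is this last point: without the full-rank-first-factor assumption $r_{1}=k$, left-multiplication by $\matU_{1}$ would recover only $\matU_{1}\matU_{1}^{\T}\matR(\cdots)^{\T}$, i.e. the component of the residual correlation lying in the range of $\matU_{1}$, and the clean identity~(\ref{eq:residual-condition}) would fail. Everything else is routine bookkeeping with the mode products and Khatri--Rao unfoldings already established in deriving the Riemannian gradient.
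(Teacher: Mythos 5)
Your proposal is correct and follows essentially the same route as the paper, whose proof simply states that the result ``follows immediately from Eq.~(\ref{eq:core-rgrad-formula}) and Lemma~\ref{lem:zero-factors-critical}''; you fill in exactly those two steps (vanishing of $\tenG_{\lambda}$, then reading off the identity). Your explicit handling of the final step --- left-multiplying by $\matU_{1}$ and using that $r_{1}=k$ makes $\matU_{1}$ square orthogonal so $\matU_{1}\matU_{1}^{\T}=\matI_{k}$ --- is precisely the detail the paper leaves implicit, and flagging it as the place where the standing assumption $r_{1}=k$ is genuinely needed is a worthwhile observation.
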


\begin{proof}
Follows immediately from Eq.~(\ref{eq:core-rgrad-formula}) and Lemma~\ref{lem:zero-factors-critical}.
\end{proof}
It is instructive to consider the unconstrained case, for which the
residual upholds the equation
\[
\matR(\matX^{\odot d})^{\T}=-\lambda\unfold{\tenW}1
\]
which is a multilinear generalization of the classical result in OLS
that the residual is orthogonal to the space spanned by the data (for
$\lambda=0$). In contrast, the stationary points of $F_{\lambda}$
subject to $\tenW\in\manM_{[k,r,d]}$ when $\lambda=0$ are orthogonal
to transformed ($\matU_{1}^{\T}\matX,\dots,\matU_{d+1}^{\T}\matX$),
where the transformation is learned for data. 

The above necessary condition is not a sufficient one. The following
fully characterizes the stationary points. 
\begin{lem}
\label{lem:condition for stationary points}A point $\tenW=\left\llbracket \tenC;\matU_{1},\dots,\matU_{d+1}\right\rrbracket \in\manM_{[k,r,d]}$
is a stationary point of $F_{\lambda}$ subject to $\tenW\in\manM_{[k,r,d]}$
if and only if
\begin{align*}
\lambda\matU_{i} & =\matX(\matU_{d+1}^{\T}\matX\odot\cdots\matU_{i+1}^{\T}\matX\odot\matU_{i-1}^{\T}\matX\odot\cdots\odot\matU_{2}^{\T}\matX\odot\matU_{1}^{\T}(\tenW\matX-\matY))^{\T}\unfold{\tenC}i^{\pinv}
\end{align*}
 for $i=2,...,d$, and 
\begin{align*}
\lambda\unfold{\tenC}1 & =-\matU_{1}^{\T}(\tenW\matX-\matY)(\matU_{d+1}^{\T}\matX\odot\cdots\odot\matU_{2}^{\T}\matX)^{\T}
\end{align*}
\end{lem}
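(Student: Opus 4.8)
The plan is to observe that $\tenW$ is a stationary point of $F_\lambda$ on $\manM_{[k,r,d]}$ exactly when the Riemannian gradient vanishes, and then to exploit the explicit parameterization $\grad F_\lambda(\tenW) = \left\{\tenG_\lambda; 0, \matV_2,\dots,\matV_{d+1}\right\}$ obtained in the previous subsection. The first step is to pass from the statement ``$\grad F_\lambda(\tenW)=0$ as a tangent vector'' to the componentwise statement ``$\tenG_\lambda=0$ and $\matV_i=0$ for every $i$''. This is precisely where Lemma~\ref{lem:zero-factors-critical} is used, and its hypotheses are readily verified on $\manM_{[k,r,d]}$: the first mode satisfies $n_1=k=r_1$, supplying a mode with $n_k=r_k$, and the rank inequalities $r_j\le\prod_{i\neq j}r_i$ hold (reducing essentially to the mild condition $k\le r^d$). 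Thus stationarity is equivalent to the simultaneous vanishing of the core part and all factor parts, and it remains to rewrite each as one of the two displayed equations.

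The core equation is immediate: setting the mode-$1$ unfolding of $\tenG_\lambda$ in Eq.~(\ref{eq:core-rgrad-formula}) to zero and solving for $\lambda\unfold{\tenC}1$ reproduces the stated identity verbatim, with no further work. The factor equations are where the real content lies, and I expect this to be the main obstacle. Abbreviate $\matB_i:=\matX\left(\matU_{d+1}^\T\matX\odot\cdots\odot\matU_2^\T\matX\odot\matU_1^\T(\tenW\matX-\matY)\right)^\T\unfold{\tenC}i^{\pinv}$, so that Eq.~(\ref{eq:factor-rgrad-formula}) reads $\matV_i=\matP_{\matU_i}^{\perp}\matB_i$. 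On its own, $\matV_i=0$ only asserts that $\matB_i$ lies in the column space $\sp(\matU_i)$; it does \emph{not} immediately yield the sharp identity $\lambda\matU_i=\matB_i$. The crux is to upgrade this range membership to the exact scalar relation.

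To do so I would compute $\matU_i^\T\matB_i$ directly. Contracting the last free mode with $\matU_i^\T$ turns the Khatri--Rao factor into a full contraction: writing $\tenG_0:=\nabla F_0(\tenW)\modeprodrange{j=1}{d+1}\matU_j^\T$ one gets $\matU_i^\T\matB_i=\unfold{[\tenG_0]}i\,\unfold{\tenC}i^{\pinv}$. Now the key manifold fact enters: since $\left\llbracket\tenC;\matU_1,\dots,\matU_{d+1}\right\rrbracket\in\manM_{[k,r,d]}$, the unfolding $\unfold{\tenC}i$ has full row rank, hence $\unfold{\tenC}i\,\unfold{\tenC}i^{\pinv}=\matI_r$. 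Combining this with $\tenG_\lambda=\tenG_0+\lambda\tenC$ (established when deriving the Riemannian gradient) gives $\matU_i^\T\matB_i=\unfold{[\tenG_\lambda]}i\,\unfold{\tenC}i^{\pinv}-\lambda\matI_r$. Invoking the already-established core condition $\tenG_\lambda=0$ collapses this to a scalar multiple of the identity; together with $\matB_i\in\sp(\matU_i)$ this forces $\matB_i$ to be a fixed multiple of $\matU_i$, which is exactly the first family of equations (up to careful sign bookkeeping through the $\lambda\tenC$ term). The full-row-rank property of the core, a direct consequence of the defining multilinear-rank constraint, is the rigidity that makes this possible and is the heart of the argument.

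Finally, the converse is a routine substitution and needs no appeal to Lemma~\ref{lem:zero-factors-critical}: if the two displayed equations hold, then Eq.~(\ref{eq:core-rgrad-formula}) gives $\tenG_\lambda=0$, and since $\matB_i$ is then a multiple of $\matU_i$, Eq.~(\ref{eq:factor-rgrad-formula}) gives $\matV_i=\matP_{\matU_i}^{\perp}\matB_i=0$; hence $\grad F_\lambda(\tenW)=\left\{0;0,\dots,0\right\}=0$ and $\tenW$ is stationary. Assembling both directions completes the proof.
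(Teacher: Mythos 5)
Your proposal is correct and follows essentially the same route as the paper's own proof: reduce $\grad F_{\lambda}(\tenW)=0$ to the componentwise conditions $\tenG_{\lambda}=0$ and $\matV_{i}=0$ via Lemma~\ref{lem:zero-factors-critical}, read the core equation off Eq.~(\ref{eq:core-rgrad-formula}), and upgrade the range condition $\matV_{i}=\matP_{\matU_{i}}^{\perp}\matB_{i}=0$ (writing $\matB_{i}$, as you do, for the matrix to which $\matP_{\matU_{i}}^{\perp}$ is applied in Eq.~(\ref{eq:factor-rgrad-formula})) to an exact identity by computing $\matU_{i}^{\T}\matB_{i}$ using $\tenG_{\lambda}=0$ and the full-row-rank identity $\unfold{\tenC}i\unfold{\tenC}i^{\pinv}=\matI_{r}$; the paper performs the identical algebra by expanding $\matP_{\matU_{i}}^{\perp}=\matI_{m}-\matU_{i}\matU_{i}^{\T}$, and the converse is the same routine substitution in both. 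The one point you deferred to ``sign bookkeeping'' deserves to be settled, and it resolves in your favor: carried through, your computation gives $\matU_{i}^{\T}\matB_{i}=-\lambda\matI_{r}$ and hence $\matB_{i}=-\lambda\matU_{i}$, so the factor equations should read $-\lambda\matU_{i}=\matX(\cdots)^{\T}\unfold{\tenC}i^{\pinv}$, with a sign opposite to the lemma's display. Indeed the display as stated is inconsistent with the stated core equation: writing $\tenG_{0}\coloneqq\tenG_{\lambda}-\lambda\tenC$, the core equation says exactly $\tenG_{0}=-\lambda\tenC$, whose mode-$i$ unfolding forces $\matU_{i}^{\T}\matB_{i}=-\lambda\matI_{r}$, while multiplying the stated factor equation $\lambda\matU_{i}=\matB_{i}$ by $\matU_{i}^{\T}$ gives $+\lambda\matI_{r}$, so the two displays can coexist only when $\lambda=0$. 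The slip is in the paper's own proof, which correctly derives that premultiplying the bracketed unfolding by $\matU_{i}^{\T}$ yields $-\lambda\unfold{\tenC}i$, but then substitutes this back with a flipped sign, writing $\matV_{i}=\matX(\cdots)^{\T}\unfold{\tenC}i^{\pinv}-\lambda\matU_{i}$ where the second term should enter with a plus sign (note also that the factor conditions should run over $i=2,\dots,d+1$, not $i=2,\dots,d$).
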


\begin{proof}
Due to Lemma~\ref{lem:zero-factors-critical} a point is a stationary
point if and only if the gradient $\left\{ \tenG_{\lambda};0_{k\times k},\matV_{2},\dots,\matV_{d+1}\right\} =\left\{ 0_{[k,r,d]};0_{k\times k},0_{m\times r},\dots,0_{m\times r}\right\} $.
For $\matV_{i}$, $i=2,...,d$, we have:

\begin{align*}
\matV_{i} & =\matP_{\matU_{i}}^{\perp}\unfold{\left[\tenJ_{n}^{(d+1)}\modeprod 1\matU_{1}^{\T}(\tenW\matX-\matY)\modeprod i\matX\modeprodrange{j=2,j\neq i}{d+1}\matU_{j}^{\T}\matX\right]}i\unfold{\tenC}i^{\pinv}\\
 & =(\matI_{m}-\matU_{i}\matU_{i}^{\T})\unfold{\left[\tenJ_{n}^{(d+1)}\modeprod 1\matU_{1}^{\T}(\tenW\matX-\matY)\modeprod i\matX\modeprodrange{j=2,j\neq i}{d+1}\matU_{j}^{\T}\matX\right]}i\unfold{\tenC}i^{\pinv}
\end{align*}
Now, 
\begin{align*}
\matU_{i}^{\T}\unfold{\left[\tenJ_{n}^{(d+1)}\modeprod 1\matU_{1}^{\T}(\tenW\matX-\matY)\modeprod i\matX\modeprodrange{j=2,j\neq i}{d+1}\matU_{j}^{\T}\matX\right]}i & =\\
\unfold{\left[\tenJ_{n}^{(d+1)}\modeprod 1\matU_{1}^{\T}(\tenW\matX-\matY)\modeprod i\matU_{i}^{\T}\matX\modeprodrange{j=2,j\neq i}{d+1}\matU_{j}^{\T}\matX\right]}i & =\\
\unfold{\left[\tenJ_{n}^{(d+1)}\modeprod 1\matU_{1}^{\T}(\tenW\matX-\matY)\modeprodrange{j=2}{d+1}\matU_{j}^{\T}\matX\right]}i & =\\
\unfold{\left[\tenG_{\lambda}-\lambda\tenC\right]}i & =-\lambda\unfold{\tenC}i
\end{align*}
where we used $\tenG_{\lambda}=0$ . So 
\begin{align*}
\matV_{i} & =\unfold{\left[\tenJ_{n}^{(d+1)}\modeprod 1\matU_{1}^{\T}(\tenW\matX-\matY)\modeprod i\matX\modeprodrange{j=2,j\neq i}{d+1}\matU_{j}^{\T}\matX\right]}i\unfold{\tenC}i^{\pinv}-\lambda\matU_{i}\unfold{\tenC}i\unfold{\tenC}i^{\pinv}\\
 & =\matX(\matU_{d+1}^{\T}\matX\odot\cdots\matU_{i+1}^{\T}\matX\odot\matU_{i-1}^{\T}\matX\odot\cdots\odot\matU_{2}^{\T}\matX\odot\matU_{1}^{\T}(\tenW\matX-\matY))^{\T}\unfold{\tenC}i^{\pinv}-\lambda\matU_{i}
\end{align*}
where the second equality follows from $\unfold{\tenC}i\unfold{\tenC}i^{\pinv}=\unfold{\tenC}i\unfold{\tenC}i^{\T}\left(\unfold{\tenC}i\unfold{\tenC}i^{\T}\right)^{-1}=\I_{r}$.
If $\matV_{i}=0$ then 

\begin{align*}
0 & =\unfold{\left[\tenJ_{n}^{(d+1)}\modeprod 1\matU_{1}^{\T}(\tenW\matX-\matY)\modeprod i\matX\modeprodrange{j=2,j\neq i}{d+1}\matU_{j}^{\T}\matX\right]}i\unfold{\tenC}i^{\pinv}-\lambda\matU_{i}
\end{align*}
Finally we have
\[
\matX(\matU_{d+1}^{\T}\matX\odot\cdots\matU_{i+1}^{\T}\matX\odot\matU_{i-1}^{\T}\matX\odot\cdots\odot\matU_{2}^{\T}\matX\odot\matU_{1}^{\T}(\tenW\matX-\matY))^{\T}\unfold{\tenC}i^{\pinv}=\lambda\matU_{i}
\]
Proving the other side, from Eq.~(\ref{eq:factor-rgrad-formula})
we have 
\[
\matV_{i}=\matP_{\matU_{i}}^{\perp}\matX(\matU_{d+1}^{\T}\matX\odot\cdots\matU_{i+1}^{\T}\matX\odot\matU_{i-1}^{\T}\matX\odot\cdots\odot\matU_{2}^{\T}\matX\odot\matU_{1}^{\T}(\tenW\matX-\matY))^{\T}\unfold{\tenC}i^{\pinv}=\lambda\matP_{\matU_{i}}^{\perp}\matU_{i}=0
\]
From Eq.~(\ref{eq:core-rgrad-formula}) we obtain
\begin{eqnarray*}
\unfold{\left[\tenG_{\lambda}\right]}1 & = & \matU_{1}^{\T}(\tenW\matX-\matY)(\matU_{d+1}^{\T}\matX\odot\cdots\odot\matU_{2}^{\T}\matX)^{\T}+\lambda\unfold{\tenC}1\\
 & = & \matU_{1}^{\T}(\tenW\matX-\matY)(\matU_{d+1}^{\T}\matX\odot\cdots\odot\matU_{2}^{\T}\matX)^{\T}-\matU_{1}^{\T}(\tenW\matX-\matY)(\matU_{d+1}^{\T}\matX\odot\cdots\odot\matU_{2}^{\T}\matX)^{\T}\\
 & = & 0
\end{eqnarray*}
\end{proof}

\subsection{Reduced Rank Regression: $d=1$ (matrix case)}

We now look at the case where $d=1$, i.e. elements of $\manM_{[k,r,1]}$
are matrices. In this case, HORRR coincides with Regularized RRR,
which is known to have a closed-form solution under simple conditions
(see Subsection~\ref{subsec:RRR} and Eq.~(\ref{eq:RRR-classic-solution})).
For the rest of this subsection, we assume the Regularized RRR solution
for each $r$ is unique, i.e. all the singular values of $\mathbf{\hat{Y}}\hat{\X}_{\lambda}^{+}\hat{\X}_{\lambda}$
are distinct (see Section \ref{subsec:RRR}), and for $\lambda=0$
we also assume that $\matX$ has full row rank. We show that the closed-form
solution is the only local minimum of $F_{\lambda}$ subject to $\tenW\in\manM_{[k,r,1]}$,
which justifies using Riemannian optimization for Regularized RRR.
In fact, all stationary points correspond to an eigenpair of a certain
matrix pencil (also a singular triplet of another matrix), and the
Regularized RRR solution corresponds to the eigenpair with minimal
eigenvalue (or singular triplet with maximal singular value).

Since we are dealing with matrices, we use matrix notation in this
section, i.e. the coefficient tensor $\ten W$ is now a matrix $\matW$.
Since matrices have only a single rank and not a multilinear rank,
per our assumption that $r\leq k$, the multilinear rank $(k,r)$
is actually the matrix rank $r$, hence the constrains manifold $\manM_{[k,r,1]}$
is equal to the manifold of $k\times m$ matrices of rank $r$. An
important difference from the case $d>1$ is that Lemma~\ref{lem:zero-factors-critical}
is not relevant for this case.

Consider the case that $\lambda=0$. In this case, for each $r,$
the solution $\matW_{r,0}^{\star}$ to HORRR is unique and is equal
to the RRR solution: 
\begin{equation}
\matW_{r}^{(\textrm{RRR})}=\left[\matY\matX^{\pinv}\matX\right]_{r}\matX^{\pinv}\label{eq:w_rrr_formula}
\end{equation}
$\matW_{r}^{(\textrm{RRR})}$ is a solution (optimal point) for $\min_{\matW\in\manM_{[k,r,1]}}F_{0}(\matW$).
Since it is an optimal point, it is also a stationary point. Recall
$\grad F_{\lambda}(\tenW)=\matP_{\manM_{[k,r,1]}}\left(\nabla F_{\lambda}(\tenW)\right)$.
The orthogonal projection of some $\matA\in\R^{k\times m}$, in ambient
coordinates, is given by:
\begin{eqnarray*}
\matP_{T_{\tenW}\manM_{[k,r,1]}}(\matA) & = & \tenG\modeprod 1\matU\modeprod 2\matV+\Sigma\modeprod 1\matV_{1}\modeprod 2\matV+\Sigma\modeprod 1\matU\modeprod 2\matV_{2}\\
 & = & \matU\tenG\matV^{\T}+\matV_{1}\Sigma\matV^{\T}+\matU\Sigma\matV_{2}^{\T}
\end{eqnarray*}
for $\matW=\matU\Sigma\matV^{\T}$ and $\ten G,\matV_{1},\matV_{2}$
are given by Eqs.~(\ref{eq:proj-tan-space-g})~and~(\ref{eq:proj-tan-space-v}).
So, 
\[
\matP_{T_{\tenW}\manM_{[k,r,1]}}(\matA)=\matA\matW^{\pinv}\matW+\matW\matW^{\pinv}\matA-\matW\matW^{\pinv}\matA\matW^{\pinv}\matW
\]

We have $\nabla F_{0}(\matW)=(\matW\matX-\matY)\matX^{\T}$, hence
the Riemannian gradient is given in ambient coordinates by 
\begin{equation}
\grad F_{0}(\matW)=\matW\matX\matX^{\T}-\matW\matW^{\pinv}\matY\matX^{\T}-(\matI-\matW\matW^{\pinv})\matY\matX^{\T}\matW^{\pinv}\matW\label{eq:grad d=00003D1 lambda=00003D0}
\end{equation}
A stationary point $\matW\in\manM_{[k,r,1]}$ is such that $\grad F_{0}(\matW)=0$.
However, the equation 
\begin{equation}
\matW\matX\matX^{\T}-\matW\matW^{\pinv}\matY\matX^{\T}-(\matI-\matW\matW^{\pinv})\matY\matX^{\T}\matW^{\pinv}\matW=0\label{eq:rgrad-zero-rrr}
\end{equation}
 itself does not have a unique solution, since $\matW_{r}^{(\textrm{RRR})}$
upholds it for \emph{any} $r$. Furthermore, we will see that there
are multiple stationary points for every $r$. First, let's state
a simple observation.
\begin{prop}
\label{claim:critical d=00003D1}$\matW\matX\matX^{\T}-\matW\matW^{\pinv}\matY\matX^{\T}-(\matI-\matW\matW^{\pinv})\matY\matX^{\T}\matW^{\pinv}\matW=0$
if and only if 
\begin{equation}
\matW\matX\matX^{\T}-\matW\matW^{\pinv}\matY\matX^{\T}=0\label{eq:criticial-d1-eq1}
\end{equation}
and
\begin{equation}
(\matI-\matW\matW^{\pinv})\matY\matX^{\T}\matW^{\pinv}\matW=0\label{eq:criticial-d1-eq2}
\end{equation}
\end{prop}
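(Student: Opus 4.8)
The plan is to exploit the orthogonal-projection structure hidden in the factor $\matW\matW^{\pinv}$. Set $\matP:=\matW\matW^{\pinv}$; by the Moore--Penrose identities we have $\matP\matW=\matW$ and $\matP^{2}=\matP$, so $\matP$ is the orthogonal projector onto the column space of $\matW$ and $\matI-\matP$ is its complementary projector. The \emph{if} direction is immediate: if both displayed equations hold, then the first gives $\matW\matX\matX^{\T}-\matW\matW^{\pinv}\matY\matX^{\T}=0$ and the second gives the $(\matI-\matW\matW^{\pinv})\matY\matX^{\T}\matW^{\pinv}\matW$ term is $0$, and adding them reproduces the single equation. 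All of the content is in the \emph{only if} direction, for which I would abbreviate the three summands as $\matA:=\matW\matX\matX^{\T}$, $\matB:=\matP\matY\matX^{\T}$, and $\matC:=(\matI-\matP)\matY\matX^{\T}\matW^{\pinv}\matW$, so the hypothesis reads $\matA-\matB-\matC=0$.

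The key observation is that, as column spaces, $\matA-\matB$ and $\matC$ occupy complementary orthogonal subspaces: every column of $\matA-\matB$ lies in the range of $\matP$ (because of the leading $\matW$ and $\matP$ factors), whereas every column of $\matC$ lies in the range of $\matI-\matP$. The cleanest way to separate them is to left-multiply the hypothesis by $\matP$: from $\matP\matW=\matW$ we get $\matP\matA=\matA$; from $\matP^{2}=\matP$ we get $\matP\matB=\matB$; and from $\matP(\matI-\matP)=0$ we get $\matP\matC=0$. Hence $\matP(\matA-\matB-\matC)=\matA-\matB=0$, which is exactly Eq.~(\ref{eq:criticial-d1-eq1}). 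Substituting $\matA-\matB=0$ back into the hypothesis leaves $\matC=0$, which is Eq.~(\ref{eq:criticial-d1-eq2}). Symmetrically, left-multiplying the hypothesis by $\matI-\matP$ isolates $-\matC$ and yields Eq.~(\ref{eq:criticial-d1-eq2}) directly, with Eq.~(\ref{eq:criticial-d1-eq1}) then following by substitution.

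I do not expect a genuine obstacle here. The single conceptual step is recognizing that $\matW\matW^{\pinv}$ is an orthogonal projector and that, because of the leading projector in $\matA-\matB$ and the leading complementary projector in $\matC$, the two groups of terms live in complementary subspaces; once this is noted, one left-multiplication by $\matP$ splits the equation into its two asserted halves. The only mild care needed is verifying the three projector identities $\matP\matW=\matW$, $\matP^{2}=\matP$, and $\matP(\matI-\matP)=0$, all of which are routine consequences of the defining properties of the pseudo-inverse.
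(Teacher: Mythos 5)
Your proof is correct and follows essentially the same route as the paper: both hinge on recognizing $\matW\matW^{\pinv}$ as the orthogonal projector onto the column space of $\matW$, so that $\matW\matX\matX^{\T}-\matW\matW^{\pinv}\matY\matX^{\T}$ lies in that column space while $(\matI-\matW\matW^{\pinv})\matY\matX^{\T}\matW^{\pinv}\matW$ lies in its orthogonal complement. The paper concludes by noting that two mutually orthogonal matrices can sum to zero only if each is zero, whereas you make the same point by left-multiplying with the projector to split the equation; this is merely a different formalization of the identical idea.
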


\begin{proof}
Note that $\matI-\matW\matW^{\pinv}$ is a projector on the space
orthogonal to the column space of $\matW$. However, $\matW\matX\matX^{\T}-\matW\matW^{\pinv}\matY\matX^{\T}$
is in that column space. Thus, $\matW\matX\matX^{\T}-\matW\matW^{\pinv}\matY\matX^{\T}$
is orthogonal to $(\matI-\matW\matW^{\pinv})\matY\matX^{\T}\matW^{\pinv}\matW$.
As a consequence, their sum can be zero if and only if both are zero.
\end{proof}

\paragraph{The case of $r=1$. }

We first consider the case that $r=1$, i.e the constraint is $\tenW\in\manM_{[k,1,1]}$.
Recall that we assume that $\matX$ has full row rank, which makes
sense otherwise $\matY$is not in the range of the features.  Suppose
the rank one matrix $\matW$ is a solution to Eq.~(\ref{eq:rgrad-zero-rrr}),
which implies that both Eqs.~(\ref{eq:criticial-d1-eq1}) and (\ref{eq:criticial-d1-eq2})
hold. Since $\matW$ has rank one we can write it as
\[
\matW=\sigma\u\v^{\T}
\]
where $\sigma>0$ and both $\u$ and $\v$ have unit norm. We have
$\matW\matW^{\pinv}=\u\u^{\T}$ and $\matW^{\pinv}\matW=\v\v^{\T}$.
Eq.~(\ref{eq:criticial-d1-eq2}) now reads:
\[
(\matI-\u\u^{\T})\matY\matX^{\T}\v\v^{\T}=0
\]
Multiplying on the right by $\v$, and rearranging, we find that 
\begin{equation}
\matY\matX^{\T}\v=(\u^{\T}\matY\matX^{\T}\v)\cdot\u\label{eq:v-to-u}
\end{equation}
Denote $\z=\matY\matX^{\T}\v$. The equation reads $\z=(\u^{\T}\matY\matX^{\T}\v)\cdot\u$.
Since $\u^{\T}\matY\matX^{\T}\v$ is a scalar, Eq.~(\ref{eq:v-to-u})
shows that $\u$ and $\z$ are the same up to scaling by a scalar.
Thus, we can write $\u=\alpha\z$. Recalling that $\u$ has unit norm
we find that $\alpha=1/\TNorm{\z}$. To conclude, we find that given
$\v$, the vector $\u$ is completely determined by the equation:
\[
\u=\ITNorm{\matY\matX^{\T}\v}\matY\matX^{\T}\v
\]

Next, consider Eq.~(\ref{eq:criticial-d1-eq1}). After substituting
$\matW=\sigma\u\v^{\T}$ we have 
\[
\sigma\u\v^{\T}\matX\matX^{\T}=\u\u^{\T}\matY\matX^{\T}
\]
Multiply on the left by $\u^{\T}$ to get 
\[
\sigma\v^{\T}\matX\matX^{\T}=\u^{\T}\matY\matX^{\T}
\]
To get rid of $\u$ we substitute the equation for it from $\v$ to
get
\[
\sigma\v^{\T}\matX\matX^{\T}=\ITNorm{\matY\matX^{\T}\v}\v^{\T}\matX\matY^{\T}\matY\matX^{\T}
\]
Take transpose, and move $\sigma$ to the right side: 
\[
\matX\matX^{\T}\v=\sigma^{-1}\ITNorm{\matY\matX^{\T}\v}\matX\matY^{\T}\matY\matX^{\T}\v
\]
We see that $\v$ is a unit-norm generalized eigenvector of the matrix
pencil $(\matX\matX^{\T},\matX\matY^{\T}\matY\matX^{\T})$\footnote{A pair $(\lambda,\x)$ is a \emph{finite} generalized eigenpair of
$(\matS,\mat T)$ if $\lambda$ is finite, $\x$ is non-zero and $\matS\x=\lambda\mat T\x$
and $\mat T\x\neq0$. A pair $(\infty,\x)$ is a generalized eigenpair
if $\mat T\x=0$ but $\matS\x\neq0$. Note that in our case $\matX\matY^{\T}\matY\matX^{\T}$
is singular while $\matX\matX^{\T}$ is not, so there are non-finite
generalized eigenpairs.}.

Going backward, any \emph{finite} unit-norm generalized eigenpair
$(\gamma,\v)$ of the pencil $(\matX\matX^{\T},\matX\matY^{\T}\matY\matX^{\T})$
can be used to generate a stationary point. The point is $\matW_{\gamma,\v}=\sigma\u\v^{\T}$
where 
\begin{equation}
\sigma=\gamma^{-1}\ITNorm{\matY\matX^{\T}\v}>0\label{eq:d=00003D1,r=00003D1 sigma formula}
\end{equation}
(where we used the fact that $\matX$ has full row rank hence $\gamma>0$)
and 
\[
\u=\ITNorm{\matY\matX^{\T}\v}\matY\matX^{\T}\v
\]
obtaining 
\[
\matW_{\gamma,\v}=\frac{1}{\gamma\TNormS{\matY\matX^{\T}\v}}\matY\matX^{\T}\v\v^{\T}
\]

We now relate the generalized eigenpairs of the matrix pencil $(\matX\matX^{\T},\matX\matY^{\T}\matY\matX^{\T})$
to singular triplets of $\matY\matX^{+}\matX$, which is exactly the
matrix for which we take the best rank $r$ approximation in the formula
for $\matW_{r}^{(\textrm{RRR})}$ (Eq. (\ref{eq:w_rrr_formula})).
Denote $\matA\coloneqq\matY\matX^{+}\matX$. Let $(\gamma,\v)$ be
a finite eigenpair of the matrix pencil $(\matX\matX^{\T},\matX\matY^{\T}\matY\matX^{\T})$.
We have 
\begin{eqnarray*}
\matA^{\T}\matA\matX^{\T}\v & = & \matX^{\T}(\matX\matX^{\T})^{-1}(\matX\matY^{\T}\matY\matX^{\T})\v\\
 & = & \matX^{\T}(\matX\matX^{\T})^{-1}\left(\gamma^{-1}\matX\matX^{\T}\right)\v\\
 & = & \gamma^{-1}\matX^{\T}\v
\end{eqnarray*}
where in the first equality we used the fact that $\matX^{\pinv}\matX\matX^{\T}=\matX^{\T}$
(due to $\matX$ having full row rank). We see that $(\gamma^{-1},\matX^{\T}\v$)
is an eigenpair of $\matA^{\T}\matA$. Therefore, $\sigma_{\matA,\gamma}\coloneqq1/\sqrt{\gamma}$
is a singular value of $\matA$, and $\tilde{\v}=\ITNorm{\matX^{\T}\v}\matX^{\T}\v$
is the corresponding right singular vector. Similarly, we have 

\[
\matA\matA^{\T}\matY\matX^{\T}\v=\gamma^{-1}\matY\matX^{\T}\v
\]
and $\tilde{\u}=\ITNorm{\matY\matX^{\T}\v}\matY\matX^{\T}\v$ is a
left singular vector of $\matA$ corresponding to the singular value
$1/\sqrt{\gamma}$ . Note that 
\begin{eqnarray*}
\TNormS{\matY\matX^{\T}\v} & = & \v^{\T}\matX\matY^{\T}\matY\matX^{\T}\v\\
 & = & \v^{\T}\left(\gamma^{-1}\matX\matX^{\T}\right)\v\\
 & = & \gamma^{-1}\TNormS{\matX^{\T}\v}
\end{eqnarray*}
Thus, 
\begin{eqnarray*}
\sigma_{\matA,\gamma}\tilde{\u}\tilde{\v}^{\T}\matX^{\pinv} & = & \frac{\sigma_{\matA,\gamma}\matY\matX^{\T}\v\v^{\T}\matX\matX^{\pinv}}{\TNorm{\matY\matX^{\T}\v}\TNorm{\matX^{\T}\v}}\\
 & = & \frac{\sigma_{\matA,\gamma}\matY\matX^{\T}\v\v^{\T}}{\sqrt{\gamma}\TNormS{\matY\matX^{\T}\v}}\\
 & = & \frac{\matY\matX^{\T}\v\v^{\T}}{\gamma\TNormS{\matY\matX^{\T}\v}}=\matW_{\gamma,\v}
\end{eqnarray*}
The same process can we worked backwards. So, for every singular triplet
$(\tilde{\sigma},\tilde{\u},\tilde{\v})$ of $\matA$ we have 
\[
\tilde{\sigma}\tilde{\u}\tilde{\v}^{\T}\matX^{\pinv}=\matW_{1/\tilde{\sigma}^{2},\v}
\]
 where $\v$ is the unique $\v$ such that $\tilde{\v}=\ITNorm{\matX^{\T}\v}\matX^{\T}\v$.
This leads to the following lemma.
\begin{lem}
\label{lem:RRR-min-eigenvalue}The RRR solution for the case $r=1$
and full row rank $\matX$, is equal to $\matW_{\gamma,\v}$ where
$(\gamma,\v)$ is the eigenpair of the pencil $(\matX\matX^{\T},\matX\matY^{\T}\matY\matX^{\T})$
with minimum eigenvalue.
\end{lem}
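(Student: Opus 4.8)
The plan is to read off the $r=1$ RRR solution from the closed-form formula, identify which singular triplet of $\matA\coloneqq\matY\matX^{\pinv}\matX$ the best rank-one approximation selects, and then translate that triplet back into an eigenpair of the pencil using the correspondence established in the paragraphs immediately preceding the statement.

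First I would recall from Eq.~(\ref{eq:w_rrr_formula}) that for $r=1$ the solution is $\matW_{1}^{(\textrm{RRR})}=\left[\matA\right]_{1}\matX^{\pinv}$. By the Eckart--Young theorem the best rank-one approximation is $\left[\matA\right]_{1}=\tilde{\sigma}_{\max}\tilde{\u}\tilde{\v}^{\T}$, where $(\tilde{\sigma}_{\max},\tilde{\u},\tilde{\v})$ is the singular triplet of $\matA$ whose singular value is \emph{maximal}; the standing assumption that the singular values are distinct makes this triplet unique up to sign. Applying the correspondence already derived, namely $\tilde{\sigma}\tilde{\u}\tilde{\v}^{\T}\matX^{\pinv}=\matW_{1/\tilde{\sigma}^{2},\v}$ for every singular triplet of $\matA$ (with $\v$ the unique generalized eigenvector satisfying $\tilde{\v}=\ITNorm{\matX^{\T}\v}\matX^{\T}\v$), the leading triplet yields $\matW_{1}^{(\textrm{RRR})}=\matW_{\gamma,\v}$ with $\gamma=1/\tilde{\sigma}_{\max}^{2}$.

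Finally I would note that $\gamma=1/\tilde{\sigma}^{2}$ is strictly decreasing in $\tilde{\sigma}>0$, so the maximal singular value $\tilde{\sigma}_{\max}$ corresponds precisely to the minimal finite eigenvalue $\gamma$ of the pencil $(\matX\matX^{\T},\matX\matY^{\T}\matY\matX^{\T})$; since $\tilde{\sigma}_{\max}>0$ this $\gamma$ is finite and positive, so it genuinely lies among the finite eigenpairs covered by the correspondence (the non-finite eigenpairs, corresponding to $\gamma=\infty$, are never minimal). This completes the identification. There is essentially no obstacle left: all the analytical content was absorbed into building the singular-triplet/eigenpair dictionary, and the only point requiring care is the monotone reversal between the \emph{largest} singular value that Eckart--Young selects for the optimal rank-one fit and the \emph{smallest} eigenvalue of the pencil.
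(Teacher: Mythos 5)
Your proof is correct and follows essentially the same route as the paper: both read off $\matW_{1}^{(\textrm{RRR})}=\left[\matA\right]_{1}\matX^{\pinv}$, invoke the singular-triplet/eigenpair correspondence built just before the lemma, and conclude via the monotone reversal $\gamma=1/\tilde{\sigma}^{2}$ that the top singular value of $\matA$ matches the minimal eigenvalue of the pencil. Your added remark that the infinite eigenpairs can never be minimal is a small point of extra care not spelled out in the paper, but otherwise the arguments coincide.
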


\begin{proof}
We have $\matW_{1}^{(\textrm{RRR})}=\left[\matA\right]_{1}\matX^{\pinv}=\sigma_{\max}\tilde{\u}\tilde{\v}^{\T}\matX^{\pinv}$where
$\tilde{\u}$ and $\tilde{\v}$ are the left and right singular values
of $\matA$ corresponding to the largest singular value of $\matA$:
$\sigma_{\max}(\matA)$. We see that $\matW_{1}^{(\textrm{RRR})}=\matW_{1/\sigma_{\max}^{2},\v}$.
The eigenvalue $1/\sigma_{\max}^{2}$ must be the minimal eigenvalue,
since $\sigma_{\max}$ is the maximum singular value.
\end{proof}

\paragraph{Classification of the stationary points for $r=1$.}

Every singular triplet of $\matY\matX^{+}\matX$ corresponds to a
stationary point, but only the singular triplet corresponding to the
maximum singular value is related to the RRR solution. Obviously,
the RRR solution is a global minimum of $F_{0}(\matW)$ subject to
$\matW\in\manM_{[k,1,1]}$. We now show that this point is also the
only \emph{local} minimum. This is important for justifying the use
of Riemannian optimization, which almost surely converges to a stable
stationary point, and those are only local minimizers (for minimization
points). To show that a point is either a local minimizer or not,
we inspect the spectrum of the Riemannian Hessian of the stationary
point of interest, and establish whether it is positive semi-definite
(in which case the point is a local minimum), or not (in which case
it is a local maximum or a saddle point). To that end, we inspect
the Rayleigh quotient, relying on the Courant-Fischer theorem. 

In \cite{absil2013extrinsic-riem-hess}, the following formula for
the Riemannian Hessian operating on some direction $\matZ\in T_{\matW}\manM_{[k,1,1]}$
is given by: 
\[
\Hess F_{0}(\matW)[\matZ]=\matP_{T_{\tenW}\manM_{[k,1,1]}}\left(\nabla^{2}F_{0}(\matW)\right)[\matZ]+\matP_{T_{\tenW}\manM_{[k,1,1]}}\matD_{Z}\matP\nabla F_{0}(\matW)
\]
where the second term is further shown, for some $\matA\in T_{\matW}^{\perp}\manM_{[k,1,1]}$,
to be equal to
\begin{equation}
\matP_{T_{\matW}\manM_{[k,1,1]}}\matD_{Z}\matP\matA=\matA\matZ^{\T}(\matW^{+})^{\T}+(\matW^{+})^{\T}\matZ^{\T}\matA\label{eq:weingraten map d1 r1 pinvW}
\end{equation}
 For some $\matW=\sigma\u\v^{\T}\neq0$, we have $\matW^{+^{\T}}=\sigma^{-1}\u\v^{\T}=\sigma^{-2}\matW$,
thus the previous equation becomes
\begin{equation}
\matP_{T_{\matW}\manM_{[k,1,1]}}\matD_{Z}\matP\matA=\sigma^{-2}\left(\matA\matZ^{\T}\matW+\matW\matZ^{\T}\matA\right)\label{eq:weingraten map d1 r1 W}
\end{equation}
At a stationary point $\matW$, the Riemannian gradient vanishes.
Since the Riemannian gradient is equal to the Euclidean gradient projected
to the tangent space, we see that the Euclidean gradient is in $T_{\matW}^{\perp}\manM_{[k,1,1]}$,
so we can apply Eq.~(\ref{eq:weingraten map d1 r1 W}) to obtain
that at a stationary point $\matW$: 
\begin{align*}
\Hess F_{0}(\matW)[\matZ] & =\matP_{T_{\matW}\manM_{[k,1,1]}}\left(\nabla^{2}F_{0}(\matW)\right)[\matZ]+\sigma^{-2}\left(\nabla F_{0}(\matW)\matZ^{\T}\matW+\matW\matZ^{\T}\nabla F_{0}(\matW)\right)
\end{align*}

It is easier to analyze the Rayleigh quotient under the assumption
that $\matX$ has orthogonal rows. To justify that we can make this
assumption, note that we can consider a surrogate problem with orthonormal
rows in $\matX$, creating a one-to-one correspondence between the
stationary points, each having the same classification (as a local
minimum or not). Specifically, for a non orthogonal $\matX$, let
$\X=\mathbf{\matL\Q}$ such that $\Q$ has orthonormal rows and $\mathbf{\matL}$
is lower triangular and invertible. We can reformulate Eq.~(\ref{eq:prob-multi-unconstrainted}):
\[
\matW_{0}^{\star}\coloneqq\argmin_{\matW\in\manM_{[k,1,1]}}\FNormS{\matW\matX-\matY}=\argmin_{\matW\in\manM_{[k,1,1]}}\FNormS{\matW\matL\Q-\matY}
\]
Define $\mathit{\mathcal{\matW_{\mathrm{L}}}:=\matW\mathbf{\matL}}$
to obtain
\begin{equation}
\matW_{0,\matL}^{*}\coloneqq\argmin_{\matW\in\manM_{[k,1,1]}}\FNormS{\matW\Q-\matY}\label{eq: problem d1 ortho}
\end{equation}
and we can retrieve the solution easily by$\mathcal{\matW}_{0}^{*}=\matW_{0,\matL}^{*}\matL^{-1}$.
This transformation does not change the topology of the original problem;
see Appendix \ref{appendix: Invariance under orthogonalization}.
So a local minimum stays a local minimum. This transformation does
not change the eigenvalues (only the eigenvectors), so the order of
the generalized eigenvalues remains the same. 

Recall that every stationary point $\sigma\u\v^{\T}$ corresponds
to a generalized eigenpair $(\gamma,\v)$ of the matrix pencil $(\matX\matX^{\T},\matX\matY^{\T}\matY\matX^{\T})$.
Without loss of generality, we can assume that the stationary point
$\sigma\u\v^{\T}$ that corresponds to the eigenpair with the minimal
eigenvalue satisfies $\sigma=1$. This is because we can rescale the
problem $\min\FNorm{\matW\matX-\matY}^{2}\longrightarrow\min\FNorm{\matW\matX-c\matY)}^{2}$
for any scalar $c\neq0$ obtaining a one-to-one correspondence of
the stationary points with the transformation $\matW\to c\matW$ without
changing their classification and ordering among the stationary points
based on the eigenpairs of $(\matX\matX{}^{\T},c^{2}\matX\matY^{\T}\matY\matX{}^{\T})$.
Since $r=1$, a stationary point can be written in the form of $\sigma\u\v^{\T}$.
Scaling by $c$ changes only $\sigma:$ ($\sigma\to c\sigma$), and
does not change $\u$ and $\v$. By choosing $c$ carefully, we can
assure that $\sigma=1$ for one desired stationary point. If the rows
of $\matX$ are orthonormal, e.g. when solving Eq.~(\ref{eq: problem d1 ortho}),
the corresponding generalized eigenvalue $\gamma$ is equal to $1$
as well. To see this, we have $\gamma^{-1/2}=\norm{\matY\Q^{\T}\v}_{2}$
and $\sigma=\gamma^{-1}\ITNorm{\matY\Q^{\T}\v}=\gamma^{-1/2}=1$.

If we first orthogonlize and then rescale, we have both $\sigma=1$
and $\gamma=1$. Considering Eq.~(\ref{eq: problem d1 ortho}), there
is a one--to-one correspondence between a stationary point and a
finite unit-norm \textit{generalized} eigenpair of $(\I_{m},\Q\matY^{\T}\matY\Q^{\T})$.
Such eigenpairs can be viewed as \textit{non-zero} unit-norm \textit{classic}
eigenpairs of the matrix $\Q\matY^{\T}\matY\Q^{\T}$. The following
lemma and theorem concludes our discussion for the case $\manM_{[k,1,1]}.$
\begin{lem}
\label{lem:<Z, H=00005BZ=00005D<0 d=00003D1 r=00003D1}For any stationary
point $\W'\neq\matW_{1}^{(\textrm{RRR})}$, there exists some $\matZ\in T_{\matW'}\manM_{[k,1,1]}$
such that $\left\langle \matZ,\matH_{\matW'}[\matZ]\right\rangle <0$,
where $\left\langle \cdot,\cdot\right\rangle $ is the standard matrix
inner product. 
\end{lem}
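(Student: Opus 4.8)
The plan is to show that a non-optimal stationary point $\W'$ fails the second-order condition by exhibiting one explicit tangent direction along which the Hessian quadratic form $\langle\matZ,\matH_{\W'}[\matZ]\rangle$ is negative; geometrically, the direction perturbs the right singular vector of $\W'$ toward the dominant eigenvector of a fixed symmetric matrix. First I would invoke the orthogonalization reduction already established (the one-to-one, classification-preserving correspondence from the discussion preceding the lemma and Appendix~\ref{appendix: Invariance under orthogonalization}), so that without loss of generality $\matX=\Q$ with $\Q\Q^{\T}=\I_{m}$. In this setting I set $\B\coloneqq\matY\Q^{\T}$ and $\M\coloneqq\B^{\T}\B=\Q\matY^{\T}\matY\Q^{\T}$, a symmetric positive semidefinite matrix whose eigenpairs correspond to the stationary points. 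For the stationary point $\W'=\sigma\u\v^{\T}$ one has $\M\v=\mu\v$ with $\sigma=\sqrt{\mu}$ and $\u=\sigma^{-1}\B\v$ (these follow from the $r=1$ stationarity analysis, specialized to $\matX=\Q$).

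Next I would record the normal-space identities for the Euclidean gradient $\matG\coloneqq\nabla F_{0}(\W')=\W'-\B$. A short computation gives $\matG\v=0$ and $\matG^{\T}\u=0$ (confirming $\matG$ lies in the normal space, as it must at a stationary point since the Riemannian gradient vanishes), together with the key simplification $\matG\p=-\B\p$ for \emph{every} $\p$ orthogonal to $\v$ (because $\v^{\T}\p=0$ kills the rank-one part of $\matG$). It is this last identity that links a tangent perturbation to the spectrum of $\M$. Using the tangent-space parametrization $\matZ=\alpha\u\v^{\T}+\u\p^{\T}+\q\v^{\T}$ with $\p\perp\v$ and $\q\perp\u$, I would then evaluate the two pieces of the Hessian. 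Since $\matX\matX^{\T}=\I_{m}$ the Euclidean Hessian is the identity map, and as $\matZ$ already lies in the tangent space the linear term contributes exactly $\FNormS{\matZ}=\alpha^{2}+\TNormS{\p}+\TNormS{\q}$ (the three blocks being Frobenius-orthogonal). The curvature term $\sigma^{-2}(\matG\matZ^{\T}\W'+\W'\matZ^{\T}\matG)$ from Eq.~(\ref{eq:weingraten map d1 r1 W}), paired against $\matZ$, collapses using $\matG\v=0$ and $\matG^{\T}\u=0$ to $2\sigma^{-1}\q^{\T}\matG\p$. Hence $\langle\matZ,\matH_{\W'}[\matZ]\rangle=\alpha^{2}+\TNormS{\p}+\TNormS{\q}+2\sigma^{-1}\q^{\T}\matG\p$.

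Finally I would build the descent direction. Since $\W'\neq\matW_{1}^{(\textrm{RRR})}$, Lemma~\ref{lem:RRR-min-eigenvalue} shows that $\mu$ is \emph{not} the largest eigenvalue of $\M$; let $(\mu_{\max},\v_{\max})$ be the top eigenpair, so $\mu_{\max}>\mu$. I take $\alpha=0$, $\p=\v_{\max}$ (orthogonal to $\v$, since eigenvectors for distinct eigenvalues of $\M$ are orthogonal, distinctness being guaranteed by the uniqueness assumption), and $\q=\sigma^{-1}\B\v_{\max}$ (orthogonal to $\u$ because $\u^{\T}\B\v_{\max}=(\B^{\T}\u)^{\T}\v_{\max}=\sigma\v^{\T}\v_{\max}=0$). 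Using $\matG\p=-\B\p$ and $\TNormS{\B\v_{\max}}=\v_{\max}^{\T}\M\v_{\max}=\mu_{\max}$, the quadratic form evaluates to $1+\mu_{\max}/\mu-2\mu_{\max}/\mu=1-\mu_{\max}/\mu<0$, which establishes the lemma.

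The main obstacle I anticipate is the middle step: carefully reducing the Weingarten/curvature term to the clean bilinear expression $2\sigma^{-1}\q^{\T}\matG\p$ via the identities $\matG\v=0$ and $\matG^{\T}\u=0$, and in particular noticing that the constraint $\p\perp\v$ converts $\matG\p$ into $-\B\p$, thereby exposing the full spectrum of $\M$ to the perturbation and making the larger eigenvalue accessible. The remaining work—the orthogonality bookkeeping for $\p,\q$ and the one-parameter optimization over the scale of $\q$ (which selects the factor $\sigma^{-1}$ and yields the bound $1-\mu_{\max}/\mu$)—is routine.
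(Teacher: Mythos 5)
Your proposal is correct and follows essentially the same route as the paper's proof: reduce to orthonormal rows of $\matX$, write the Hessian quadratic form as (linear term) $+$ (Weingarten curvature term) at the stationary point, and plug in a tangent direction built from the top eigenpair of $\Q\matY^{\T}\matY\Q^{\T}$ (equivalently, the RRR solution's singular vectors) to make the form negative. The only differences are cosmetic bookkeeping: you derive the quadratic form from the identities $\matG\v=0$, $\matG^{\T}\u=0$ rather than quoting the paper's general expression, and you normalize the perturbation $\q=\sigma^{-1}\B\v_{\max}$ instead of rescaling the problem so that $\sigma=1$, yielding $1-\mu_{\max}/\mu<0$ in place of the paper's $2\sigma'(\sigma'-1)<0$.
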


\begin{proof}
We assume that the rows of $\matX$ are orthonormal, and the RRR solution
corresponds to the eigenvalue $\sigma=1$\textcolor{blue}{. }See the
above discussion for justification of these assumptions. Let $(\gamma,\v)$
be the eigenpair corresponding to the RRR solution. Note that $\matW_{1}^{(\textrm{RRR})}=\matW_{\gamma,\v}$
due to Lemma~\ref{lem:RRR-min-eigenvalue}. Let $(\gamma',\v')$
be some other eigenpair, and we denote the stationary point corresponding
to this eigenpair by $\W':=\matW_{\gamma',\v'}=\sigma'\u'\v'{}^{\T}$.
We have $\gamma^{-1}\v=\Q\matY^{\T}\matY\Q^{\T}\v$ and $\gamma'{}^{-1}\v'=\Q\matY^{\T}\matY\Q^{\T}\v'$.
Since $\Q\matY^{\T}\matY\Q^{\T}$ is a real symmetric matrix, its
eigenvectors $\v,\v'$ are mutually orthonormal. 

For any $\matZ\in T_{\W'}\manM_{[k,1,1]}$, we can write $\Z=\sigma'\dot{\u}\v'{}^{\T}+\dot{\sigma}\u'\v'{}^{\T}+\sigma\u'\dot{\v}^{\T}$
\cite{Kressner2016precond,KochLuibch10tensor-manifold} where $\dot{\u}^{\T}\u'=0=\dot{\v}^{\T}\v'$.
$\dot{\sigma}$ arbitrary. The general expression (assumptions free)
for $\left\langle \matZ,\matH_{\matW'}[\matZ]\right\rangle $ is
\begin{equation}
\left\langle \matZ,\matH_{\matW'}[\matZ]\right\rangle =\left(\sigma'{}^{2}\norm{\dot{\u}}_{2}^{2}+\dot{\sigma}^{2}\right)\left(\v'{}^{\T}\X\X^{\T}\v'\right)+2\sigma\dot{'\sigma}\left(\v'{}^{\T}\X\X^{\T}\dot{\v}\right)+\sigma'{}^{2}\left(\dot{\v}^{\T}\X\X^{\T}\dot{\v}\right)-2\sigma'\left(\dot{\u}^{\T}\matY\matX^{\T}\dot{\v}\right)\label{eq:d1 r1 <Z,H=00005BZ=00005D>}
\end{equation}
Applying the assumptions of $\matX$'s orthogonality and $\sigma=1$,
we now have

\[
\left\langle \matZ,\matH_{W'}[\matZ]\right\rangle =\left(\sigma'{}^{2}\norm{\dot{\u}}_{2}^{2}+\dot{\sigma}^{2}\right)+2\sigma'\dot{\sigma}\left(\v'{}^{\T}\dot{\v}\right)+\sigma'{}^{2}\norm{\dot{\v}}_{2}^{2}-2\sigma'\left(\dot{\u}^{\T}\matY\Q^{\T}\dot{\v}\right)
\]
We prove the claim by choosing $\Z$ such that $\dot{\sigma}=0$,
$\dot{\u}=\ITNorm{\matY\Q^{\T}\v}\matY\Q^{\T}\v=\u$ and $\dot{\v}=\v$,
thus 

\begin{align*}
\left\langle \matZ,\matH_{\matW'}[\matZ]\right\rangle  & =\sigma'{}^{2}\norm{\dot{\u}}_{2}^{2}+\sigma'{}^{2}\norm{\dot{\v}}_{2}^{2}-2\sigma'\left(\dot{\u}^{\T}\matY\Q^{\T}\dot{\v}\right)\\
 & =2\sigma'{}^{2}-2\sigma'\ITNorm{\matY\Q^{\T}\v}\v^{\T}\Q\matY^{\T}\matY\Q^{\T}\v\\
 & =2\sigma'{}^{2}-2\sigma'\TNorm{\matY\Q^{\T}\v}\\
 & =2\sigma'\left(\sigma'-\TNorm{\matY\Q^{\T}\v}\right)
\end{align*}
We have that $\gamma^{-1/2}=\norm{\matY\Q^{\T}\v}_{2}$ and $\gamma'{}^{-1/2}=\norm{\matY\Q^{\T}\v'}_{2}$
and from our assumption, $\sigma'<\sigma=\gamma^{-1}\ITNorm{\matY\Q^{\T}\v}=\gamma^{-1/2}=1$,
thus $\norm{\matY\Q^{\T}\v}_{2}=1$, therefore 

\[
2\sigma'\left(\sigma'-\left\Vert \matY\Q^{\T}\v\right\Vert _{2}\right)=2\sigma'\left(\sigma'-1\right)<0
\]
and finally we $\left\langle \matZ,\matH_{W}[\matZ]\right\rangle <0$
since $\sigma'<1$.
\end{proof}
\begin{thm}
The Reduced Rank Regression solution to the problem is the only strict
local minimum of the problem.
\end{thm}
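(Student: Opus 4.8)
The plan is to characterize the strict local minima through Riemannian stationary points, exploiting the bijection already established between the stationary points of $F_0$ on $\manM_{[k,1,1]}$ and the finite generalized eigenpairs $(\gamma,\v)$ of the pencil $(\matX\matX^\T,\matX\matY^\T\matY\matX^\T)$, with $\matW_1^{(\textrm{RRR})}$ corresponding to the eigenpair of minimal eigenvalue (Lemma~\ref{lem:RRR-min-eigenvalue}). Since every strict local minimum is in particular a stationary point whose Riemannian Hessian is positive definite, it suffices to (i) discard every stationary point other than $\matW_1^{(\textrm{RRR})}$ as a candidate, and (ii) verify that $\matW_1^{(\textrm{RRR})}$ genuinely is a strict local minimum.

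Step (i) is immediate from Lemma~\ref{lem:<Z, H=00005BZ=00005D<0 d=00003D1 r=00003D1}: for every stationary point $\W'\neq\matW_1^{(\textrm{RRR})}$ it produces a tangent direction $\matZ$ with $\langle\matZ,\matH_{\W'}[\matZ]\rangle<0$, so the Hessian is indefinite there and $\W'$ is a saddle point, hence not a local minimum. This already shows that $\matW_1^{(\textrm{RRR})}$ is the only possible strict local minimum.

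Step (ii) is the main work, and I would carry it out by showing that the Riemannian Hessian at $\matW_1^{(\textrm{RRR})}$, which I write as $\matH$, is positive definite. Reusing the general quadratic-form expression Eq.~(\ref{eq:d1 r1 <Z,H=00005BZ=00005D>}) together with the reductions already justified (orthonormalize the rows of $\matX$ so that $\matX\matX^\T=\matI_m$, then rescale so that the minimal eigenpair has $\sigma=\gamma=1$, whence $\TNorm{\matY\Q^\T\v}=1$), a tangent vector $\matZ=\dot\u\v^\T+\dot\sigma\,\u\v^\T+\u\dot\v^\T$ with $\dot\u^\T\u=\dot\v^\T\v=0$ collapses the formula (the $\dot\sigma$ cross term vanishes since $\dot\v\perp\v$ and $\matX\matX^\T=\matI_m$) to
\[
\langle\matZ,\matH[\matZ]\rangle=\TNormS{\dot\u}+\dot\sigma^2+\TNormS{\dot\v}-2\dot\u^\T\matY\Q^\T\dot\v .
\]
Because $\v$ is the \emph{top} eigenvector of the symmetric matrix $\Q\matY^\T\matY\Q^\T$ (with eigenvalue $\TNormS{\matY\Q^\T\v}=1$) and $\dot\v\perp\v$, the distinct-singular-values hypothesis yields the strict spectral gap $\TNorm{\matY\Q^\T\dot\v}\le\sigma_2\TNorm{\dot\v}$ with $\sigma_2<1$ the second-largest singular value. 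Bounding the cross term by Cauchy--Schwarz and AM--GM then gives
\[
\langle\matZ,\matH[\matZ]\rangle\ge\dot\sigma^2+(1-\sigma_2)\bigl(\TNormS{\dot\u}+\TNormS{\dot\v}\bigr)>0
\]
unless $\matZ=0$, so the Hessian is positive definite and $\matW_1^{(\textrm{RRR})}$ is a strict local minimum.

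The crux is step (ii): whereas Lemma~\ref{lem:<Z, H=00005BZ=00005D<0 d=00003D1 r=00003D1} only needs the sign $\sigma'<1$ to exhibit a descent direction at the other stationary points, proving a strict positive lower bound here requires quantitatively using the gap between the largest and second-largest eigenvalues of $\Q\matY^\T\matY\Q^\T$---precisely where the assumption that $\matY\matX^+\matX$ has distinct singular values is essential. Combining steps (i) and (ii) yields that $\matW_1^{(\textrm{RRR})}$ is the unique strict local minimum.
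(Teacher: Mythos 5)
Your proposal is correct, and its step (i) \emph{is} the paper's entire proof: the paper establishes the theorem by invoking Lemma~\ref{lem:<Z, H=00005BZ=00005D<0 d=00003D1 r=00003D1} to exhibit, at every stationary point other than $\matW_{1}^{(\textrm{RRR})}$, a tangent direction on which the Hessian quadratic form is negative, and concludes from the Rayleigh quotient that no such point is a local minimum. Where you genuinely differ is step (ii): the paper never verifies second-order positive definiteness at $\matW_{1}^{(\textrm{RRR})}$ itself; it relies on the observation, stated just before the lemma, that the RRR solution is the global minimum of $F_{0}$ on $\manM_{[k,1,1]}$, which together with the standing uniqueness (distinct singular values) assumption already makes it a strict global, hence strict local, minimum. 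Your spectral-gap argument is a valid, self-contained substitute: at the orthogonalized and rescaled RRR point the quadratic form of Eq.~(\ref{eq:d1 r1 <Z,H=00005BZ=00005D>}) indeed collapses to $\TNormS{\dot{\u}}+\dot{\sigma}^{2}+\TNormS{\dot{\v}}-2\dot{\u}^{\T}\matY\Q^{\T}\dot{\v}$, and since $\dot{\v}\perp\v$ with $\v$ the top eigenvector of $\Q\matY^{\T}\matY\Q^{\T}$, Cauchy--Schwarz and AM--GM give the bound $\dot{\sigma}^{2}+(1-\sigma_{2})\left(\TNormS{\dot{\u}}+\TNormS{\dot{\v}}\right)>0$ for $\matZ\neq0$, so the Riemannian Hessian is positive definite there. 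What your route buys: an explicit quantitative certificate of strictness that pinpoints exactly where the distinct-singular-values hypothesis enters (the gap $\sigma_{2}<1$), and a direct proof of Hessian positive definiteness, which is the property actually relevant for stability of Riemannian first-order methods; it also makes rigorous the paper's subsequent Remark that a repeated top singular value degrades this to mere semi-definiteness. What the paper's route buys is brevity: global optimality of the RRR solution comes for free from Eckart--Young, so no second-order computation at the optimum is needed.
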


\begin{proof}
Let $\W\neq\matW_{1}^{(\textrm{RRR})}$ be a critical point and $\matZ\in T_{\matW}\manM_{[k,1,1]}$.
Considering the Rayleigh quotient of the Riemannian Hessian at critical
points, when viewed as a linear operator on the tangent space:
\[
R(\matH_{W}[\matZ],\matZ)=\frac{\left\langle \matZ,\matH_{W}[\matZ]\right\rangle }{\left\langle \matZ,\matZ\right\rangle }
\]
Since $\matZ\neq0$, we have $\left\langle \matZ,\matZ\right\rangle =\Trace{\matZ^{\T}\matZ}>0$.
From Lemma \ref{lem:<Z, H=00005BZ=00005D<0 d=00003D1 r=00003D1},
for any such $\matW$, there exists some $\matZ\in T_{\matW}\manM_{[k,1,1]}$
such that $\left\langle \matZ,\matH_{W}[\matZ]\right\rangle <0,$
therefore $R(\matH_{W}[\matZ],\matZ)<0$. Then $\matW$ is not a local
minimum of $F(\matW)$, concluding that $\matW_{1}^{(\textrm{RRR})}$
is the only strict local minimum to the problem. 
\end{proof}
\begin{rem}
We see from the proof that if the the top singular value of $\matA$
has multiplicity larger than $1$, then for each corresponding stationary
point the Hessian will be strictly positive semi-definite, and the
point is only a local minimum, and not a strict local minimum. All
other stationary points with singular value smaller than the maximum,
are not local minimum. 
\end{rem}

\paragraph{Classification of stationary points for $r>1$.}

Consider the problem with $r>1$ and $\lambda=0$, i.e. the constraint
manifold is $\manM_{[k,r,1]}.$ The following lemma characterizes
stationary points of HORRR for this case, and the theorem afterwards
shows that the RRR solution is the only strict local minimum. 
\begin{lem}
\label{lem:sum of critical}A point $\W\in\manM_{[k,r,1]}$ is stationary
point if and only if we can write $\matW=\sum_{i=1}^{r}\matW_{i}$
where each $\matW_{i}\in\manM_{[k,1,1]}$ is a stationary point for
$r=1$.
\end{lem}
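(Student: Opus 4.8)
The plan is to prove both implications after first reducing to the case where $\matX$ has orthonormal rows. As in the discussion preceding Lemma~\ref{lem:<Z, H=00005BZ=00005D<0 d=00003D1 r=00003D1}, I would factor $\matX=\matL\matQ$ with $\matQ$ having orthonormal rows and $\matL$ lower triangular and invertible, and pass to the surrogate coefficient $\matW\mapsto\matW\matL$. By the invariance discussed in Appendix~\ref{appendix: Invariance under orthogonalization} this map is a bijection that preserves stationarity and rank, sends rank-one matrices to rank-one matrices, and commutes with summation; hence a decomposition $\matW=\sum_i\matW_i$ into rank-one stationary points exists for the original problem if and only if the corresponding decomposition exists for the surrogate. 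So I may assume throughout that $\matX\matX^{\T}=\matI_m$ and $\matX^{\pinv}=\matX^{\T}$. I write the thin SVD $\matW=\matU\Sigma\matV^{\T}=\sum_{i=1}^{r}\sigma_i\u_i\v_i^{\T}$ with $\matU^{\T}\matU=\matV^{\T}\matV=\matI_r$ and $\Sigma=\diag(\sigma_1,\dots,\sigma_r)$, and recall that by Proposition~\ref{claim:critical d=00003D1} stationarity of $\matW$ is equivalent to the two conditions~\eqref{eq:criticial-d1-eq1} and~\eqref{eq:criticial-d1-eq2}.

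For the forward implication I would start from a rank-$r$ stationary $\matW$ and read off the two conditions in the singular bases. Left-multiplying~\eqref{eq:criticial-d1-eq1} by $\matU^{\T}$ and using $\matX\matX^{\T}=\matI_m$ yields $\matU^{\T}\matY\matX^{\T}=\Sigma\matV^{\T}$, whose $i$-th row is exactly condition~\eqref{eq:criticial-d1-eq1} for $\matW_i=\sigma_i\u_i\v_i^{\T}$. Right-multiplying~\eqref{eq:criticial-d1-eq2} by $\matV$ and substituting the previous identity gives $\matY\matX^{\T}\matV=\matU\matU^{\T}\matY\matX^{\T}\matV=\matU\Sigma$; here the orthonormality of the rows of $\matX$ is \emph{essential}, since it collapses $\matV^{\T}\matX\matX^{\T}\matV$ to $\matI_r$. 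The $i$-th column reads $\matY\matX^{\T}\v_i=\sigma_i\u_i$, which is condition~\eqref{eq:criticial-d1-eq2} for $\matW_i$. Thus each SVD rank-one summand is itself a stationary point of the $r=1$ problem, giving the desired decomposition.

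For the converse I would use the characterization of the $r=1$ stationary points established earlier: each $\matW_i=\sigma_i\u_i\v_i^{\T}$ has $\v_i$ a unit eigenvector of $\matB\coloneqq\matX\matY^{\T}\matY\matX^{\T}$ (with eigenvalue $\sigma_i^2$ in the orthonormal-rows normalization) and $\u_i=\sigma_i^{-1}\matY\matX^{\T}\v_i$, these being the singular triplets of $\matY\matX^{\pinv}\matX$ whose singular values we have assumed distinct. Distinctness forces the $\sigma_i$ to be pairwise different, so the $\v_i$, being eigenvectors of the symmetric $\matB$ for distinct eigenvalues, are orthonormal; then $\u_i^{\T}\u_j=\sigma_i^{-1}\sigma_j^{-1}\v_i^{\T}\matB\v_j=\delta_{ij}$ shows the $\u_i$ are orthonormal as well. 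Consequently $\matW=\sum_i\matW_i=\matU\Sigma\matV^{\T}$ is a genuine thin SVD, and the per-component relations aggregate to $\matU^{\T}\matY\matX^{\T}=\Sigma\matV^{\T}$ and $\matY\matX^{\T}\matV=\matU\Sigma$. Substituting these into~\eqref{eq:criticial-d1-eq1} (giving $\matW\matX\matX^{\T}=\matW=\matU\matU^{\T}\matY\matX^{\T}$) and into~\eqref{eq:criticial-d1-eq2} (giving $(\matI-\matU\matU^{\T})\matU\Sigma\matV^{\T}=0$) verifies both conditions, so $\matW$ is stationary by Proposition~\ref{claim:critical d=00003D1}.

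The main obstacle, and the place where the hypotheses do real work, is precisely the orthogonality bookkeeping in the converse. If two summands shared a singular value, their right singular vectors could be linearly independent without being orthogonal, and one checks that the resulting rank-$r$ sum then fails~\eqref{eq:criticial-d1-eq1} even though every summand is individually stationary; it is the distinct-singular-values assumption that rules this out and guarantees the sum is a true SVD. The other subtle point is that the clean collapse $\matV^{\T}\matX\matX^{\T}\matV=\matI_r$ only holds after the orthogonalization of $\matX$, which is why the reduction in the first step is not merely cosmetic but is exactly what makes each SVD component individually stationary.
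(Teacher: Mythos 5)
Your proof is correct and takes essentially the same route as the paper: reduce to orthonormal rows of $\matX$, invoke Proposition~\ref{claim:critical d=00003D1}, and pass between the rank-$r$ conditions and the rank-one conditions through the SVD of $\matW$, using orthogonality of the singular vectors coming from the eigenstructure of $\matX\matY^{\T}\matY\matX^{\T}$. If anything, you are slightly more careful than the paper in flagging that the distinct-singular-values assumption is what guarantees the $\v_i$'s are orthogonal in the ``sum of stationary points $\Rightarrow$ stationary'' direction, but the underlying argument is the same.
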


\begin{proof}
In the previous section we explained why we can assume without loss
of generality that $\matX$ has orthonormal rows. We continue the
proof under this assumption.

Assume that \textit{$\matW=\sum_{i=1}^{r}\matW_{i}$ }\textit{\emph{where
each $\matW_{i}\in\manM_{[k,r,1]}$ is a stationary point for $r=1$.
Write $\matW_{i}=\sigma_{i}\u_{i}\v_{i}^{\T}$ where both $\u_{i}$
and $\v_{i}$ have unit norm. Since $\matX$ has orthonormal rows,
each $\v_{i}$ is a generalized eigenvalue of $(\I_{m},\matX\matY^{\T}\matY\matX^{\T})$,
which implies that they are classical eigenvalues of $\matX\matY^{\T}\matY\matX^{\T}$.
This matrix is symmetric, hence the eigenvalues are mutually orthogonal.
Since $\u_{i}\propto\matY\matX^{\T}\v_{i}$ for $i=1,\dots,r$, and
$\v_{i}$ are eigenvectors of $\matX\matY^{\T}\matY\matX^{\T}$, it
follows that the $\u_{i}$s are mutually orthogonal too. We can write
$\matW=\matU\Sigma\matV^{\T}$ where $\matU$ has $\u_{1},\dots,\u_{r}$
as columns, $\matV$ has $\v_{1},\dots,\v_{r}$ as columns, and $\Sigma$
has $\sigma_{1},\dots,\sigma_{r}$ on the diagonal. This is a SVD
decomposition of $\matW$. }}

\textit{\emph{Since each $\matW_{i}$ is a stationary point, via Claim~}}\ref{claim:critical d=00003D1},
Eq. (\ref{eq:criticial-d1-eq1}) we have:
\[
\matW_{i}\matX\matX^{\T}=\u_{i}\u_{i}^{\T}\matY\matX^{\T}
\]
Summing the equations, we have
\[
\matW\matX\matX^{\T}=\left(\stackrel[i=1]{r}{\sum}\W_{i}\right)\matX\matX^{\T}=\left(\stackrel[i=1]{r}{\sum}\u_{i}\u_{i}^{\T}\right)\matY\matX^{\T}=\matU\matU^{\T}\matY\matX^{\T}=\matW\matW^{\pinv}\matY\matX^{\T}
\]
From Eq. (\ref{eq:criticial-d1-eq2}):
\[
\matY\matX^{\T}\v_{i}\v_{i}^{\T}=\u_{i}\u_{i}^{\T}\matY\matX^{\T}\v_{i}\v_{i}^{\T}
\]
Thus, 
\[
\matY\matX^{\T}\left(\stackrel[i=1]{r}{\sum}\v\v_{i}^{\T}\right)=\sum_{i=1}^{r}\u_{i}\u_{i}^{\T}\matY\matX^{\T}\v_{i}\v_{i}^{\T}=\stackrel[j=1]{r}{\sum}\left(\left(\stackrel[i=1]{r}{\sum}\u_{i}\u_{i}^{\T}\right)\matY\matX^{\T}\v_{j}\v_{j}^{\T}\right)
\]
where the last equality follows from observing that for $i\neq j$
we have $\u_{i}\u_{i}^{\T}\matY\matX^{\T}\v_{j}\v_{j}^{\T}=0$ due
to orthonormality of the $\v_{i}$s. Thus, 
\[
\matY\matX^{\T}\V\V^{\T}=\matU\matU^{\T}\matY\matX^{\T}\V\V^{\T}
\]
We have shown that $\W$ satisfies both Eqs. (\ref{eq:criticial-d1-eq1})
and (\ref{eq:criticial-d1-eq2}), hence $\grad F_{0}(\matW)=0$, therefore
$\W$ is indeed a critical point. 

For the other direction, assume that $\W\in\manM_{[k,r,1]}$ is critical
point of the problem, and write $\matW=\matU\Sigma\matV^{\T}$ a reduced
SVD of $\matW$. Claim \ref{claim:critical d=00003D1}\textcolor{blue}{{}
}holds for $\W$, so from Eq. (\ref{eq:criticial-d1-eq1}) we have
\[
\matW\matX\matX^{\T}=\matU\matU^{\T}\matY\matX^{\T}
\]
For any $1\leq i\leq r$ , we can left multiply with $\u_{i}\u_{i}^{\T}$
(column $i$ of $\matU$), and utilize the mutual orthogonality of
the singular vectors of $\text{\ensuremath{\W}}$, to obtain:
\begin{equation}
\sigma_{i}\u_{i}\v_{i}^{\T}\matX\matX^{\T}=\u_{i}\u_{i}^{\T}\matY\matX^{\T}\label{eq:criticial-d1-eq1 i}
\end{equation}
Therefore, each $\left\{ \W_{i}\coloneqq\sigma_{i}\u_{i}\v_{i}^{\T}\right\} _{1\leq i\leq r}$
satisfies Eq. (\ref{eq:criticial-d1-eq1}) when $r=1$. From Eq. (\ref{eq:criticial-d1-eq2})
we have
\[
(\matI-\matU\matU^{\T})\matY\matX^{\T}\V\V^{\T}=0
\]
Similarly, for all $1\leq i\leq r$ , we can right multiply with $\v_{i}\v_{i}^{\T}$,
and utilize the mutual orthogonality to obtain:
\begin{align*}
(\matI-\matU\matU^{\T})\matY\matX^{\T}\v_{i}\v_{i}^{\T} & =0\\
(\matI-\stackrel[i=1]{r}{\sum}\u_{i}\u_{i}^{\T})\matY\matX^{\T}\v_{i}\v_{i}^{\T} & =0\\
(\matI-\u_{i}\u_{i}^{\T})\matY\matX^{\T}\v_{i}\v_{i}^{\T} & =\stackrel[j=1,j\neq i]{r}{\sum}\u_{j}\u_{j}^{\T}\matY\matX^{\T}\v_{i}\v_{i}^{\T}
\end{align*}
Using the previous result from Eq. (\ref{eq:criticial-d1-eq1 i}),
we have $\u_{j}\u_{j}^{\T}\matY\X^{\T}=\sigma_{j}\u_{j}\v_{j}^{\T}\X\X^{\T}=\sigma_{j}\u_{j}\v_{j}^{\T}$,
where we also used our assumption that $\matX$ has orthonormal rows,
i.e. $\X\X^{\T}=\I_{m}$. Thus, 
\[
\stackrel[j=1,j\neq i]{r}{\sum}\u_{j}\u_{j}^{\T}\matY\X^{\T}\v_{i}\v_{i}^{\T}=\stackrel[j=1,j\neq i]{r}{\sum}\sigma_{j}\u_{j}\v_{j}^{\T}\v_{i}\v_{i}^{\T}=0
\]
Where the last equality follows from the mutual orthogonality of the
$\v_{i}$'s . Therefore, each $\left\{ \W_{i}\right\} _{1\leq i\leq r}$
also satisfies Eq. (\ref{eq:criticial-d1-eq2}) when $r=1$. From
claim \ref{claim:critical d=00003D1}\textcolor{blue}{, }we obtain
$\grad F_{0}(\matW_{i})=0$ for all $1\leq i\leq r$ , hence all $\left\{ \W_{i}\right\} _{1\leq i\leq r}$
are critical points for the $r=1$ case. 
\end{proof}
\begin{thm}
The Reduced Rank Regression solution to the problem when $r>1$ is
the only strict local minimum of the problem. 
\end{thm}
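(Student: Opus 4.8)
The plan is to combine Lemma~\ref{lem:sum of critical} with the rank-one Hessian computation, reducing the $r>1$ classification to the $r=1$ one mode at a time. As in the rank-one case I would first pass to the orthogonalized problem of Eq.~(\ref{eq: problem d1 ortho}), so that $\matX$ has orthonormal rows ($\matX\matX^\T=\matI_m$); this preserves the stationary points and their classification. Under this assumption the Euclidean gradient at a stationary point is the normal direction $\matN\coloneqq\nabla F_0(\matW)=\matW-\matY\matX^\T\in T_\matW^\perp\manM_{[k,r,1]}$, and, writing $\matB\coloneqq\matY\matX^\T$ with SVD $\matB=\sum_l\beta_l\p_l\q_l^\T$, Lemma~\ref{lem:sum of critical} tells us that every stationary point is a partial SVD $\matW=\matU\Sigma\matV^\T=\sum_{i\in S}\beta_i\p_i\q_i^\T$ for some index set $S$ with $|S|=r$, the Reduced Rank Regression solution being the choice $S=\{1,\dots,r\}$ of the $r$ largest singular values. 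Since the singular values are distinct, the vectors $\{\p_l\}$ and $\{\q_l\}$ are each orthonormal, so any unused triplet $(\beta_l,\p_l,\q_l)$, $l\notin S$, has $\p_l\perp\sp(\matU)$ and $\q_l\perp\sp(\matV)$; moreover $\matN=-\sum_{l\notin S}\beta_l\p_l\q_l^\T$.

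Next I would show that any stationary point with $S\neq\{1,\dots,r\}$ is not a local minimum, by perturbing a single mode. If $S$ is not the top set there is a used index $j\in S$ and an unused index $l^\star\notin S$ with $\beta_{l^\star}>\beta_j$. Take the tangent vector $\matZ=\beta_j\p_{l^\star}\q_j^\T+\beta_j\p_j\q_{l^\star}^\T$, which rotates $\p_j\to\p_{l^\star}$ and $\q_j\to\q_{l^\star}$ at fixed singular value and lies in $T_\matW\manM_{[k,r,1]}$ because $\p_{l^\star}\perp\sp(\matU)$ and $\q_{l^\star}\perp\sp(\matV)$. Evaluating the Riemannian Hessian through Eq.~(\ref{eq:d1 r1 <Z,H=00005BZ=00005D>}) and the rank-$r$ generalization of the Weingarten map of Eq.~(\ref{eq:weingraten map d1 r1 pinvW}), namely $\matP_{T_\matW\manM_{[k,r,1]}}\matD_{\matZ}\matP\matN=\matN\matZ^\T(\matW^{\pinv})^\T+(\matW^{\pinv})^\T\matZ^\T\matN$, the mutual orthogonality of all the singular vectors kills every cross-term between mode $j$ and the other modes, so the quadratic form collapses to its rank-one value: the linear term gives $\FNormS{\matZ\matX}=2\beta_j^2$ and the curvature term gives $-2\beta_j\beta_{l^\star}$, whence $\langle\matZ,\Hess F_0(\matW)[\matZ]\rangle=2\beta_j(\beta_j-\beta_{l^\star})<0$. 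As in Lemma~\ref{lem:<Z, H=00005BZ=00005D<0 d=00003D1 r=00003D1}, a negative Rayleigh quotient shows the Hessian is indefinite, so $\matW$ is not a local minimum.

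To finish I would prove that the Reduced Rank Regression solution is a strict local minimum, i.e. that its Riemannian Hessian is positive definite. Here I would split an arbitrary $\matZ\in T_\matW\manM_{[k,r,1]}$ into its in-space part $\matU\matM\matV^\T$ and its out-of-space part $\matU_p\matV^\T+\matU\matV_p^\T$, and expand $\matU_p,\matV_p$ in the orthonormal bases $\{\p_l,\q_l:l\notin S\}$. Using $\matN\matV=0$ and $\matU^\T\matN=0$ one checks that all cross-terms between the in-space and out-of-space parts vanish and that the curvature term annihilates the in-space part, so the Hessian is block diagonal: the in-space block contributes $\FNormS{\matM}$, and orthogonality further decouples the out-of-space part into one $2\times2$ block per pair $(i,l)$ with $i\in S$, $l\notin S$, of the form $\beta_i\bigl(\begin{smallmatrix}\beta_i&-\beta_l\\-\beta_l&\beta_i\end{smallmatrix}\bigr)$ (plus a positive $\dot\sigma^2$ diagonal). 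Each block has eigenvalues $\beta_i(\beta_i\pm\beta_l)$, which are strictly positive precisely because $S$ is the top-$r$ set and hence $\beta_i>\beta_l$ for every $i\in S$, $l\notin S$. Thus $\langle\matZ,\Hess F_0(\matW)[\matZ]\rangle>0$ for all $\matZ\neq0$, so this point is a strict local minimum, and by the previous paragraph it is the only one.

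The step I expect to be the main obstacle is the positive-definiteness argument for the Reduced Rank Regression solution: unlike the ``not a minimum'' claim, where a single well-chosen $\matZ$ suffices, here I must control the entire quadratic form, which requires verifying that the Riemannian Hessian really is block diagonal in the singular bases. The delicate point is the coupling between the left out-direction $\matU_p$ and the right out-direction $\matV_p$, produced by the $-2\beta_i\,\p_l^\T\matB\q_{l'}$ curvature contribution; one must confirm that this coupling is confined to matched $2\times2$ blocks and that its off-diagonal weight $\beta_l$ is dominated by the diagonal weight $\beta_i$ exactly when $S$ is the top-$r$ set. This is the same curvature cross-coupling that already made the rank-one computation subtle, now amplified across all $r$ modes.
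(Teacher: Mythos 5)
Your proposal is correct, and its core is the paper's own route: both arguments rest on Lemma~\ref{lem:sum of critical} to identify every stationary point with a partial SVD $\sum_{i\in S}\beta_{i}\p_{i}\q_{i}^{\T}$ of $\matY\matX^{\T}$ (after orthogonalizing $\matX$), and both then exhibit one tangent direction mixing a used singular mode with an unused one so that the quadratic form of Eq.~(\ref{eq:<Z,H=00005BZ=00005D> r>1}) is negative. The only difference in that step is the index pairing: you rotate a used index $j\in S$ toward an unused larger one $l^{\star}\notin S$, getting $2\beta_{j}(\beta_{j}-\beta_{l^{\star}})<0$, whereas the paper pairs the missing top-$r$ index with the smallest singular value present in $\matW$, getting $2\sigma_{p}(\sigma_{p}-1)<0$ after a rescaling; these are the same Hessian evaluation. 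Where you genuinely depart from the paper is the final step: the paper never verifies positive definiteness at the RRR solution — it only rules out every other stationary point, and strictness of the RRR point is inherited (implicitly) from its being the unique global minimizer under the standing uniqueness assumption. Your direct verification is nevertheless correct: since $\matN=-\sum_{l\notin S}\beta_{l}\p_{l}\q_{l}^{\T}$ satisfies $\matU^{\T}\matN=0$ and $\matN\matV=0$, the curvature term couples only $\dot{\matU}$ with $\dot{\matV}$, the core component contributes $\FNormS{\matM}$, and the remaining form decouples into blocks $\beta_{i}\bigl(\begin{smallmatrix}\beta_{i}&-\beta_{l}\\-\beta_{l}&\beta_{i}\end{smallmatrix}\bigr)$ plus positive diagonal contributions from components of $\dot{\matU},\dot{\matV}$ outside the span of the singular vectors; these blocks are positive definite exactly when $S$ is the top-$r$ set. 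What this buys is a self-contained proof of strictness (and of stability of the RRR point for Riemannian first-order methods) that does not appeal to the closed-form global solution, at the cost of the extra block-diagonalization bookkeeping you correctly flagged as the delicate part.
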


\begin{proof}
Let $\matW_{1},\dots,\matW_{m}\in\manM_{[k,1,1]}$ denote the $m$
different stationary points of the case $r=1$, and order them by
descending singular values of $\matA$. Stationary points of the problem
are sums of $r$ distinct $\matW_{i}$s, while the RRR solution is
$\matW_{r}^{(\textrm{RRR})}=\sum_{i=1}^{r}\matW_{i}.$ Let $\matW\neq\matW_{r}^{(\textrm{RRR})}$
be a stationary point which is \emph{not} the RRR solution. We will
show that it is not a local minimum, by finding a $\matZ\in T_{\matW}\manM_{[k,r,1]}$
such that $\left\langle \matZ,\matH[\matZ]\right\rangle <0$. 

Since $\W\neq\matW_{r}^{(\textrm{RRR})}$ there exist an index $j\leq r$
for which $\matW_{j}$ is not present when expressing $\matW$ as
a sum of elements from $\matW_{1},\dots,\matW_{m}$. We rescale the
problem, as we did for the case of $r=1$, so that $\sigma_{j}=1$.
An element $\matZ\in T_{\W}\manM_{[k,r,1]}$ can be written $\Z=\dot{\matU}\Sigma\V^{\T}+\matU\dot{\Sigma}\V^{\T}+\matU\Sigma\dot{\V}^{\T}$
where $\dot{\matU}^{\T}\matU=0,\dot{\V^{\T}\V=0}$, and $\dot{\Sigma}$
arbitrary \cite{KochLuibch10tensor-manifold,Kressner2016precond}.
We now show how to choose $\dot{\matU}$, $\dot{\matV}$ and $\dot{\Sigma}$.
First we choose $\dot{\Sigma}=0$ and obtain 
\begin{equation}
\left\langle \matZ,\matH[\matZ]\right\rangle =\Trace{\dot{\matU}^{\T}\dot{\matU}\Sigma^{2}}+\Trace{\dot{\V}^{\T}\dot{\V}\Sigma^{2}}-2\Trace{\Sigma\dot{\matU}^{\T}\matY\matX^{\T}\dot{\V}}\label{eq:<Z,H=00005BZ=00005D> r>1}
\end{equation}

Next, we choose $\dot{\V},\dot{\matU}$ such that all of the columns
of $\dot{\V},\dot{\matU}$ are zeros, except for their last column
which we choose to be $\v_{j}$ and $\u_{j}:=\ITNorm{\matY\matX^{\T}\v_{j}}\matY\matX^{\T}\v_{j}$
(respectively). The orthonormality of the $\v$s and $\u$s yields
$\dot{\matU}^{\T}\matU=0_{r\times r}=\dot{\V}^{\T}\V$ as required,
and 
\[
\dot{\matU}^{\T}\dot{\matU}=\dot{\V}^{\T}\dot{\V}=\left[\begin{array}{cccc}
0 & \cdots & \cdots & 0\\
\vdots & \ddots & \cdots & \vdots\\
\vdots &  & 0 & 0\\
0 & \cdots & 0 & 1
\end{array}\right]
\]
and 
\[
\dot{\matU}^{\T}\matY\matX^{\T}\dot{\V}=\TNorm{\matY\matX^{\T}\v_{j}}\left[\begin{array}{ccc}
0 & \cdots & 0\\
0 & \cdots & 0\\
\vdots &  & \vdots\\
0 & \cdots & 1
\end{array}\right]
\]
Thus, $\Trace{\dot{\matU}^{\T}\dot{\matU}\Sigma^{2}}=\sigma_{p}^{2}=\Trace{\dot{\V}^{\T}\dot{\V}\Sigma^{2}}$
and $\Trace{\Sigma\dot{\matU}^{\T}\matY\matX^{\T}\dot{\V}}=\TNorm{\matY\matX^{\T}\v_{j}}\sigma_{p}$
where $\sigma_{p}$ is the smallest $\sigma$ that is present in $\matW$.
Eq. (\ref{eq:<Z,H=00005BZ=00005D> r>1}) becomes

\[
\left\langle \matZ,\matH[\matZ]\right\rangle =2\left(\sigma_{p}^{2}-\TNorm{\matY\matX^{\T}\v_{j}}\sigma_{p}\right)=2\sigma_{p}\left(\sigma_{p}-\TNorm{\matY\matX^{\T}\v_{j}}\right)
\]
Per our assumption, we have $\sigma_{j}=1$ and $\left\Vert \matY\matX^{\T}\v_{j}\right\Vert _{2}=1$
. Therefore, $0<\sigma_{p}<\sigma_{r}\leq\sigma_{j}=1$, so $\left\langle \matZ,\matH[\matZ]\right\rangle <0$,
concluding the proof.
\end{proof}

\paragraph{Classification of stationary points for $\lambda>0$.}

Finally, for the matrix case we consider $\lambda>0$. The problem
now reads:
\begin{equation}
\matW_{\lambda}^{\star}\coloneqq\argmin_{\matW\in\manM_{[k,r,1]}.}\FNormS{\matW\matX-\matY}+\lambda\FNormS{\matW}\label{eq:d1 r1 problem lambda}
\end{equation}
The Riemannian gradient of Problem (\ref{eq:d1 r1 problem lambda})
is:

\begin{align*}
\grad F_{\lambda}(\matW) & =\matP_{T_{\matW}\manM_{[k,r,1]}}\left(\nabla F_{\lambda}(\matW)\right)\\
 & =\matP_{T_{\matW}\manM_{[k,r,1]}}\left(\nabla F_{0}(\matW)+\lambda\matW\right)\\
 & =\grad F_{0}(\matW)+\lambda\matW\\
 & =\matW\matX\matX^{\T}-\matW\matW^{\pinv}\matY\matX^{\T}-(\matI-\matW\matW^{\pinv})\matY\matX^{\T}\matW^{\pinv}\matW+\lambda\matW\\
 & =\matW\left(\matX\matX^{\T}+\lambda\I_{m}\right)-\matW\matW^{\pinv}\matY\matX^{\T}-(\matI-\matW\matW^{\pinv})\matY\matX^{\T}\matW^{\pinv}\matW
\end{align*}
We wish to use our previous results. To that end, we define $\hat{\X}=[\X\;\sqrt{\lambda}\I_{m}]\in\R^{m\times n+m}$
and $\hat{\matY}=[\matY\;\mathbf{0}_{k\times m}]\in\R^{k\times n+m}$.
For $d=1$, Problem (\ref{eq:target function}) can be rewritten as
\cite{Mukherjee2011kernel}:

\begin{equation}
\hat{F}(\W)=\frac{1}{2}\FNormS{\matW\hat{\X}-\hat{\matY}}\label{eq:d1 r1 problem  X^,Y^}
\end{equation}
We show that we can follow all previous steps, when minimizing $\hat{F}(\W)$
instead of $F_{0}(\W)$, i.e. when replacing $\X,\matY$ with $\hat{\X},\hat{\matY},$
to obtain the same conclusions for the case when $\lambda>0$.

The Riemannian gradient of problem (\ref{eq:d1 r1 problem  X^,Y^})
is:

\begin{align*}
\grad\hat{F}(\matW) & =\matP_{T_{\matW}\manM_{[k,r,1]}}\left(\nabla\hat{F}(\matW)\right)\\
 & =\matW\hat{\X}\hat{\X}^{\T}-\matW\matW^{\pinv}\hat{\matY}\hat{\X}^{\T}-(\matI-\matW\matW^{\pinv})\hat{\matY}\hat{\X}^{\T}\matW^{\pinv}\matW\\
 & =\matW\left(\matX\matX^{\T}+\lambda\I_{m}\right)-\matW\matW^{\pinv}\matY\matX^{\T}-(\matI-\matW\matW^{\pinv})\matY\matX^{\T}\matW^{\pinv}\matW
\end{align*}
Where the last equality follows from $\matX\matX^{\T}+\lambda\I_{m}=\hat{\X}\hat{\X}^{\T}$
and $\matY\matX^{\T}=\hat{\matY}\hat{\X}^{\T}$. We see that $\grad\hat{F}(\matW)=\grad F_{\lambda}(\W)$,
thus the stationary points coincide.

Any \emph{finite} unit-norm generalized eigenpair $(\hat{\gamma},\hat{\v})$
of the pencil $(\hat{\X}\hat{\X}^{\T},\hat{\X}\hat{\matY}^{\T}\hat{\matY}\hat{\X}^{\T})$
can be used to generate the stationary points: 
\[
\matW_{\hat{\gamma},\hat{\v}}=\frac{1}{\hat{\gamma}\TNormS{\hat{\matY}\hat{\X}^{\T}\hat{\v}}}\hat{\matY}\hat{\X}^{\T}\hat{\v}\hat{\v}^{\T}=\frac{1}{\hat{\gamma}\TNormS{\matY\matX^{\T}\hat{\v}}}\matY\matX^{\T}\hat{\v}\hat{\v}^{\T}
\]
and following the same steps, we can conclude that the Reduced Rank
Regression solution to the problem is the only strict local minimum
of the problem.

\subsection{Higher Order}

We now consider the case $d>1$. Unfortunately, the higher order case
is much more complex, and we do not have a complete characterization
of the stationary points, like we have for $d=1$. For simplicity,
we mainly analyze at the case of $d=2$. 

\subsubsection{\label{subsec:Recoring}Recoring}

Consider a stationary point $\left\llbracket \tenC;\matU_{1},\matU_{2}\dots,\matU_{d+1}\right\rrbracket $.
It will generally not uphold the \emph{residual condition }(Eq.~(\ref{eq:residual-condition}))
which is a necessary condition for stationarity, but not a sufficient
condition. However, it is possible to modify $\tenC$ so that the
residual condition holds, \emph{without} modifying $\matU_{1},\dots,\matU_{d+1}$.
Furthermore, we can also fix $\matU_{1}$ to any square orthonormal
matrix. Thus, in a sense, only $\matU_{2},\dots,\matU_{d+1}$ should
be free parameters for the optimization, since there is a natural
selection for $\tenC$ once these are set. To see this, let us first
write $\matZ=\matU_{d+1}^{\T}\matX\odot\cdots\odot\matU_{2}^{\T}\matX$.
Recalling that $\tenW\matX=\matU_{1}\unfold{\tenC}1\matZ$, the residual
condition now becomes
\[
(\matU_{1}\unfold{\tenC}1\matZ-\matY)\matZ^{\T}=-\lambda\matU_{1}\unfold{\tenC}1
\]
Multiplying by $\matU_{1}^{\T}$ yields 
\[
(\unfold{\tenC}1\matZ-\matU_{1}^{\T}\matY)\matZ^{\T}=-\lambda\unfold{\tenC}1
\]
and rearranging shows that the condition translates to 
\[
\unfold{\tenC}1(\matZ\matZ^{\T}+\lambda\matI)=\matU_{1}^{\T}\matY\matZ^{\T}
\]
If $\lambda>0$ the matrix $\matZ\matZ^{\T}+\lambda\matI$ is invertible,
and we can solve for $\tenC$. We can also solve for $\tenC$ when
$\Z$ has full row rank. The cost of modifying $\tenC$ is $O(nr^{2d})$. 

We call the process of modifying $\tenC$ so that Eq.~(\ref{eq:residual-condition})
holds by the name ``recoring''. Though recoring is a heuristic,
numerical experiments have shown that periodically applying a recoring
step every fixed amount of iterations improves cost, reduces the gradient
considerably, and improves downstream performance. 

\subsubsection{Semi-Symmetric Stationary Points}

A tensor is called \textit{semi-symmetric} if its entries are invariable
under any permutations of all indices but the the first. For every
tensor $\tenA$ there exist a semi-symmetric tensor $\tilde{\tenA}$
of the same dimensions, such that $\tenA\x=\tilde{\tenA}\x$ for any
vector $\x$~\cite{ding2015generalized}, i.e. $\tenA$ and $\tilde{\tenA}$
are equivalent in terms of how they are applied to a vector or a matrix.
However, a semi-symmetric $\tilde{\tenA}$ which is equivalent to
a non semi-symmetric $\tenA$ might have a different multilinear rank.
Nevertheless, we can, in a sense, restrict ourselves to trying to
solve HORRR only with semi-symmetric tensors, if we allow modification
of the target rank $r$. We now prove a few lemmas regarding semi-symmetric
tensors of order $d+1$ in $\R^{k\times m\times...\times m}$.
\begin{lem}
Given two modes $p,q$, denote by $\pi(p,q)$ the permutation that
exchanges $p$ and $q$, but keeps other modes in place. For any $p,q\neq1$
and tensor $\tenA$, $\tenA_{(p)}=\tenA_{(q)}$ if and only if $\tenA=\tenA_{\pi(p,q)}$.
\end{lem}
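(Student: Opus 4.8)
The plan is to prove the equivalence directly from the definition of matricization, reducing both sides to identities between individual entries of $\tenA$. First I would record the shape fact: because $p,q\neq1$ we have $n_p=n_q=m$, so deleting either of these modes leaves the same product of remaining dimensions; hence $\tenA_{(p)}$ and $\tenA_{(q)}$ are matrices of exactly the same size, and asking whether they are equal is meaningful. I would then write the unfolding explicitly through its column-index map: the entry of $\tenA_{(p)}$ in row $a$ and column $c$ is the value of $\tenA$ at the multi-index whose $p$-th coordinate is $a$ and whose remaining coordinates are the mixed-radix decoding of $c$ over the modes other than $p$, and I would write the analogous identity for $\tenA_{(q)}$.

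With these entrywise descriptions, the equality $\tenA_{(p)}=\tenA_{(q)}$ says precisely that, for every row value $a$ and every column index $c$, the number obtained by placing $a$ in mode $p$ and filling the other modes from $c$ equals the number obtained by placing $a$ in mode $q$ and filling the other modes from $c$. The heart of the argument is to decode a single common column index $c$ under both conventions and check that the two resulting full tensor indices are related by \emph{exactly} the transposition $\pi(p,q)$, namely that the coordinate the mode-$p$ unfolding assigns to mode $q$ is the one the mode-$q$ unfolding assigns to mode $p$, while the coordinates in all modes other than $p$ and $q$ agree. I expect this bookkeeping of the two column orderings to be the main obstacle, and I expect the hypothesis $n_p=n_q$ to be exactly what makes the mixed-radix place values line up so the two decodings coincide off modes $p$ and $q$; this is the step where the ordering convention of the matricization must be handled with real care rather than waved at.

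Once that index identity is in place, the condition $\tenA_{(p)}=\tenA_{(q)}$ reads as $\tenA_{\dots,a,\dots}=\tenA_{\dots,a,\dots}$ with the $p$-th and $q$-th coordinates interchanged, for all choices of the indices, which is by definition $\tenA=\tenA_{\pi(p,q)}$. Since every step of the reduction is a logical equivalence, both directions of the lemma fall out simultaneously and no separate forward and backward arguments are needed. I would deliberately keep the write-up at the level of entries and the explicit column-index map rather than attempting an abstract reshaping identity such as $(\tenA_{\pi(p,q)})_{(p)}=\tenA_{(q)}$, because the entire content of the lemma lives in the ordering convention of the unfolding, and that is most transparent when tracked coordinatewise.
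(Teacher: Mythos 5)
Your proposal is, in spirit, the paper's own proof carried out carefully: the paper argues at the level of fibers (``the columns of $\tenA_{(p)},\tenA_{(q)}$ are the mode-$p$ and mode-$q$ fibers, so equality of fibers gives equality of matricizations'') and never examines the column ordering, while you propose to confront the ordering head-on. Your instinct that the whole content of the lemma lives in that convention is exactly right --- but the identity you predict the bookkeeping will produce is false. Under the Kolda--Bader convention the paper adopts, column $c$ of $\tenA_{(p)}$ is decoded over the modes $1,\dots,p-1,p+1,\dots,d+1$ taken in increasing order (smallest mode varying fastest), and column $c$ of $\tenA_{(q)}$ over the modes $1,\dots,q-1,q+1,\dots,d+1$ in the same way. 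For $p<q$, every mode strictly between $p$ and $q$ occupies a position one lower in the first list than in the second, so the two decodings of a common $c$ do \emph{not} agree off $\{p,q\}$: the multi-index read from $\tenA_{(q)}$ is obtained from the one read from $\tenA_{(p)}$ by the cyclic shift of the block of modes $p,p+1,\dots,q$, not by the transposition $\pi(p,q)$. Consequently $\tenA_{(p)}=\tenA_{(q)}$ is equivalent to invariance of $\tenA$ under that cycle, which coincides with invariance under $\pi(p,q)$ only when $q=p+1$. For non-adjacent modes both directions of the stated equivalence genuinely fail: take an order-$4$ tensor with entries $a_{i_1i_2i_3i_4}=g(i_2,i_3,i_4)$, where $g$ is the indicator of the cyclic orbit $\{(1,2,3),(2,3,1),(3,1,2)\}$ and $m\geq 3$; then $\tenA_{(2)}=\tenA_{(4)}$ because $g$ is invariant under cyclic shifts, yet $\tenA\neq\tenA_{\pi(2,4)}$ because $g(1,2,3)\neq g(3,2,1)$.

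So the step you flagged as ``the main obstacle'' is precisely where the proof breaks, and it cannot be rescued by a cleverer ordering: demanding your transposition identity simultaneously for all pairs is inconsistent as soon as there are three modes other than the first (the required relabelings $\rho_p(q)=\rho_q(p)$, $\rho_p(s)=\rho_s(p)$, $\rho_q(s)=\rho_s(q)$ with agreement off each pair force a non-injective $\rho_p$). The paper's fiber-level proof silently assumes the favorable column correspondence and thus contains the same gap, only better hidden. What \emph{is} true, and is all that the subsequent lemmas (semi-symmetry iff all unfoldings agree, and the HOSVD lemma) actually need, is the adjacent case $q=p+1$ --- where the block cycle \emph{is} the transposition and your entrywise argument goes through verbatim --- combined with the all-pairs statement: $\tenA_{(p)}=\tenA_{(q)}$ for \emph{all} $p,q\neq1$ iff $\tenA$ is semi-symmetric, since equality of all unfoldings gives invariance under all adjacent transpositions, and these generate the symmetric group on $\{2,\dots,d+1\}$. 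I would restructure your write-up along those lines rather than try to complete the per-pair claim as stated.
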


\begin{proof}
Without loss of generality, we assume $p<q$. Let $\tenA=\tenA_{\pi(p,q)}$,
then we have $a_{i_{1},i_{2},..,i_{p},...,i_{q}\dots i_{d+1}}=a_{i_{1},i_{2},..,i_{q},...,i_{p},\dots,i_{d+1}}$
for all $p,q\neq1$. It follows that the mode-$p$ fibers and the
mode-$q$ fibers of $\tenA$ are equal. The columns of $\tenA_{(p)},\tenA_{(q)}$
are the mode-$p$ and mode-$q$ fibers, respectively, so from the
equivalence of the fibers we obtain equivalence of the matricizations. 

For the opposite direction, assume $\tenA_{(p)}=\tenA_{(q)}$. We
can form $\tenA$ by mapping back the columns of $\tenA_{(p)}$ to
the mode-$p$ fibers of $\tenA$. On the other hand, we can also map
columns of $\tenA_{(q)}$ to the mode-$q$ fibers of $\tenA$. Since
$\tenA_{(p)}=\tenA_{(q)}$, we obtain that the mode-$p$ fibers and
mode-$q$ fibers are equal, hence $a_{i_{1},i_{2},..i_{p},...i_{q}.,i_{d+1}}=a_{i_{1},i_{2},..i_{q},...i_{p}.,i_{d+1}}$for
any $p,q\neq1$ and $\tenA=\tenA_{\pi(p,q)}$. 
\end{proof}
\begin{lem}
A tensor $\tenA$ is semi-symmetric if and only if for all $p,q\neq1$
we have $\tenA_{(p)}=\tenA_{(q)}$.
\end{lem}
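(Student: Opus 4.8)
The plan is to obtain both implications from the preceding lemma, which already identifies equality of matricizations $\tenA_{(p)}=\tenA_{(q)}$ with invariance under the single transposition $\pi(p,q)$. The only genuinely new ingredient needed is the group-theoretic step connecting invariance under all such transpositions with invariance under the entire symmetric group acting on the modes $2,\dots,d+1$, which is exactly what semi-symmetry asserts.

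For the forward direction I would argue as follows. If $\tenA$ is semi-symmetric, then by definition it is invariant under every permutation of the non-first modes; in particular $\tenA=\tenA_{\pi(p,q)}$ for each pair $p,q\neq1$, and the preceding lemma then gives $\tenA_{(p)}=\tenA_{(q)}$. (Note that equality of these matricizations already forces $n_p=n_q$ for all $p,q\neq1$, matching the equal-dimension requirement built into the definition of semi-symmetry.)

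For the converse, suppose $\tenA_{(p)}=\tenA_{(q)}$ for all $p,q\neq1$. The preceding lemma yields $\tenA=\tenA_{\pi(p,q)}$ for every such pair, so it remains to upgrade invariance under each individual transposition to invariance under an arbitrary permutation $\sigma$ of $\{2,\dots,d+1\}$. Here lies the main (and essentially only) obstacle: I would first check, directly from the entrywise definition of tensor transposition, that $\sigma\mapsto\tenA_\sigma$ respects composition, so that the stabilizer $\{\sigma:\tenA_\sigma=\tenA\}$ is a subgroup of the symmetric group on the non-first modes. Since every permutation of $\{2,\dots,d+1\}$ factors as a product of transpositions $\pi(p,q)$ with $p,q\neq1$, and each such transposition fixes $\tenA$, this subgroup contains all the generators and hence is the whole group. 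Therefore $\tenA_\sigma=\tenA$ for every $\sigma$ fixing the first mode, i.e. $\tenA$ is semi-symmetric. Once the composition bookkeeping is confirmed, the standard fact that transpositions generate the symmetric group closes the argument with no further tensor computation.
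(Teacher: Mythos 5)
Your proposal is correct and takes essentially the same approach as the paper: both directions rest on the preceding transposition lemma, and the converse is completed by factoring an arbitrary permutation that fixes the first mode into transpositions $\pi(p,q)$ with $p,q\neq 1$. The paper packages this last step as an explicit induction along the factorization (with the composition property implicit in its notation), whereas you phrase it as the stabilizer subgroup containing all generators; the mathematical content is identical.
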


\begin{proof}
Suppose the $\tenA$ is semi-symmetric. Consider two indices $p,q\neq1$.
Due to semi-symmetry, $\tenA=\tenA_{\pi(p,q)}$ so by the previous
lemma $\tenA_{(p)}=\tenA_{(q)}$.

As for the other direction, assume that $\tenA_{(p)}=\tenA_{(q)}$
for all $p,q\neq1$, and let $\pi$ be a permutation that keeps the
first index in place. we can write $\pi$ as a finite sequence of
permutations such that each permutation in the sequence exchanges
only two modes: $\pi=\pi(p_{l},q_{l})\circ...\circ\pi(p_{2},q_{2})\circ\pi(p_{1},q_{1})$,
with $p_{i},q_{i}\neq1$ for all $i\in[l]$. We denote $\tenA_{\pi_{i}}$
the tensor that is obtained after applying the $i$th permutation
on $\tenA_{\pi_{i-1}}$, and $\tenA_{\pi_{0}}=\tenA$. We show that
$\tenA_{\pi_{i}}=\tenA$. This is obviously true for $i=0$. Next,
assume by induction that $\tenA_{\pi_{i-1}}=\tenA$. $\tenA_{\pi_{i}}$
is obtained from $\tenA_{\pi_{i-1}}$ by exchanging only two indices
that are not 1. By the previous lemma, the result of permuting by
$\pi_{i}$ is the same if the matricizations by the two modes are
equal. Since $\tenA_{\pi_{i-1}}=\tenA$ this holds by assumption.\textcolor{blue}{{} }
\end{proof}
\begin{lem}
\label{lem:HOSVDS-semi-symmetry}Suppose of $\tenA\in\R^{k\times m\times\cdots\times m}$
is semi-symmetric, then for any tuple $\rb=(l,r,\dots,r)$ with $l\leq k$
and $r\leq m$, we can write $\HOSVD{\tenA}{\rb}=\left\llbracket \tenC\,;\,\matU_{1},\matU,\dots,\matU\right\rrbracket $
where $\tenC$ is semi-symmetric, as long as the singular vectors
chosen for different modes in the HOSVD are chosen in a deterministic
manner. In particular, $\HOSVD{\tenA}{\rb}$ is semi-symmetric as
well. A simple corollary is that any semi-symmetric $\tenA\in\R^{k\times m\times\cdots\times m}$
can be written in the form $\tenA=\left\llbracket \tenC\,;\,\matU_{1},\matU,\dots,\matU\right\rrbracket $
where $\tenC$ is semi-symmetric.
\end{lem}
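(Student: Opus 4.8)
The plan is to lean on the two preceding lemmas, which together say that a tensor is semi-symmetric precisely when its unfoldings along modes $2,\dots,d+1$ all coincide. First I would record the consequence that, since $\tenA$ is semi-symmetric, $\unfold{\tenA}2=\unfold{\tenA}3=\cdots=\unfold{\tenA}{d+1}$. The HOSVD factor $\matU_j$ is built from the leading $r$ left singular vectors of $\unfold{\tenA}j$; because these matrices are identical for $j=2,\dots,d+1$ and the rank budget $r$ is the same in each of those modes, a deterministic singular-vector routine returns the very same matrix for each, so $\matU_2=\cdots=\matU_{d+1}\eqqcolon\matU$. This is exactly where the ``deterministic manner'' hypothesis is needed, and it is the only delicate point in the argument: when $\unfold{\tenA}2$ has repeated singular values the leading singular subspace is not spanned by a unique orthonormal basis, and an inconsistent choice across modes would break everything downstream.

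Next I would prove that the core $\tenC=\tenA\modeprod 1\matU_1^{\T}\modeprod 2\matU^{\T}\cdots\modeprod{d+1}\matU^{\T}$ is semi-symmetric. I would isolate the general fact that the operation $\tenD\mapsto\tenD\modeprod 1\matM_1\modeprod 2\matM\cdots\modeprod{d+1}\matM$ --- applying one matrix on mode $1$ and a single common matrix $\matM$ on every remaining mode --- preserves semi-symmetry. To see this, take any permutation $\pi$ fixing mode $1$ and use two standard facts: mode products along distinct modes commute, and transposing by $\pi$ turns a mode-$j$ product into a mode-$\pi(j)$ product, i.e. $(\tenD\modeprod j\matM)_\pi=\tenD_\pi\modeprod{\pi(j)}\matM$. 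Applying this, the image of $\tenD$ under the operation, once transposed by $\pi$, equals $\tenD_\pi\modeprod 1\matM_1\modeprod{\pi(2)}\matM\cdots\modeprod{\pi(d+1)}\matM$; since the same $\matM$ sits on the set of modes $\{\pi(2),\dots,\pi(d+1)\}=\{2,\dots,d+1\}$ and $\tenD_\pi=\tenD$ by semi-symmetry, this collapses back to the original image. Instantiating this with $\tenD=\tenA$, $\matM_1=\matU_1^{\T}$, $\matM=\matU^{\T}$ shows that $\tenC$ is semi-symmetric.

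Finally, instantiating the same general fact with $\tenD=\tenC$, $\matM_1=\matU_1$, $\matM=\matU$ shows that the reconstruction $\HOSVD{\tenA}{\rb}=\left\llbracket\tenC;\matU_1,\matU,\dots,\matU\right\rrbracket$ is itself semi-symmetric. For the corollary I would note that semi-symmetry of $\tenA$ forces $\rank(\unfold{\tenA}2)=\cdots=\rank(\unfold{\tenA}{d+1})\eqqcolon r$, so the multilinear rank of $\tenA$ already has the required shape $(l,r,\dots,r)$ with $l=\rank(\unfold{\tenA}1)$; taking $\rb$ equal to this exact multilinear rank makes the HOSVD reconstruct $\tenA$ exactly, and the lemma just proved yields the semi-symmetric-core representation. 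The main obstacle here is conceptual rather than computational: it is securing the \emph{common} factor $\matU$ across the symmetric modes, which hinges entirely on the deterministic singular-vector choice in the presence of multiplicities. Once that is in place, everything else is routine permutation and mode-product bookkeeping.
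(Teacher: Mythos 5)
Your proposal is correct, and its skeleton matches the paper's proof exactly: equal unfoldings plus the deterministic-choice hypothesis give a single common factor $\matU$ for modes $2,\dots,d+1$; semi-symmetry of the core $\tenC$ follows; semi-symmetry of the reconstruction follows; and the corollary is obtained by noting that $\rank(\tenA)$ already has the shape $(l,r,\dots,r)$ and that HOSVD at the exact multilinear rank reconstructs $\tenA$. The one place where you diverge is in \emph{how} the two invariance steps are verified. The paper writes out the matricization identity $\tenC_{(p)}=\matU^{\T}\tenA_{(p)}\left(\matU\otimes\cdots\otimes\matU\otimes\matU_{1}\right)$, observes that the right-hand side is independent of $p$ for $p\neq1$, and then invokes the preceding characterization lemma (semi-symmetric iff all unfoldings along modes $\geq2$ agree); the same computation is repeated for $\tenB=\HOSVD{\tenA}{\rb}$. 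You instead prove a single general fact directly from the definition of semi-symmetry: the map $\tenD\mapsto\tenD\modeprod 1\matM_{1}\modeprod 2\matM\cdots\modeprod{d+1}\matM$ preserves semi-symmetry, via the commutation identities $(\tenD\modeprod j\matM)_{\pi}=\tenD_{\pi}\modeprod{\pi(j)}\matM$ and the commutativity of mode products along distinct modes, and then instantiate it twice (with $\tenD=\tenA$ and with $\tenD=\tenC$). Both verifications are sound; yours is more abstract and handles the core and the reconstruction in one stroke, avoiding the Kronecker-product bookkeeping, while the paper's explicit unfolding formulas have the side benefit of being reused later (e.g., in the proposition showing Riemannian gradient descent preserves semi-symmetry). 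You also correctly flag that the deterministic-choice hypothesis is what resolves the ambiguity when leading singular values have multiplicity, which is precisely its role in the paper.
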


\begin{proof}
In HOSVD, the factors $\matU_{2},\dots,\matU_{d+1}$ are obtained
from the leading $r$ singular vectors for $\ten A_{(2)},\dots,\tenA_{(d+1)}$.
Since $\tenA$ is semi-symmetric we have $\tenA_{(2)}=\tenA_{(3)}=\cdots=\tenA_{(d+1)}$,
so as long as the singular vectors are chosen in a deterministic manner
we will have $\matU_{2}=\matU_{3}=\cdots=\matU_{d+1}$. Next, $\tenC=\tenA\modeprod 1\matU_{1}^{\T}\modeprod 2\matU^{\T}\cdots\modeprod{d+1}\matU^{\T}$.
Thus, for $p\neq1$:
\[
\tenC_{(p)}=\matU^{\T}\tenA_{(p)}\left(\matU\otimes\cdots\otimes\matU\otimes\matU_{1}\right)
\]
Since $\tenA_{(p)}=\tenA_{(q)}$ for every $p,q\neq1$, we find that
$\tenC_{(p)}=\tenC_{(q)}$ for any such $p,q$. Hence, by the previous
lemma, $\tenC$ is semi-symmetric. 

Semi-symmetry of $\HOSVD{\tenA}{\rb}$ follows from a similar argument.
Denote $\tenB=\HOSVD{\tenA}{\rb}$. Then, since $\tenB=\tenC\modeprod 1\matU_{1}\modeprod 2\matU\cdots\modeprod{d+1}\matU$
we have for $p\neq1$:
\[
\tenB_{(p)}=\matU\tenC_{(p)}\left(\matU\otimes\cdots\otimes\matU\otimes\matU_{1}\right)^{\T}
\]
Now, since $\tenC$ is semi-symmetric, we have $\tenC_{(p)}=\tenC_{(q)}$
for any $p,q\neq1$, so $\tenB_{(p)}=\tenB_{(q)}$ for any such $p,q$. 

For the final part of the lemma, note that $\tenA_{(p)}=\tenA_{(q)}$
for any $p,q\neq1$, so the multilinear rank of $\tenA$ is of the
form $\rank(\ten A)=(l,r,\dots,r)$ with $l\leq k$ and $r\leq m$.
Since $\tenA=\HOSVD{\tenA}{\rank(\ten A)}$ (\cite{de2000multilinearSVD}),
the claim follows from the first part of the lemma.
\end{proof}
This brings us to the main result of this subsection: that if the
initial point is semi-symmetric, the natural Riemannian gradient descent
iteration stays semi-symmetric, and thus any stationary point which
is the limit of such iterations is semi-symmetric. Namely, only a
minor modification of the algorithm (choosing a semi-symmetric initial
point) is needed to incorporate semi-symmetry as a constraint. 
\begin{prop}
Consider the following Riemannian gradient descent iteration: 
\[
\tenW^{(j+1)}\gets\HOSVD{\tenW^{(j)}+\alpha^{(j)}\grad F_{\lambda}\left(\tenW^{(j)}\right)}{[k,r,d]}
\]
 where $\alpha^{(1)},\alpha^{(2)},\dots$ are step sizes. In the above,
HOSVD plays the role of retraction~\cite{Kressner2013completion}.
If $\tenW^{(0)}\in\manM_{[k,r,d]}$ is semi-symmetric, then subsequent
iterates $\tenW^{(1)},\tenW^{(2)},\dots$ are semi-symmetric.
\end{prop}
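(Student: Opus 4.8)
The plan is to proceed by induction on $j$, so it suffices to show that if $\tenW^{(j)}\in\manM_{[k,r,d]}$ is semi-symmetric then so is $\tenW^{(j+1)}$. I would decompose the inductive step into three claims: (i) the Euclidean gradient $\nabla F_\lambda(\tenW^{(j)})$ is semi-symmetric; (ii) the orthogonal projection onto the tangent space preserves semi-symmetry at a semi-symmetric base point, so that the Riemannian gradient $\grad F_\lambda(\tenW^{(j)})=\matP_{T_{\tenW^{(j)}}\manM_{[k,r,d]}}\nabla F_\lambda(\tenW^{(j)})$ is semi-symmetric; and (iii) $\HOSVD{\cdot}{[k,r,d]}$ maps semi-symmetric tensors to semi-symmetric tensors, which is precisely Lemma~\ref{lem:HOSVDS-semi-symmetry}. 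Granting these, $\tenW^{(j)}+\alpha^{(j)}\grad F_\lambda(\tenW^{(j)})$ is a linear combination of semi-symmetric tensors, hence semi-symmetric (semi-symmetry is a linear condition, since tensor transposition is linear), and applying the retraction in (iii) yields a semi-symmetric $\tenW^{(j+1)}$.

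Claim (i) is immediate from the CP form in Eq.~(\ref{eq:CP-euc-grad}): in $\nabla F_0(\tenW^{(j)})=\llbracket\mathbf{1}_{n};\tenW^{(j)}\matX-\matY,\matX,\dots,\matX\rrbracket$ all factor matrices in modes $2,\dots,d+1$ equal $\matX$, so the tensor is invariant under any permutation of those modes. Adding $\lambda\tenW^{(j)}$, which is semi-symmetric by the inductive hypothesis, preserves this.

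The main obstacle is claim (ii), since it is not obvious that the explicit projection formulas respect semi-symmetry; I would avoid computing with them directly and argue instead by symmetry. Let $\pi(p,q)$ denote the transposition of two modes $p,q\in\{2,\dots,d+1\}$; these transpositions generate the full symmetric group acting on modes $2,\dots,d+1$, and a tensor is semi-symmetric exactly when it is fixed by all of them. Each $\pi(p,q)$ acts on $\R^{k\times m\times\cdots\times m}$ as a linear isometry, since it merely relabels entries and preserves the Frobenius inner product. The key step is to show that the tangent space $T_{\tenW^{(j)}}\manM_{[k,r,d]}$ is invariant under every $\pi(p,q)$. To do so I would use Lemma~\ref{lem:HOSVDS-semi-symmetry} to choose a decomposition $\tenW^{(j)}=\llbracket\tenC;\matU_{1},\matU,\dots,\matU\rrbracket$ with $\matU_{2}=\cdots=\matU_{d+1}=\matU$ and $\tenC$ semi-symmetric, and apply $\pi(p,q)$ to a generic tangent vector written via the parameterization of Eq.~(\ref{eq:tangent-space-G-V}): the leading term maps to one with core $\tenG_{\pi(p,q)}$ (the factor matrices being unchanged because $\matU_{p}=\matU_{q}=\matU$), while the terms indexed by $i=p$ and $i=q$ are interchanged, using $\tenC_{\pi(p,q)}=\tenC$ and the fact that $\matV_{p}^{\T}\matU=\matV_{q}^{\T}\matU=0$ keeps the gauge condition intact; the remaining terms are fixed. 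Hence $\pi(p,q)$ maps the tangent space into itself.

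Once tangent-space invariance is established, I would invoke the standard fact that an orthogonal projection onto a subspace invariant under an isometry commutes with that isometry, so $\matP_{T_{\tenW^{(j)}}\manM_{[k,r,d]}}$ commutes with each $\pi(p,q)$. Since $\nabla F_\lambda(\tenW^{(j)})$ is fixed by $\pi(p,q)$ by claim (i), its projection is fixed as well, so the Riemannian gradient is semi-symmetric, giving claim (ii). Combining (i)--(iii) with the induction closes the argument. I expect the only delicate point to be the tangent-space invariance in (ii); the group-action viewpoint is what makes it clean, and it sidesteps any manipulation of the projection formulas Eqs.~(\ref{eq:proj-tan-space-g})--(\ref{eq:proj-tan-space-v}).
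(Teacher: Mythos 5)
Your proposal is correct, and its overall skeleton (induction; semi-symmetry of the Euclidean gradient from the CP form; semi-symmetry of the Riemannian gradient; then Lemma~\ref{lem:HOSVDS-semi-symmetry} for the HOSVD retraction) matches the paper's proof. Where you genuinely diverge is in the middle step, the semi-symmetry of $\grad F_{\lambda}(\tenW^{(j)})$. The paper proves this by explicit computation: it writes out the projection components via Eqs.~(\ref{eq:proj-tan-space-g})--(\ref{eq:proj-tan-space-v}), observes that $\tenC_{(2)}=\cdots=\tenC_{(d+1)}$ forces $\matV_{2}=\cdots=\matV_{d+1}$ and that $\tenG$ is semi-symmetric, and then verifies directly that the mode-$p$ unfoldings of the assembled tangent tensor coincide for $p=2,\dots,d+1$. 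You instead argue abstractly: the mode transpositions $\pi(p,q)$, $p,q\geq2$, act as linear isometries; the tangent space at a semi-symmetric point, parameterized with the equal-factor decomposition $\left\llbracket \tenC;\matU_{1},\matU,\dots,\matU\right\rrbracket$, is invariant under them (your bookkeeping here is sound: the leading term picks up core $\tenG_{\pi(p,q)}$, the $i=p$ and $i=q$ terms swap using $\tenC_{\pi(p,q)}=\tenC$, and the gauge conditions $\matV_{i}^{\T}\matU=0$ are preserved under the swap); and an orthogonal projection onto an invariant subspace commutes with the isometry, so the projection of a $\pi(p,q)$-fixed tensor is fixed. Both arguments are valid. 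Your equivariance viewpoint is cleaner and more robust --- it sidesteps the projection formulas entirely and would apply verbatim to any equivariant construction on the manifold --- while the paper's computation buys concrete information that the abstract argument does not give, namely the explicit form of the gradient components (in particular that all factor components $\matV_{i}$ are equal), which is what the implementation exploits. One point worth making explicit in your write-up: invariance $\pi(p,q)\left(T_{\tenW^{(j)}}\manM_{[k,r,d]}\right)\subseteq T_{\tenW^{(j)}}\manM_{[k,r,d]}$ together with invertibility yields equality of the two subspaces (by dimension count), which is what the commutation fact actually requires.
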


\begin{proof}
Assume by induction that $\tenW^{(j)}\in\manM_{[k,r,d]}$ is semi-symmetric.
We necessarily have $\rank\left(\tenW^{(j)}\right)\leq(k,r,\dots,r)$
(element-wise). By Lemma \ref{lem:HOSVDS-semi-symmetry} we can write
$\tenW^{(j)}=\left\llbracket \tenC\,;\,\matU_{1},\matU,\dots,\matU\right\rrbracket $
where $\tenC$ is semi-symmetric (possibly padding with zeros and
adding orthogonal columns to $\matU$ if $\rank\left(\tenW^{(j)}\right)$
does not equal $(k,r,\dots,r)$). In the previous, we did not include
an iteration index for $\tenC,\matU_{1}$ and $\matU$ for ease of
notation.

First consider the Riemannian gradient. We have $\grad\left(\tenW^{(j)}\right)=\P_{T_{\tenW^{(j)}}\manM_{[k,r,d]}}\left(\nabla F_{\lambda}\left(\tenW^{(j)}\right)\right)$
and $\nabla F_{\lambda}\left(\tenW^{(j)}\right)=\left\llbracket \e;\tenW^{(j)}\matX-\matY,\matX,...,\matX\right\rrbracket +\lambda\tenW^{(j)}$.
The first term in $\nabla F_{\lambda}\left(\tenW^{(j)}\right)$ is
semi-symmetric since obviously mode unfolding for modes different
from the first are the same. This can be seen from formulas for unfolding
of CP decompositions. The second term is also semi-symmetric by the
inductive assumption. Thus, their sum, $\nabla F_{\lambda}\left(\tenW^{(j)}\right)$,
is semi-symmetric. Next, we project the sum to obtain the Riemannian
gradient. Let $\left\{ \tenG\,;\,0,\matV_{2},\dots,\matV_{d+1}\right\} =\P_{T_{\tenW^{(j)}}\manM_{[k,r,d]}}\left(\nabla F_{\lambda}\left(\tenW^{(j)}\right)\right)$.
Then, 
\[
\tenG=\nabla F_{\lambda}\left(\tenW^{(j)}\right)\modeprod{_{1}}\matU_{1}\modeprodrange{i=2}{d+1}\matU^{\T}\;\;,\;\;\matV_{i}=\matP_{\matU}^{\perp}\unfold{\left[\nabla F_{\lambda}\left(\tenW^{(j)}\right)\modeprod 1\matU_{1}\modeprod{l\neq i}\matU^{\T}\right]}i\unfold{\tenC}i^{\pinv}\,\,i=2,\dots,d+1
\]
Since $\tenC$ is semi-symmetric, $\tenC_{(2)}=\dots=\tenC_{(d+1)}$
then $\matV_{2}=\dots=\matV_{d+1}\eqqcolon\matV$. As for $\tenG$,
since $\nabla F_{\lambda}\left(\tenW^{(j)}\right)$ is semi-symmetric,
and mode $2$ to $d+1$ are multiplied by the same matrix, the mode
unfolding of modes $2,\dots,d+1$ are the same, and so it too is semi-symmetric.
Now, 
\[
\P_{T_{\tenW^{(j)}}\manM_{[k,r,d]}}\left(\nabla F_{\lambda}\left(\tenW^{(j)}\right)\right)=\tenG\modeprod 1\matU_{1}\modeprodrange{i=2}{d+1}\matU+\stackrel[i=2]{d+1}{\sum}\tenC\modeprod 1\matU_{1}\modeprod i\V\modeprod{l\neq i}\matU
\]
The first term is semi-symmetric since $\tenG$ is semi-symmetric
and we multiply by the same matrix in all modes but the first. As
for the second term, for $p=2,\dots,d+1$: {\small
\[
\left(\stackrel[i=2]{d+1}{\sum}\tenC\modeprod 1\matU_{1}\modeprod i\V\modeprod{l\neq i}\matU\right)_{(p)}=\matV\tenC_{(p)}\left(\matU\otimes\cdots\otimes\matU\otimes\matU_{1}\right)+\sum_{q\neq p}\matU\tenC_{(q)}\left(\matU\otimes\cdots\otimes\matU\otimes\underbrace{\matV}_{p\,\text{index}}\otimes\matU\otimes\cdots\otimes\matU\otimes\matU_{1}\right)^{\T}
\]
}We see that the mode unfoldings are the same for $p=2,\dots,d+1$
hence the second term is semi-symmetric. Since $\P_{T_{\tenW^{(j)}}\manM_{[k,r,d]}}\left(\nabla F_{\lambda}\left(\tenW^{(j)}\right)\right)$
is the sum of two semi-symmetric tensors, it too is semi-symmetric.

Finally, $\ten W^{(j+1)}$ is obtained from HOSVD of a semi-symmetric
tensor, so the result is semi-symmetric as well by Lemma~\ref{lem:HOSVDS-semi-symmetry}.
\end{proof}

\subsubsection{Semi-Symmetric Stationary Points for $d=2$, $r=1$, $\lambda=0$}

We have seen that for $d=1$ the stationary points for $r=1$ correspond
to generalized eigenvalues of a certain matrix pencil, and that stationary
points of higher ranks ($r>1$) are obtained by combining stationary
points for $r=1$. We now show that $r=1,d=2$ stationary points correspond
to generalized \emph{tensor} eigenpairs. However, it is unclear whether
stationary points of higher ranks can be obtained by combining stationary
points of $r=1$.

Given a HORRR problem defined by matrices $\matX$ and $\matY$, we
define the following two $\R^{m\times m\times m\times m}$ tensors:
\[
\tenZ_{\matX\matX}=\left\llbracket \e;\matX,\matX,\matX,\matX\right\rrbracket 
\]
\[
\tenZ_{\matX\matY}=\left\langle \left\llbracket \e;\matY,\matX,\matX\right\rrbracket ,\left\llbracket \e;\matY,\matX,\matX\right\rrbracket \right\rangle _{1}
\]
where $\e$ denotes the all-ones vector, and $\left\langle \cdot,\cdot\right\rangle _{1}$
denotes outer-product and contraction along the first mode of each
tensor (see~\cite[Section 3.3]{KoldaBader2006algorithm}). Both tensors
can be viewed as a form of higher-order Gram tensors. 

A pair $(\gamma,\u)$ is \textit{B-eigenpair }\cite{chang2009-B-eigen}
of the tensor pencil $\left(\tenZ_{\matX\matX},\tenZ_{\matX\matY}\right)$
if it satisfies 
\begin{align*}
\tenZ_{\matX\matX}\u & =\gamma\tenZ_{\matX\matY}\u\\
\u^{\T}\u & =1
\end{align*}
Obviously, this is a tensor generalization of generalized matrix eigenpairs. 

Consider a semi-symmetric tensor $\tenW\in\manM_{[k,1,2]}$. We can
write $\tenW=\left\llbracket \tenC\,;\,\matU_{1},\u,\u\right\rrbracket $,
where $\tenC\in\R^{k\times1\times1}$, $\matU_{1}\in\R^{k\times k}$,
and $\u\in\R^{m}$ with unit norm. Denote $\mathbf{c}=\squeeze{2,3}\left(\tenC\right)\in\R^{k}$.
We have $\tenW=\left\llbracket \matU_{1}\mathbf{c},\u,\u\right\rrbracket $,
where here we use $\left\llbracket \cdots\right\rrbracket $ to denote
a CP factorization.
\begin{thm}
\label{thm:tensor eigenpairs}If a semi-symmetric tensor $\tenW=\left\llbracket \matU_{1}\mathbf{c},\u,\u\right\rrbracket $
with $\TNorm{\u}=1$ is a stationary point, then there exists a $\gamma\in\R$
such that $(\gamma,\u)$ is B-eigenpair of the tensor pencil $\left(\tenZ_{\matX\matX},\tenZ_{\matX\matY}\right)$,
i.e. 
\[
\tenZ_{\matX\matX}\u=\gamma\tenZ_{\matX\matY}\u
\]
Furthermore, if $\u$ is a B-eigenpair of the tensor pencil $\left(\tenZ_{\matX\matX},\tenZ_{\matX\matY}\right)$
then there exist $\matU_{1}$ and $\mathbf{c}$ such that $\tenW=\left\llbracket \matU_{1}\mathbf{c},\u,\u\right\rrbracket $
is a stationary point. 
\end{thm}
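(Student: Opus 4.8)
The plan is to specialize the stationarity characterization of Lemma~\ref{lem:condition for stationary points} to the case $d=2$, $r=1$, $\lambda=0$ and then match the resulting equations against the $B$-eigenpair equation. Throughout I would write $\w\coloneqq\matU_{1}\mathbf{c}\in\R^{k}$, so the semi-symmetric candidate is the rank-one tensor $\tenW=\llbracket\w,\u,\u\rrbracket$, and set $a_{t}\coloneqq\u^{\T}\x_{t}$; then the apply operation gives $\tenW\x_{t}=a_{t}^{2}\w$ and the $t$-th residual column is $\matR_{:,t}=a_{t}^{2}\w-\y_{t}$. The bridge to the tensor-pencil language is the pair of apply identities $\tenZ_{\matX\matX}\u=\sum_{t}a_{t}^{3}\x_{t}$ and $\tenZ_{\matX\matY}\u=\sum_{t,s}(\y_{t}^{\T}\y_{s})\,a_{t}a_{s}^{2}\,\x_{t}$, which I would establish first by expanding the CP/contraction definitions of the two tensors and applying Eq.~(\ref{eq:apply-def}). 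Deriving the second identity cleanly---in particular unpacking the mode-$1$ contraction $\langle\cdot,\cdot\rangle_{1}$---is the one genuinely fiddly computation.

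For the forward direction I would feed $\tenW$ into the two conditions of Lemma~\ref{lem:condition for stationary points} (the factor condition is identical for $i=2$ and $i=3$ under the semi-symmetric ansatz). Since $(\u^{\T}\matX\odot\u^{\T}\matX)^{\T}$ is the vector with entries $a_{t}^{2}$, the core condition becomes $\matU_{1}^{\T}(s_{4}\w-\sum_{t}a_{t}^{2}\y_{t})=0$ with $s_{4}\coloneqq\sum_{t}a_{t}^{4}$; invertibility of $\matU_{1}$ forces $\w=s_{4}^{-1}\sum_{t}a_{t}^{2}\y_{t}$ (using $s_{4}\neq0$, which holds whenever $\matX$ has full row rank, since then $\matX^{\T}\u=0$ is impossible for $\TNorm{\u}=1$). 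Substituting $\matU_{1}\mathbf{c}=\w$ into the factor condition and using $\unfold{\tenC}2^{\pinv}=\mathbf{c}/\TNormS{\mathbf{c}}$, it collapses to $\TNormS{\w}\sum_{t}a_{t}^{3}\x_{t}=\sum_{t}a_{t}(\y_{t}^{\T}\w)\x_{t}$. Recognizing the left side as $\TNormS{\w}\,\tenZ_{\matX\matX}\u$ and, after inserting the formula for $\w$, the right side as $s_{4}^{-1}\tenZ_{\matX\matY}\u$, this is exactly $\tenZ_{\matX\matX}\u=\gamma\,\tenZ_{\matX\matY}\u$ with $\gamma=1/(s_{4}\TNormS{\w})$; together with $\TNorm{\u}=1$ this is the asserted $B$-eigenpair.

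For the converse I would reverse the algebra: given a $B$-eigenpair $(\gamma,\u)$, define $\w\coloneqq s_{4}^{-1}\sum_{t}a_{t}^{2}\y_{t}$ and choose any orthogonal $\matU_{1}$ with $\mathbf{c}=\matU_{1}^{\T}\w$, which makes the core condition hold by construction. It then remains only to check the factor condition, i.e.\ that the value $1/(s_{4}\TNormS{\w})$ it demands equals the prescribed $\gamma$. The clean way is to contract the eigenpair equation with $\u$: the two apply identities give $\u^{\T}\tenZ_{\matX\matX}\u=\sum_{t}a_{t}^{4}=s_{4}$ and $\u^{\T}\tenZ_{\matX\matY}\u=\TNormS{\sum_{t}a_{t}^{2}\y_{t}}=s_{4}^{2}\TNormS{\w}$, so the scalar identity $s_{4}=\gamma\,s_{4}^{2}\TNormS{\w}$ yields $\gamma=1/(s_{4}\TNormS{\w})$ exactly. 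Hence the factor condition holds and $\tenW=\llbracket\w,\u,\u\rrbracket$ is a stationary point.

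The main difficulty I anticipate is bookkeeping rather than conceptual: establishing the two apply identities and keeping the Khatri--Rao orderings straight in the factor condition so that the regression residual lines up with $\tenZ_{\matX\matX}\u$ and $\tenZ_{\matX\matY}\u$. I would also dispose of the degenerate cases by invoking the standing full-row-rank assumption on $\matX$: this rules out $s_{4}=0$ (which would mean $\u$ is orthogonal to every column of $\matX$) and, generically, $\sum_{t}a_{t}^{2}\y_{t}=0$ (which would force $\w=0$ and $\tenW=0$, a point off the manifold); outside these the constructed $\tenW$ is a genuine nonzero rank-one tensor and the pseudoinverses used above are well defined.
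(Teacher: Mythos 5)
Your proposal is correct and takes essentially the same route as the paper's proof: it specializes the stationarity characterization (core condition and factor condition) to the rank-one semi-symmetric ansatz, solves the core condition for $\matU_{1}\mathbf{c}$, recognizes the factor condition as the B-eigenpair equation, and in the converse pins down $\gamma$ by contracting the eigenpair equation with $\u$. The only difference is notational --- you work with sample-wise sums $a_{t}=\u^{\T}\x_{t}$ where the paper manipulates Khatri--Rao/Kronecker identities ($\matB=\u^{\T}\matX\odot\u^{\T}\matX$, $\matU_{1}\mathbf{c}=\|\matB\|_{2}^{-2}\matY\matB^{\T}$, $\gamma=\|\mathbf{c}\|_{2}^{-2}\|\matB\|_{2}^{-2}$), and these are the same computations.
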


\begin{proof}
First, we look at an arbitrary stationary point $\tenW=\left\llbracket \tenC\,;\,\matU_{1},\u_{2},\u_{3}\right\rrbracket .$
Then $\tenW$ satisfies $\grad F_{0}(\tenW)=\matP_{\tenW}\nabla F_{0}(\tenW)=\left\{ \ten G_{0};0,\v_{2},\v_{3}\right\} =0$,
where $\tenG_{0}=\nabla F_{0}(\tenW)\modeprod 1\matU_{1}^{\T}\modeprod 2\u_{2}^{\T}\modeprod 3\u_{3}^{\T}$
and $\V_{i}=\P_{\matU_{i}}^{\perp}\left[\nabla F_{0}(\tenW)\times_{j\neq i}\matU_{j}^{\T}\right]_{(i)}\tenC_{(i)}^{+}$
for $i=2,3$. From Lemma \ref{lem:zero-factors-critical}, we have
$\tenG_{0}=0$ which leads to 
\begin{equation}
\nabla F_{0}(\tenW)\modeprod 1\mat{\I_{k}}\modeprod 2\u_{2}^{\T}\modeprod 3\u_{3}^{\T}=0_{k\times1\times1}\label{eq:critical d=00003D2 g=00003D0}
\end{equation}
For the sake of readability, we denote $\B:=\u_{3}^{\T}\matX\odot\u_{2}^{\T}\matX=(\u_{3}\otimes\u_{2})^{\T}(\matX\odot\matX)$
(note that this is a row vector), then the non-zero scalar $\left(\u_{3}^{\T}\matX\odot\u_{2}^{\T}\matX\right)\left(\u_{3}^{\T}\matX\odot\u_{2}^{\T}\matX\right)^{\T}=\norm{\B}_{2}^{2}$
. Using the formula for $\nabla F_{0}(\tenW)$ in Eq. (\ref{eq:critical d=00003D2 g=00003D0})
and unfolding by the first mode, we find that:
\begin{equation}
\matU_{1}\mathbf{c}=\norm{\B}_{2}^{-2}\matY\left(\u_{3}^{\T}\matX\odot\u_{2}^{\T}\matX\right)^{\T}=\norm{\B}_{2}^{-2}\matY\B^{\T}\label{eq:U1c=00003Dfunc(B)}
\end{equation}
(the fact that $\matB\neq0$ follows from the fact that $\matX$ has
full row rank). Furthermore, unfolding Eq. (\ref{eq:critical d=00003D2 g=00003D0})
using mode 2 and $3$, we find that 
\[
\u_{2}^{\T}\nabla F_{0}(\tenW)_{(2)}(\u_{3}\otimes\matU_{1}\mathbf{c})=\u_{3}^{\T}\nabla F_{0}(\tenW)_{(3)}(\u_{2}\otimes\matU_{1}\mathbf{c})=0
\]
All the $\v_{i}$s are zero as well, so using the fact that $(\c^{\T})^{\pinv}=\TNorm{\c}^{-2}\c$:
\begin{align}
\v_{2} & =(\I_{m}-\u_{2}\u_{2}^{\T})\nabla F_{0}(\tenW)_{(2)}(\u_{3}\otimes\matU_{1}\mathbf{c})=\nabla F_{0}(\tenW)_{(2)}(\u_{3}\otimes\matU_{1}\c)=0\label{eq:critical d=00003D2 Vi=00003D0}\\
\v_{3} & =(\I_{m}-\u_{3}\u_{3}^{\T})\nabla F_{0}(\tenW)_{(3)}(\u_{2}\otimes\matU_{1}\mathbf{c})=\nabla F_{0}(\tenW)_{(3)}(\u_{2}\otimes\matU_{1}\c)=0\nonumber 
\end{align}

We first consider $\tenW=\left\llbracket \matU_{1}\mathbf{c},\u,\u\right\rrbracket $
which is a stationary point. Substituting in the unfolding of the
Euclidean gradient $\nabla F_{0}(\tenW)=\left\llbracket \tenW\matX-\matY,\matX,\matX\right\rrbracket $
in Eq. (\ref{eq:critical d=00003D2 Vi=00003D0}) and rearranging,
we obtain:
\begin{align*}
\X\left(\X\odot\tenW\matX\right)^{\T}(\u\otimes\matU_{1}\mathbf{c})= & \X\left(\X\odot\matY\right)^{\T}(\u\otimes\matU_{1}\mathbf{c})
\end{align*}
First, we look at the left hand side:
\begin{equation}
\X\left(\X\odot\tenW\matX\right)^{\T}(\u\otimes\matU_{1}\mathbf{c})=\X\left[(\u\otimes\matU_{1}\mathbf{c})^{\T}\left(\X\odot\tenW\matX\right)\right]{}^{\T}=\X\left[\left(\u^{\T}\X\right)\odot(\matU_{1}\mathbf{c})^{\T}\left(\tenW\matX\right)\right]{}^{\T}\label{eq:lhs}
\end{equation}
Note that $\tenZ_{\matX\matX}\u=\X\left(\u^{\T}\X\odot\u^{\T}\X\odot\u^{\T}\X\right)^{\T}$.
Substituting $\tenW\matX=\text{\ensuremath{\matU_{1}}\ensuremath{\mathbf{c}\left(\u^{\T}\matX\odot\u^{\T}\matX\right)}}$in
Eq.~(\ref{eq:lhs}), we have
\[
\X\left[\left(\u^{\T}\X\right)\odot\mathbf{c}^{\T}\matU_{1}^{\T}\matU_{1}\mathbf{c}\left(\u^{\T}\matX\odot\u^{\T}\matX\right)\right]{}^{\T}=\TNormS{\c}\X\left(\u^{\T}\X\odot\u^{\T}\matX\odot\u^{\T}\matX\right)^{\T}=\TNormS{\c}\mathcal{Z}_{\matX\matX}\u
\]
Next, we look at the right hand side. Substituting in Eq. (\ref{eq:U1c=00003Dfunc(B)}):
\begin{align*}
\X\left(\X\odot\matY\right)^{\T}(\u\otimes\matU_{1}\mathbf{c}) & =\norm{\mathbf{B}}_{2}^{-2}\X\left(\X\odot\matY\right)^{\T}\left(\u\otimes\matY\left(\u^{\T}\X\odot\u^{\T}\matX\right)^{\T}\right)\\
 & =\norm{\mathbf{B}}_{2}^{-2}\X\left[\left(\u^{\T}\X\odot\left(\u^{\T}\X\odot\u^{\T}\matX\right)\matY^{\T}\matY\right)\right]^{\T}\\
 & =\norm{\mathbf{B}}_{2}^{-2}\X\left[\left(\u^{\T}\X\odot\u^{\T}\matX\right)\matY^{\T}\matY\left(\u^{T}\X\odot\I_{n}\right)\right]^{\T}\\
 & =\norm{\mathbf{B}}_{2}^{-2}\X\left(\u^{\T}\X\odot\I_{n}\right)^{\T}\left[\left(\u^{\T}\X\odot\u^{\T}\matX\right)\matY^{\T}\matY\right]^{\T}\\
 & =\norm{\mathbf{B}}_{2}^{-2}\left(1\otimes\X\right)\left(\u^{\T}\X\odot\I_{n}\right)\matY^{\T}\matY\left(\u^{\T}\X\odot\u^{\T}\matX\right)^{\T}\\
 & =\norm{\mathbf{B}}_{2}^{-2}\left(\u^{\T}\X\odot\X\right)\matY^{\T}\matY\left(\u^{\T}\X\odot\u^{\T}\matX\right)^{\T}=\norm{\mathbf{B}}_{2}^{-2}\tenZ_{\matX\matY}\u
\end{align*}
where in the third equality we used the fact that for any two row
vectors $\a,\b$ of same size we have $\a\odot\b=\b\diag(\a)=\b\left(\a\odot\I\right)$,
and in the fifth equality we used the fact that $\u^{\T}\X\odot\I_{n}=\diag(\u^{\T}\X)$,
a diagonal matrix, hence symmetric, and the final equality follows
from observing that $(\tenZ_{\matX\matY})_{(1)}=(\matX\odot\matX)\matY^{\T}\matY(\matX\odot\matX)^{\T}$
so $\tenZ_{\matX\matY}\u=(\matX\odot\matX)\matY^{\T}\matY(\matX\odot\matX)^{\T}(\u\otimes\u\otimes\u)$.
Altogether, we obtained 
\[
\tenZ_{\matX\X}\u=\left\Vert \mathbf{c}\right\Vert _{2}^{-2}\norm{\mathbf{B}}_{2}^{-2}\tenZ_{\matX\matY}\u
\]
(both $\c$ and $\matB$ can not be equal to $0$. If $\c=0$ then
$\tenW=0$). Concluding that $\u$ is a B-eigenvector of $\left(\tenZ_{\matX\X},\tenZ_{\matX\matY}\right)$,
with the corresponding B-eigenvalue 
\begin{equation}
\gamma=\left\Vert \mathbf{c}\right\Vert _{2}^{-2}\norm{\mathbf{B}}_{2}^{-2}=\left\Vert \mathbf{c}\right\Vert _{2}^{-2}\left[\left(\u^{\T}\matX\odot\u^{\T}\matX\right)\left(\u^{\T}\X\odot\u^{\T}\matX\right)^{\T}\right]^{-1}\label{eq: gamma =00003D (cB)^2}
\end{equation}

We now show that any B-eigenpair corresponds to a semi-symmetric stationary
point. Let $\left(\gamma,\u\right)$ be a B-eigenpair of the tensor
pencil $\left(\tenZ_{\matX\X},\tenZ_{\matX\matY}\right)$ and we let
$\tenW=\left\llbracket \mathbf{c},\u,\u\right\rrbracket =\left\llbracket \tenC\,;\,\matI_{k},\u,\u\right\rrbracket \in\manM_{[k,1,2]}$
with $\mathbf{c}=\norm{\u^{\T}\matX\odot\u^{\T}\matX}_{2}^{-2}\matY\left(\u^{\T}\matX\odot\u^{\T}\matX\right)^{\T}$.
Setting again $\B:=\u^{\T}\matX\odot\u^{\T}\matX$, we can write $\mathbf{c}=\norm{\B}_{2}^{-2}\matY\B^{\T}$
and its norm $\TNormS{\c}=\norm{\B}_{2}^{-4}\B\matY^{\T}\matY\B^{\T}$,
as well as $\u^{\T}\mathcal{Z}_{\X\X}\u=\norm{\B}_{2}^{2}$ and $\mathcal{\u^{\T}Z}_{\X\matY}\u=\B\matY^{\T}\matY\B^{\T}$.
Since $\u$ is a B-eigenvector, we have $\u^{\T}\mathcal{Z}_{\X\X}\u=\gamma\mathcal{\u^{\T}Z}_{\X\matY}\u$
and we can express the B-eigenvalue as: 
\begin{equation}
\gamma=\left\Vert \mathbf{c}\right\Vert _{2}^{-2}\norm{\B}_{2}^{-2}.\label{eq: gamma=00003Dc^-2 * B^-2}
\end{equation}
$\tenW$ is obviously semi-symmetric, so we need only to prove that
it is a stationary point. We prove this by showing that $\grad F_{0}(\tenW)=\left\{ \ten G_{0};0,\v_{2},\v_{3}\right\} =0$. 

First mode unfolding of $\tenG_{0}$ :

\begin{doublespace}
\begin{align*}
\tenG_{0_{(1)}} & =\left(\nabla F_{0}(\tenW)\modeprod 1\matI_{k}\modeprod 2\u^{\T}\modeprod 3\u^{\T}\right)_{(1)}=\nabla F_{0}(\tenW)_{(1)}\left(\u\otimes\u\right)\\
 & =\left(\tenW\matX-\matY\right)\left(\X\odot\X\right)^{\T}\left(\u\otimes\u\right)\\
 & =\mathbf{c}\B\B^{\T}-\matY\B^{\T}\\
 & =\norm{\B}_{2}^{2}\mathbf{c}-\matY\B^{\T}\\
 & =\norm{\B}_{2}^{2}\norm{\B}_{2}^{-2}\matY\B^{\T}-\matY\B^{\T}=0
\end{align*}
As for the factors, we can write $\tenC_{(2)}=\tenC_{(3)}=\mathbf{c^{\T}}$
hence $\tenC_{(2)}^{\pinv}=\tenC_{(3)}^{\pinv}=\left\Vert \mathbf{c}\right\Vert _{2}^{-2}\mathbf{c}$,
and note that $\v_{2}=\v_{3}$, so it is enough to analyze just one
of them.

\begin{align*}
\v_{2} & =\left\Vert \mathbf{c}\right\Vert _{2}^{-2}\P_{\u}^{\perp}\nabla F_{0}(\tenW)_{(2)}\left(\u\otimes\mathbf{c}\right)\\
 & =\left\Vert \mathbf{c}\right\Vert _{2}^{-2}\P_{\u}^{\perp}\X\left(\X\odot(\tenW\matX-\matY)\right)^{\T}\left(\u\otimes\mathbf{c}\right)\\
 & =\left\Vert \mathbf{c}\right\Vert _{2}^{-2}\P_{\u}^{\perp}\left[\X\left(\X\odot\tenW\matX\right)^{\T}\left(\u\otimes\mathbf{c}\right)-\X\left(\X\odot\matY\right)^{\T}\left(\u\otimes\mathbf{c}\right)\right]
\end{align*}
Substituting $\tenW\matX=\mathbf{c}\left(\u^{\T}\matX\odot\u^{\T}\matX\right)$
and $\mathbf{c}=\norm{\B}_{2}^{-2}\matY\B^{\T}$, the left term in
the parentheses becomes
\begin{align*}
\X\left(\X\odot\tenW\matX\right)^{\T}(\u\otimes\mathbf{c}) & =\X\left[\left(\u^{\T}\X\right)\odot\mathbf{c}^{\T}\mathbf{c}\left(\u^{T}\matX\odot\u^{T}\matX\right)\right]=\left\Vert \mathbf{c}\right\Vert _{2}^{2}\X\left(\u^{\T}\X\odot\u^{\T}\matX\odot\u^{\T}\matX\right)^{\T}=\left\Vert \mathbf{c}\right\Vert _{2}^{2}\mathcal{Z}_{\X\X}\u
\end{align*}
And the right term in the parentheses becomes\textcolor{blue}{:}
\begin{align*}
\X\left(\X\odot\matY\right)^{\T}\left(\u\otimes\mathbf{c}\right) & =\norm{\B}_{2}^{-2}\X\left[\left(\u^{\T}\X\odot\left(\left(\u^{\T}\X\odot\u^{\T}\matX\right)\matY^{\T}\matY\right)\right)\right]^{\T}\\
 & =\norm{\B}_{2}^{-2}\X\left[\left(\u^{\T}\X\odot\u^{\T}\matX\right)\matY^{\T}\matY\left(\u^{\T}\X\odot\I_{n}\right)\right]^{\T}\\
 & =\norm{\B}_{2}^{-2}\X\left(\u^{\T}\X\odot\I_{n}\right)^{\T}\left[\left(\u^{\T}\X\odot\u^{\T}\matX\right)\matY^{\T}\matY\right]^{\T}\\
 & =\norm{\B}_{2}^{-2}\left(1\otimes\X\right)\left(\u^{\T}\X\odot\I_{n}\right)\matY^{\T}\matY\left(\u^{\T}\X\odot\u^{\T}\matX\right)^{\T}\\
 & =\norm{\B}_{2}^{-2}\left(\u^{\T}\X\odot\X\right)\matY^{\T}\matY\left(\u^{\T}\X\odot\u^{\T}\matX\right)^{\T}=\norm{\B}_{2}^{-2}\mathcal{Z}_{\X\matY}\u
\end{align*}
using the same arguments as before. 

Concluding, we obtained 
\[
\v_{2}=\left\Vert \mathbf{c}\right\Vert _{2}^{-2}\P_{\u}^{\perp}\left[\left\Vert \mathbf{c}\right\Vert _{2}^{2}\mathcal{Z}_{\X\X}\u-\norm{\B}_{2}^{-2}\mathcal{Z}_{\X\matY}\u\right]=\P_{\u}^{\perp}\left[\gamma\mathcal{Z}_{\X\matY}\u-\left\Vert \mathbf{c}\right\Vert _{2}^{-2}\norm{\B}_{2}^{-2}\mathcal{Z}_{\X\matY}\u\right]=0
\]
Similarly, $\v_{3}=0$, and the point is a stationary point. 
\end{doublespace}
\end{proof}
The next lemma shows that the order of such stationary points based
on their eigenvalues is the same as their order based on the value
of cost function.
\begin{lem}
Suppose that $\tenW,\tilde{\tenW}\in\manM_{[k,1,2]}$ are two semi-symmetric
stationary points, and let $(\gamma,\u)$,$(\tilde{\gamma},\tilde{\u})$
be their corresponding eigenpairs. Then $\FNorm{\tenW\matX-\matY}\leq\FNorm{\tilde{\tenW}\matX-\matY}$
if and only if $\gamma\leq\tilde{\gamma}$.
\end{lem}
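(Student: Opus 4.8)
The plan is to derive a closed-form expression for the cost $\FNormS{\tenW\matX-\matY}$ at a semi-symmetric stationary point purely in terms of its eigenvalue $\gamma$, and then observe that this expression is monotone in $\gamma$. Throughout I would write $\B:=\u^{\T}\matX\odot\u^{\T}\matX$ for the row vector attached to the point and $\mathbf{w}:=\matU_{1}\mathbf{c}$, so that the prediction collapses to the rank-one matrix $\tenW\matX=\mathbf{w}\B$. The key structural fact, already recorded inside the proof of Theorem~\ref{thm:tensor eigenpairs}, is the stationarity relation $\mathbf{w}=\norm{\B}_2^{-2}\matY\B^{\T}$.

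First I would recognize $\mathbf{w}\B$ as the orthogonal projection of $\matY$ onto the subspace $\{\mathbf{v}\B:\mathbf{v}\in\R^{k}\}$ of $\R^{k\times n}$: for the fixed direction $\B$, the coefficient $\mathbf{w}=\matY\B^{\T}/\norm{\B}_2^{2}$ is exactly the least-squares minimizer of $\FNormS{\mathbf{v}\B-\matY}$ over $\mathbf{v}$, so $\tenW\matX$ is that projection and the residual is orthogonal to it. By the Pythagorean identity this gives
\begin{equation*}
\FNormS{\tenW\matX-\matY}=\FNormS{\matY}-\FNormS{\mathbf{w}\B}=\FNormS{\matY}-\norm{\B}_2^{-2}\TNormS{\matY\B^{\T}},
\end{equation*}
where the last equality substitutes $\mathbf{w}=\norm{\B}_2^{-2}\matY\B^{\T}$ and uses $\FNormS{\mathbf{w}\B}=\TNormS{\mathbf{w}}\norm{\B}_2^{2}$. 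The same identity can be obtained purely algebraically by expanding $\FNormS{\mathbf{w}\B-\matY}$ and inserting the formula for $\mathbf{w}$.

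Next I would connect the quantity $\norm{\B}_2^{-2}\TNormS{\matY\B^{\T}}$ to $\gamma$. Since $\matU_{1}$ is orthogonal, $\TNormS{\mathbf{c}}=\TNormS{\mathbf{w}}=\norm{\B}_2^{-4}\TNormS{\matY\B^{\T}}$, and plugging this into the eigenvalue formula $\gamma=\TNormS{\mathbf{c}}^{-1}\norm{\B}_2^{-2}$ established in the proof of Theorem~\ref{thm:tensor eigenpairs} yields $\gamma=\norm{\B}_2^{2}/\TNormS{\matY\B^{\T}}$, hence $1/\gamma=\norm{\B}_2^{-2}\TNormS{\matY\B^{\T}}$. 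Substituting back produces the clean formula
\begin{equation*}
\FNormS{\tenW\matX-\matY}=\FNormS{\matY}-\frac{1}{\gamma}.
\end{equation*}

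Finally I would conclude by monotonicity. For a nonzero stationary point we have $\matY\B^{\T}=\norm{\B}_2^{2}\mathbf{w}\neq0$, so $\gamma>0$, and the map $\gamma\mapsto\FNormS{\matY}-1/\gamma$ is strictly increasing on $(0,\infty)$. Applying the displayed formula to both $\tenW$ and $\tilde{\tenW}$ and taking (nonnegative) square roots shows that $\FNorm{\tenW\matX-\matY}\le\FNorm{\tilde{\tenW}\matX-\matY}$ if and only if $\gamma\le\tilde{\gamma}$, as claimed. I expect the crux to be the projection/Pythagoras step together with the bookkeeping relating $\TNormS{\mathbf{c}}$, $\norm{\B}_2$ and $\gamma$; beyond this there is no genuine obstacle, since everything reduces to the stationarity formula for $\mathbf{w}$ and the eigenvalue expression already available from Theorem~\ref{thm:tensor eigenpairs}.
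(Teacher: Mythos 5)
Your proposal is correct and follows essentially the same route as the paper's proof: both rest on the stationarity formula $\matU_{1}\mathbf{c}=\norm{\B}_{2}^{-2}\matY\B^{\T}$ and the eigenvalue identity $\gamma=\TNormS{\mathbf{c}}^{-1}\norm{\B}_{2}^{-2}$ from Theorem~\ref{thm:tensor eigenpairs}, derive $\FNormS{\tenW\matX-\matY}=\FNormS{\matY}-\gamma^{-1}$, and conclude by monotonicity. The only cosmetic difference is that you obtain the cost identity via a projection/Pythagoras observation, whereas the paper gets the same identity by directly expanding the trace; as you note yourself, these are interchangeable.
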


\begin{proof}
For any $\tenW'=\left\llbracket \matU_{1}\mathbf{c},\u_{2},\u_{3}\right\rrbracket \in\manM_{[k,1,2]}$
, using the notation of $\B:=\u_{3}^{\T}\matX\odot\u_{2}^{\T}\matX$
, we can write
\begin{align*}
\FNormS{\tenW'\matX-\matY} & =\Trace{\left(\tenW\matX-\matY\right)^{\T}\left(\tenW\matX-\matY\right)}=\\
 & =\Trace{\left(\tenW\matX\right)^{T}\left(\tenW\matX\right)}-2\Trace{\left(\tenW\matX\right)^{\T}\matY}+\FNormS{\matY}\\
 & =\FNormS{\matY}+\Trace{\matB^{\T}\c^{\T}\matU_{1}^{\T}\matU_{1}\c\matB}-2\Trace{\matY^{\T}\mathbf{U}_{1}\mathbf{c}\B}\\
 & =\FNormS{\matY}+\TNormS{\c}\cdot\Trace{\matB^{\T}\matB}-2\Trace{\matY^{\T}\mathbf{U}_{1}\mathbf{c}\B}\\
 & =\FNormS{\matY}+\norm{\mathbf{c}}_{2}^{2}\norm{\B}_{2}^{2}-2\Trace{\matY^{\T}\mathbf{U}_{1}\mathbf{c}\B}
\end{align*}
where we used the fact that $\matB$ is a row vector. For any stationary
point, not necessarily semi-symmetric, we have shown in Eq. (\ref{eq:U1c=00003Dfunc(B)}),
as part of the proof of Theorem~\ref{thm:tensor eigenpairs}, that
$\matU_{1}\mathbf{c}=\norm{\B}_{2}^{-2}\matY\B^{\T}$ , so $\norm{\mathbf{c}}_{2}^{2}=\norm{\B}_{2}^{-4}\B\mathbf{Y}^{\T}\matY\B^{\T}$.
Using these results and the cyclic property of the trace, we can write
\[
\Trace{\matY^{\T}\mathbf{U}_{1}\mathbf{c}\B}=\norm{\B}_{2}^{-2}\Trace{\matY^{\T}\matY\B^{\T}\B}=\norm{\B}_{2}^{-2}\Trace{\B\matY^{\T}\matY\B^{\T}}=\norm{\B}_{2}^{-2}\B\matY^{\T}\matY\B^{\T}=\norm{\mathbf{c}}_{2}^{2}\norm{\B}_{2}^{2}
\]
Substituting, we have
\begin{align}
\FNormS{\tenW'\matX-\matY} & =\FNormS{\matY}-\norm{\B}_{2}^{-2}\B\matY^{\T}\matY\B^{\T}=\FNormS{\matY}-\norm{\mathbf{c}}_{2}^{2}\norm{\B}_{2}^{2}\label{eq:cost: BYYB=00003D(CB)^2}
\end{align}
Note that in the above formula, $\matB$ corresponds to $\tenW'$
which is a stationary point.

Now, let $\tenW,\tilde{\tenW}\in\manM_{[k,1,2]}$ be two semi-symmetric
stationary points, with cores $\tenC,\tilde{\tenC}$ respectively,
and let $(\gamma,\u)$,$(\tilde{\gamma},\tilde{\u})$ be their corresponding
eigenpairs. From Eq. (\ref{eq: gamma=00003Dc^-2 * B^-2}), shown as
part of the proof of Theorem~\ref{thm:tensor eigenpairs}, we have
$\gamma=\left\Vert \mathbf{c}\right\Vert _{2}^{-2}\norm{\u^{\T}\matX\odot\u^{\T}\matX}_{2}^{-2}$
and $\tilde{\gamma}=\left\Vert \mathbf{\tilde{c}}\right\Vert _{2}^{-2}\norm{\tilde{\u}^{\T}\matX\odot\tilde{\u}^{\T}\matX}_{2}^{-2}.$
We note that both are non-negative scalars since $\tenW,\tilde{\tenW}$
are non-zero. Then

\begin{doublespace}
\begin{align*}
\FNorm{\tenW\matX-\matY} & \leq\FNorm{\tilde{\tenW}\matX-\matY}\\
 & \Updownarrow\\
\FNormS{\tenW\matX-\matY} & \leq\FNormS{\tilde{\tenW}\matX-\matY}\\
 & \Updownarrow\\
\FNormS{\matY}-\gamma^{-1} & \leq\FNormS{\matY}-\tilde{\gamma}^{-1}\\
 & \Updownarrow\\
\gamma & \leq\tilde{\gamma}
\end{align*}
\end{doublespace}
\end{proof}

\section{\label{sec:experiments}Experiments}

In this section we report experimental results with an implementation
of our proposed HORRR algorithm. The goal of the experiments is to
demonstrate the ability of Riemannian optimization to solve HORRR
problems, and to illustrate the use HORRR for downstream learning
tasks. We implement the HORRR solver in MATLAB, using the MANOPT library
for Riemannian optimization \cite{manopt2014}. Specifically, we use
the manifold of tensors with fixed multilinear rank in Tucker format
(create via \texttt{fixedranktensorembeddedfactory}) \footnote{Though, we had to fix several issues, and improve the efficiency of
a some operations.}, and the Riemannian Conjugate Gradient method. The experiments are
not designed to be exhaustive, nor the methods designed for efficiency
or are optimized, making clock time an unsuitable metric for assessing
performance. Thus, we only assess the quality of the solution found
by the method, and downstream classification error, and do not measure
running time.

\subsection{Synthetic Data}

The first set of experiments is on synthetically generated problems.
Problems are generated in the following manner. First, we generate
the feature matrix $\X\sim\mathcal{N}(0,1)^{m\times n}$ (i.i.d entries),
and the coefficient tensor $\tenW_{\text{true}}\in\manM_{[k,r,2]}$
(so, we use $d=2$). $\tenW_{\text{true}}$ is sampled randomly on
$\manM_{[k,r,2]}$ by first sampling the core $\mathcal{\sim N}(0,1)^{k\times r\times r}$
and then for each factor, sampling entries i.i.d from the normal distribution
and then orthogonalizing the matrix. Finally, we set $\Y=(\tenW_{\text{true}}+a\cdot\xi)\X$,
where $\xi\sim\mathcal{N}(0,1)^{k\times m\times m}$ (i.i.d entries)
represents random noise and $a>0$ is the noise level. 

We report results for $k=100,m=100$ and $n=10,000$ samples. We test
the algorithm with varying multilinear rank $\mathbf{r}$, (Figure
\ref{fig: synth multi data}), varying regularization parameter ($\lambda$),
and noise level ($a$) (Figure \ref{fig: synth noise}). We report
the Relative Recovery Error (RRE)
\[
\text{RRE }(\tenW)\coloneqq\frac{\norm{\tenW\X-\tenW_{\text{true}}\X}_{F}}{\norm{\tenW_{\text{true}}\X}_{F}}
\]
as the main figure-of-merit. Predictive error (on out of sample $\X$)
was also inspected, but found to be similar to the recovery error,
and as such was excluded from the results presented here. For each
configuration, we generated five synthetic problem instances, truncated
collected statistics to the lowest number of iterations between runs,
and plot mean RRE, as well as a range between min and max. Two versions
of our algorithm are used: plain HORRR without recoring (Section \ref{subsec:Recoring}),
and another where we perform a single recore operation midway through
the iterations. 

Results for varying the multilinear rank $\mathbf{r}$ are shown in
Figure \ref{fig: synth multi data}, where we plot RRE as a function
of the number of iterations for several values of $r$. The regularization
parameter was set to $\lambda=10^{-3}$, and the noise level to $a=10^{-3}$
or $10^{-2}$. We see the ability of HORRR to achieve low values of
RRE (recall that due to the noise it is unrealistic to expect very
small values of RRE). An important observation is with regards to
the recore operation. We see that strategically choosing the iteration
in which we do a recore does accelerate convergence, though the plain
algorithm catches up eventually, and achieves similar accuracy. So,
recore is more effective if we settle to stop the iterations early
with a suboptimal solution. 

Next, we inspected how the final RRE is effected by the regularization
parameter $\lambda$ as well as different noise levels. RRE was almost
not affected at all by the regularization parameter, and the effects
observed are not consistent. Thus, we do not report results for final
RRE as a function of $\lambda$. As for the noise level, results are
reported in Figure \ref{fig: synth noise}, for $r=20$ and $\lambda=0.01$.
As expected, increasing noise levels makes it more challenging to
reconstruct the tensor accurately.

\begin{figure}
\begin{centering}
\includegraphics[width=0.4\textwidth]{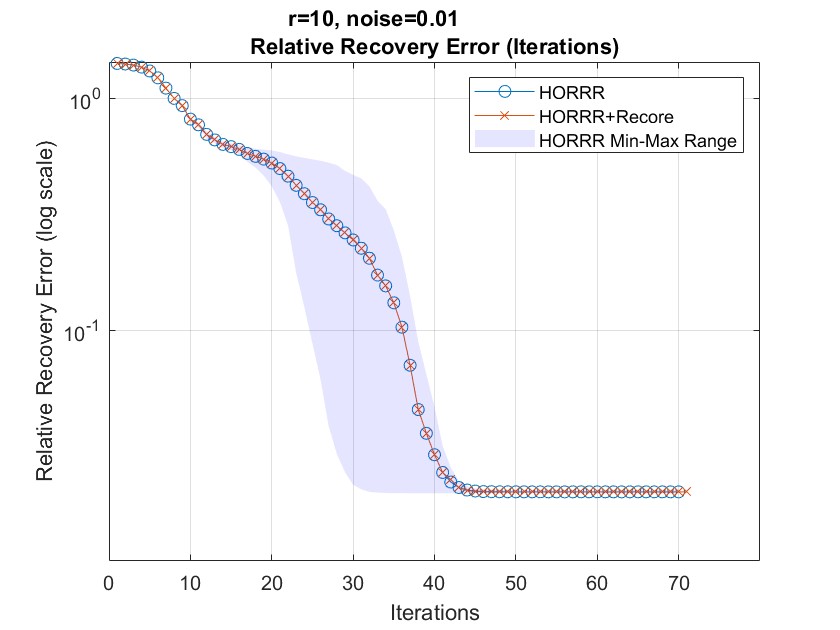}\includegraphics[width=0.4\textwidth]{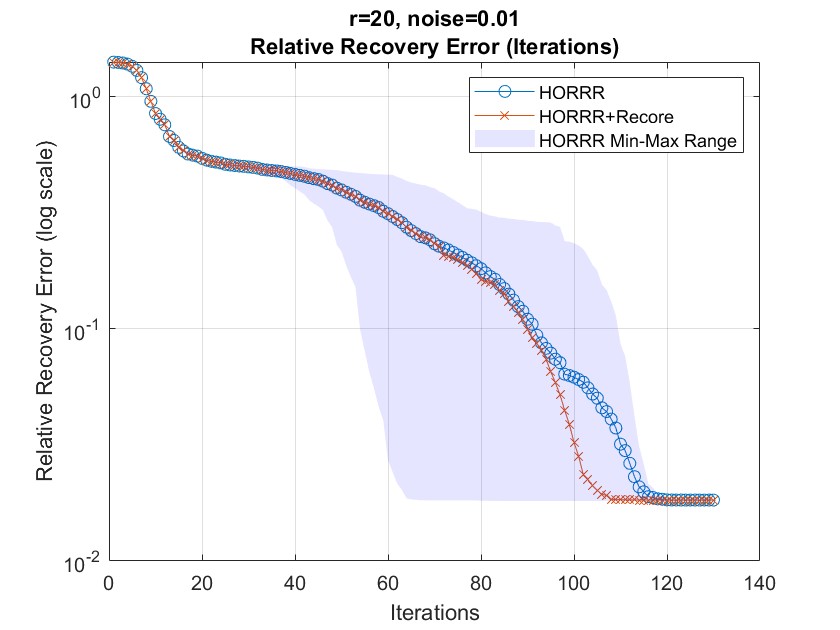}
\par\end{centering}
\begin{centering}
\includegraphics[width=0.4\textwidth]{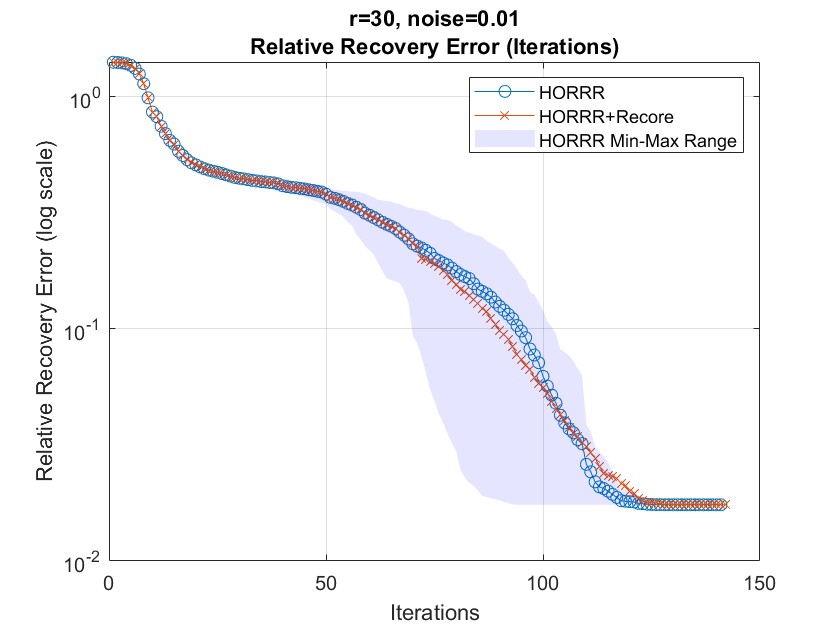}\includegraphics[width=0.4\textwidth]{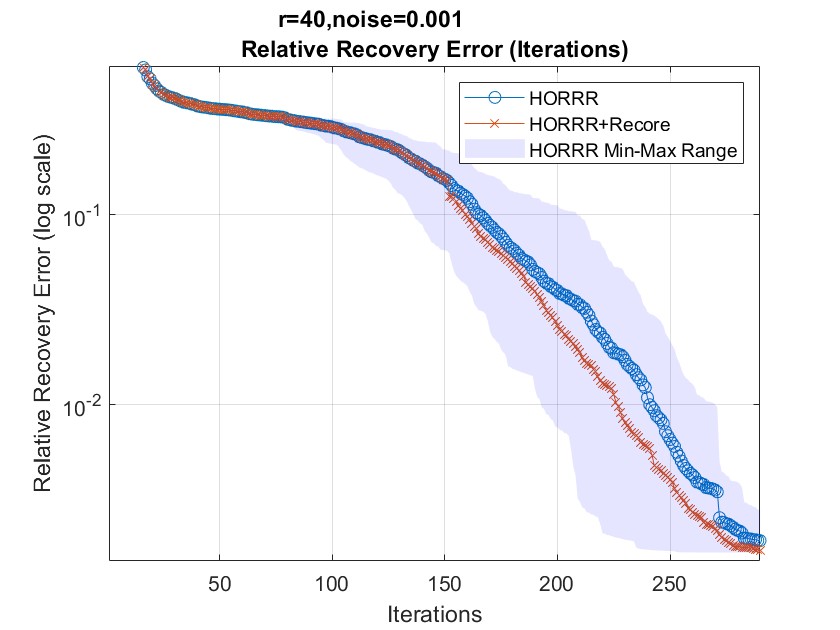}
\par\end{centering}
\caption{RRE as a function of number of iterations $(d=2)$.}
\label{fig: synth multi data}
\end{figure}

\begin{figure}
\centering{}\label{fig: synth noise}\includegraphics[scale=0.3]{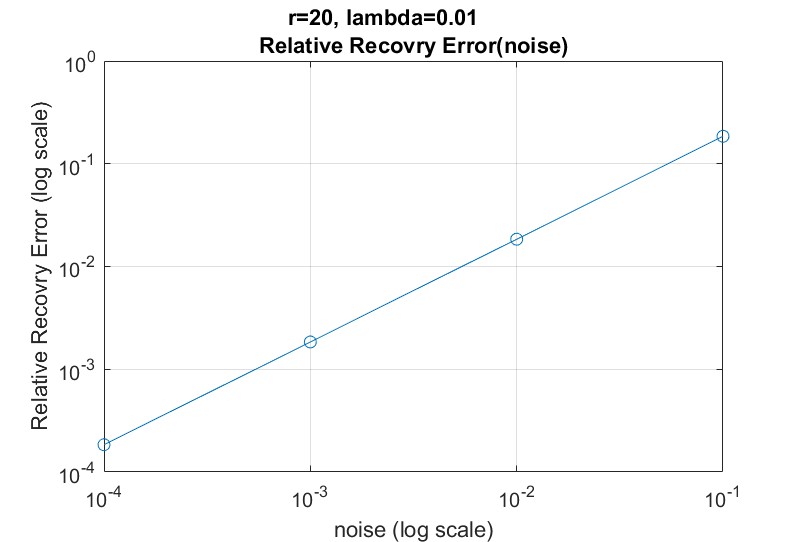}\caption{Final RRE as a function of $a$ (noise level).}
\end{figure}

\subsection{Using HORRR for Classification}

\begin{figure}
\begin{centering}
\includegraphics[width=0.8\textwidth]{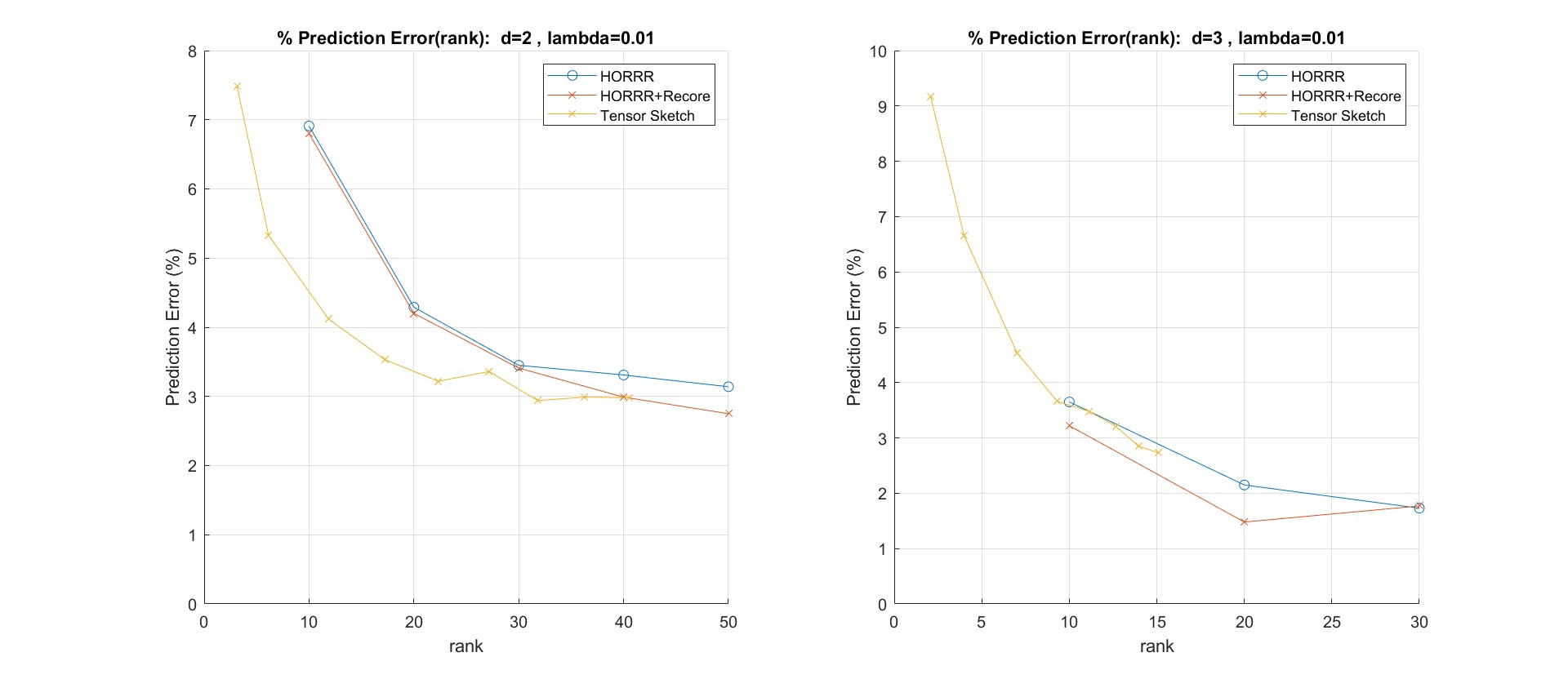}
\par\end{centering}
\centering{}\caption{MNIST, $\lambda=10^{-2}$.}
\label{fig:MNIST}
\end{figure}

We demonstrate the potential use of HORRR for real-world tasks by
using HORRR to build a non-linear classifier for MNIST. We do not
claim that our HORRR based classification method is the best method
for building a classifier for MNIST, or in general, but merely aim
to give an illustration that HORRR can be leveraged for downstream
learning tasks.

MNIST is is a dataset of $n=60000$ small $28$-by-$28$ gray-scale
images of handwritten digits between $0$ and $9$. The goal is to
learn a model that is able to deduce the digit present in future images.
Models are evaluated on an additional test dataset consisting of $10000$
images. We setup HORRR to solve this problem by using Regularized
Least Squares Classification, i.e. encode the class (digit label)
using a one-hot vector embedding (i.e vector of size $10$). The model
is described by a tensor $\tenW\in\manM_{[10,r,d]}$, where $r$ and
$d$ are hyper-parameters, and the prediction function is $f_{\ten W}(\x)=\tenW\x$
where $\x\in\R^{784}$ is a vectorization of the image. The output
of $f_{\ten W}(\x)$ is a score vector in $\R^{10}$, where entry
$i$ corresponds to the digit $i$, from which a prediction can be
extracted by finding the maximum entry. The coefficients $\tenW$
are fitted by solving the HORRR problem $\min_{\tenW}\FNormS{\tenW\matX-\matY}+\lambda\FNormS{\tenW}$
where $\matX\in\R^{784\times60000}$ and $\matY\in\R^{10\times60000}$
is the horizontal stacking of the one-hot embedding of the responses. 

We remark that RRR (i.e. $d=1$) learns a very poor classifier. The
reason is that for any rank deficient matrix $\matW$, the prediction
$\matW\matX$ is rank deficient as well, but $\matY$ is not well
approximated by a rank deficient matrix (this is not special to the
MNIST datasets, but a typical situation for one-hot encodings of labels).
In contrast, for $d\geq2$, the matrix $\tenW\matX$ can be, and usually
is, full rank, even though $\tenW$ might have low multilinear rank.

Predictions errors for $d=2$ and $d=3$ are presented in Figure \textcolor{blue}{\ref{fig:MNIST}}
for various ranks. The best prediction error achieved is $1.48\%$
for $d=3$ and $r=20$ with recoring. Curiously, in $d=3$ with recoring
we see improved results for $r=20$ over $r=30$, which suggest that
multilinear rank constraints acting as regularization might have improved
the model. However, we do not run the optimization long enough to
find a minima, and the benefit $r=20$ over $r=30$ might be due to
the found result being closer to a minima.

The prediction function learned by our model is multivariate homogeneous
polynomial. Thus, a natural baseline is the result of kernel ridge
regression (KRR) with polynomial kernel, which too learns a multivariate
polynomial, though only ridge regularization is applied to the coefficients
(see Section \ref{sec:HORRR problem statement}). It is challenging
to apply KRR to the full MNIST dataset, since that requires creating
and inverting a $60000$-by-$60000$ matrix, and this requires substantial
memory and compute time. Instead, we rely on published results in
\cite{avron2017faster}, which reports for $d=3$ an error of $1.45\%$.
We see that our method achieved similar results to the full kernel
method, but with only a fraction of the memory required. For $d=3,r=20$,
holding the model in memory for our method (i.e. keeping $\tenW$
in memory in factored form) requires $0.27\%$ of the memory required
to keep all the data points (required for applying the model). In
terms of the learning process, KRR requires forming the $60000$-by-$60000$
kernel matrix, and the storage for $\tenW$ requires only $0.0035\%$
of the memory required for that.

The solution of KRR can also be approximated. To provide another baseline,
we consider approximate KRR using \noun{TensorSketch}. In \noun{TensorSketch}
the kernel matrix $\matK\in\R^{n\times n}$ is approximated by $\matZ\matZ^{\T}$where
$\matZ\in\R^{n\times s}$ and $s$ is an hyperparameter (number of
random features). Higher values of $s$ lead to a better approximation
of $\matK$, while also increasing the cost of learning the model
(which scales as $O(nks^{2})$) and downstream predictions. In Figure
\textcolor{blue}{\ref{fig:MNIST}} we also plot results for the \noun{TensorSketch}
based classifier. To put both our method and the \noun{TensorSketch}
based method on the same graph, while both having fundamentally different
parameters (rank $r$ for our method, number of features $s$ for
\noun{TensorSketch}) we look for parity in the number of parameters
between models. That is, given a result with some value of $s$, we
compute $r_{\text{equivalent}}$ such that the number of parameters
in the learned \noun{TensorSketch} model (which equals to $sk$) is
equal to the the number of parameters with a rank $r_{\text{equivalent}}$
HORRR model (which equals $kr^{d}+dmr$). Figure \ref{fig:MNIST}
illustrates that the prediction error achieved by HORRR is comparable
to TensorSketch for $d=2$ as the rank increases, and improved accuracy
when applying a recore step for $d=3$, reinforcing the validity of
our approach.

\section{Conclusions}

This paper introduced Higher Order Reduced Rank Regression (HORRR)
as an extension of Reduced Rank Regression (RRR), capable of capturing
nonlinear relationships in multi-response regression problems. HORRR
utilizes tensor-based representations and incorporates Tucker decomposition
to impose multilinear rank constraints, leveraging Riemannian manifold
optimization to efficiently solve the constrained regression problem.
Empirical results demonstrated the efficacy of HORRR in both synthetic
and real-world scenarios. On synthetic datasets, HORRR achieved low
relative recovery errors (RRE) and exhibited robust performance under
varying noise levels. The recore operation further enhanced convergence
speed. When applied to the MNIST dataset for classification, we were
able to learn on a desktop computer a classifier with accuracy on
par with exact kernel based method using the polynomial kernel as
well as a sketched approximation of it. Such learning requires significantly
more storage resources then available to our solver. These results
demonstrate the potential of HORRR to handle complex, nonlinear, and
multi-dimensional regression workloads. 

In terms of theoretical analysis, we have shown that for $d=1$, in
which HORRR coincides with classic RRR, the global solution is the
only stable minima of our algorithm's underlying Riemannian optimization.
We have also partially established this for $d=2$ and $r=1$. In
practice our algorithm works well even for higher $d$. Further progress
on establishing stability of results for high order is challenging,
and we leave it for future work. An additional future research topic
is the incorporation of randomized preconditioning (possibly via sketching)
to accelerate the convergence. 

\section*{Acknowledgements}

This research was supported by the Israel Science Foundation (Grant
no. 1524/23).

\bibliographystyle{plain}
\bibliography{tensors}

\appendix

\section{\label{appendix: Invariance under orthogonalization}Invariance of
the Classification of the Stationary Points Under Linear Transformation
for $d=1$}

In this appendix we prove two technical lemmas needed for the analysis
of stationary points in $d=1$. They essentially show that under linear
transformations, the transformed stationary points keep the properties
such as local/global minima. 
\begin{lem}
\label{claim:open sets}Let $g:\manM_{[k,r,1]}\rightarrow\manM_{[k,r,1]}$
defined by $g(\W)\coloneqq\W\cdot\mathbf{\matL}$, where $\mathbf{\matL}$
is a $m\times m$ invertible matrix. Then $g(\W)$ transfers open
sets (neighborhoods) in $\manM_{[k,r,1]}$ to open sets (neighborhoods)
in $\manM_{[k,r,1]}$, with the Frobenius norm as a metric.
\end{lem}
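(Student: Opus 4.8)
The plan is to show that $g$ is a homeomorphism of $\manM_{[k,r,1]}$ onto itself; openness of the image of any open set then follows immediately, since for a homeomorphism the image of an open set $O$ under $g$ equals the preimage of $O$ under the continuous map $g^{-1}$, hence is open. First I would check that $g$ is well defined as a self-map of $\manM_{[k,r,1]}$. Since $\matL$ is $m\times m$ and invertible, right multiplication preserves rank: $\rank(\W\matL)=\rank(\W)=r$ for any $\W\in\manM_{[k,r,1]}$. (Recall that in this subsection $\manM_{[k,r,1]}$ is exactly the set of $k\times m$ matrices of rank $r$, per the assumption $r\le k$.) Thus $g(\W)\in\manM_{[k,r,1]}$, and the same argument applied to $\matL^{-1}$ shows that $\W\mapsto\W\matL^{-1}$ also maps $\manM_{[k,r,1]}$ into itself and is a two-sided inverse of $g$. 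Hence $g$ is a bijection of $\manM_{[k,r,1]}$ onto itself.

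Next I would establish that both $g$ and $g^{-1}$ are Lipschitz with respect to the Frobenius metric, which is the metric inherited by $\manM_{[k,r,1]}$ as a subset of $\R^{k\times m}$. Using submultiplicativity of the Frobenius norm against the spectral norm, for any $\W,\W'\in\manM_{[k,r,1]}$ we have
\[
\FNorm{g(\W)-g(\W')}=\FNorm{(\W-\W')\matL}\le\TNorm{\matL}\,\FNorm{\W-\W'},
\]
and symmetrically $\FNorm{g^{-1}(\W)-g^{-1}(\W')}\le\TNorm{\matL^{-1}}\,\FNorm{\W-\W'}$. Therefore $g$ is Lipschitz with constant $\TNorm{\matL}$ and $g^{-1}$ is Lipschitz with constant $\TNorm{\matL^{-1}}$; in particular both maps are continuous, so $g$ is a bi-Lipschitz homeomorphism of the metric space $(\manM_{[k,r,1]},\FNorm{\cdot})$ onto itself.

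Finally I would conclude. Let $O\subseteq\manM_{[k,r,1]}$ be open. Because $g^{-1}$ is continuous, $(g^{-1})^{-1}(O)=g(O)$ is open, which is exactly the claim; the neighborhood statement is the special case where $O$ is a neighborhood of a point $\W$, whose image $g(O)$ is then a neighborhood of $g(\W)$. I do not expect a genuine obstacle here: the only point requiring a moment's care is that the metric topology induced by $\FNorm{\cdot}$ on $\manM_{[k,r,1]}$ coincides with its subspace topology as an embedded submanifold of $\R^{k\times m}$, so that the bi-Lipschitz estimate above genuinely controls open sets of the manifold; this is immediate since the Frobenius metric on the ambient space restricts to the stated metric on $\manM_{[k,r,1]}$.
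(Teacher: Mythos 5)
Your proof is correct and takes essentially the same approach as the paper: both arguments rest on the fact that right-multiplication by $\matL$ and by $\matL^{-1}$ is Lipschitz in the Frobenius metric (via submultiplicativity), i.e. that $g$ is bi-Lipschitz. The paper simply inlines your ``preimage of an open set under the continuous inverse'' step as an explicit $\epsilon$--$\delta$ argument, using the equivalent lower bound $\FNorm{(\W-\tilde{\W})\matL}\geq\FNorm{\W-\tilde{\W}}/\FNorm{\matL^{-1}}$.
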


\begin{proof}
Recall that $\manM_{[k,r,1]}$ is the manifold of $k$-by-$m$ matrices
of rank $r$ ($m$ is implied by context). We first note that $g:\manM_{[k,r,1]}\rightarrow\manM_{[k,r,1]}$
is well defined since $\matW\mathbf{\matL}\in\manM_{[k,r,1]}$. The
reason is that multiplying $\W$ by an invertible matrix preserves
the rank. Thus the range is indeed $\manM_{[k,r,1]}$.

Let $U\subseteq\manM_{[k,r,1]}$ be an open set. Let $U':=g(U)=\left\{ g(\W)|\W\in U\right\} $.
We want to show that $U'$ is an open set. Let $\matW'=g(\W)\in U'$.
We need to build a neighborhood of $\matW'$ contained in $U'$. By
assumption $U$ is an open set, so there exists $\epsilon>0$ such
that every point $\tilde{\matW}$ such that $\FNorm{\matW-\tilde{\matW}}<\epsilon$
we have $\tilde{\matW}\in U.$ Let $\epsilon'=\epsilon/\FNorm{\matL^{-1}}$.
We will show that any $\tilde{\matW}'\in\manM_{[k,r,1]}$ such that
$\FNorm{\matW'-\tilde{\matW}'}<\epsilon'$, we have $\tilde{\matW}'\in U'$.
Now let $\tilde{\matW}=\tilde{\matW}'\matL^{-1}$, then $\tilde{\matW}'=g(\tilde{\matW}$).
So 
\[
\epsilon'>\FNorm{\matW'-\tilde{\matW}'}=\FNorm{g(\matW)-g(\tilde{\matW})}=\FNorm{\matW\matL-\tilde{\matW}\matL}\geq\FNorm{\matW-\tilde{\matW}}/\FNorm{\matL^{-1}}
\]
So
\[
\FNorm{\matW-\tilde{\matW}}<\epsilon'\cdot\FNorm{\matL^{-1}}=\epsilon
\]
which implies that $\tilde{\matW}\in U$ which, in turn, implies that
$\tilde{\matW}'=g(\tilde{\matW})\in U'$. Thus, we have shown that
for any point in $U'$ there is a neighborhood contained in $U'$,
so by definition it is an open set. 
\end{proof}
\begin{lem}
Let $F(\W)=\FNormS{\matW\matX-\matY}$ , $\X=\mathbf{\matL\Q}$ such
that $\Q$ has orthonormal rows and $\mathbf{\matL}$ is lower triangular
$m\times m$ invertible matrix. Define $g(\W)=\W\mathbf{\matL}$ and
$F'(g(\W))=\FNormS{g(\W)\Q-\matY}$. Then $\W^{*}$ is a local minimum
of $F$ in $\manM_{[k,r,1]}$ if and only if $g(\W^{*})$ is a local
minimum of $F'$ in $\manM_{[k,r,1]}$.
\end{lem}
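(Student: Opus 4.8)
The plan is to reduce the statement to the observation that $g$ is a homeomorphism of $\manM_{[k,r,1]}$ under which the two cost functions literally coincide. First I would record the key functional identity: since $\X=\matL\Q$, for every $\W\in\manM_{[k,r,1]}$ we have $g(\W)\Q=\W\matL\Q=\W\X$, and therefore
\[
F'(g(\W))=\FNormS{g(\W)\Q-\matY}=\FNormS{\W\X-\matY}=F(\W).
\]
Thus $F=F'\circ g$ \emph{exactly}, with no approximation. The whole argument hinges on this identity together with the fact that $g$ is a bijection, so I would make sure to state it first.

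Next I would establish that $g$ is a homeomorphism of $\manM_{[k,r,1]}$ onto itself. As already observed in the statement of Lemma~\ref{claim:open sets}, right multiplication by the invertible matrix $\matL$ preserves rank, so $g$ maps $\manM_{[k,r,1]}$ into itself; its inverse is $g^{-1}(\W')=\W'\matL^{-1}$, which has exactly the same form with $\matL^{-1}$ in place of $\matL$. Applying Lemma~\ref{claim:open sets} both to $\matL$ and to $\matL^{-1}$ shows that $g$ and $g^{-1}$ each carry open sets of $\manM_{[k,r,1]}$ to open sets, so $g$ is a homeomorphism in the Frobenius metric.

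With these two facts in hand, the equivalence follows by transferring neighborhoods. Suppose $\W^{*}$ is a local minimum of $F$, so there is an open neighborhood $U\subseteq\manM_{[k,r,1]}$ of $\W^{*}$ with $F(\W^{*})\le F(\W)$ for all $\W\in U$. Then $U'\coloneqq g(U)$ is an open neighborhood of $g(\W^{*})$ by Lemma~\ref{claim:open sets}. Each $\W'\in U'$ can be written uniquely as $\W'=g(\W)$ with $\W\in U$, and hence
\[
F'(\W')=F'(g(\W))=F(\W)\ge F(\W^{*})=F'(g(\W^{*})),
\]
so $g(\W^{*})$ is a local minimum of $F'$. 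The converse direction is identical after replacing $g$ by $g^{-1}$ and $F'$ by $F$, using that $g^{-1}$ is again multiplication by an invertible matrix.

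I expect no serious obstacle; the argument is essentially formal once the identity $F=F'\circ g$ is recorded. The only point requiring a little care is that the neighborhood transfer must work in \emph{both} directions, which is precisely why Lemma~\ref{claim:open sets} has to be invoked for $\matL$ and for $\matL^{-1}$ rather than used only once. I would write out the forward direction in full and remark that the reverse is symmetric.
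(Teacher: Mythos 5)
Your proposal is correct and follows essentially the same route as the paper: both arguments rest on the identity $g(\W)\Q=\W\matL\Q=\W\X$ (so $F=F'\circ g$) and on transferring neighborhoods via Lemma~\ref{claim:open sets}, applied to $\matL$ for one direction and to $\matL^{-1}$ for the converse. The only cosmetic difference is that the paper phrases the forward direction as a proof by contradiction while you argue directly, which is an equivalent formulation.
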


\begin{proof}
Let $\W^{*}$ be a local minimum of $F$ in $\manM_{[k,r,1]}$, then
there exists some neighborhood $E$ of $\W^{*}$ in $\manM_{[k,r,1]}$,
such that $\forall\W\in E$, $F(\W^{*})<F(\W)$. From the previous
claim, we deduce that $E':=g(E)=\left\{ g(\W)|\W\in E\right\} $ is
a neighborhood of $g(\W^{*})$. Suppose, for the sake of contradiction,
that $g(\W^{*})$ is \uline{not} a local minimum of $F'$ in $\manM_{[k,r,1]}$:
then there exists some $g(\W)\in E'$ such that $F'(g(\W))\leq F'(g(\W^{*}))$.
then 

\[
\FNormS{\W\X-\matY}=\FNormS{\W\matL\Q-\matY}=\FNormS{g(\W)\Q-\matY}\leq\FNormS{g(\W^{*})\Q-\matY}=\FNormS{\W^{*}\mathbf{\matL}\Q-\matY}=\FNormS{\W^{*}\X-\matY}
\]
which is a contradiction to $\W^{*}$ being a local minimum of $F$
in $E$.

Next, we can define $g^{-1}:\manM_{[k,r,1]}\rightarrow\manM_{[k,r,1]},$$g^{-1}(\W)=\W\mathbf{\matL^{-1}}$
and note the the previous claim holds for $g^{-1}$ as well: if $E'$
is a neighborhood of $g(\W^{*})$ in $\manM_{[k,r,1]}$, than $E:=g^{-1}(E')=\left\{ g^{-1}(g(\W))=\W|g(\W)\in E'\right\} $
is a neighborhood of $\W^{*}$, and we can prove the opposite direction
in the same manner. 
\end{proof}

\section{\label{appendix: hessian}Riemannian Hessian: Calculation of the
Curvature term}

In this appendix we show the development of the curvature term of
the Riemannian Hessian for our HORRR problem. Our starting point is
formulas for $\tilde{\tenC}$ (Eq. (\ref{eq:c_tilde})) and $\tilde{\mathbf{U}}_{i}$
(Eq. (\ref{eq:u_tilde})) originally introduced in \cite{heidel2018riemannian}.

We start with $\tilde{\tenC}:$\textcolor{blue}{{} }

\[
\tilde{\tenC}=\sum_{j=1}^{d+1}\left(\nabla F_{\lambda}(\tenW)\modeprod j\V_{j}^{\T}\modeprod{l\neq j}\matU_{l}^{\T}-\tenC\modeprod j\left(\V_{j}^{\T}\left[\nabla F_{\lambda}(\tenW)\modeprod{l\neq j}\matU_{l}^{\T}\right]_{(j)}\unfold{\tenC}j^{+}\right)\right)
\]
The first term in the summation vanishes, since $\V_{1}=0_{k\times k}$,
though we keep it for ease of notation. Denoting $\matR:=\tenW\matX-\matY$,
we have, for all $j\in[d+1]$:

\begin{align*}
\nabla F_{\lambda}(\tenW)\modeprod j\V_{j}^{\T}\modeprod{l\neq j}\matU_{l}^{\T} & =\left(\tenJ_{n}^{(d+1)}\modeprod 1\matR\modeprodrange{j=2}{d+1}\matX\right)\modeprod j\V_{j}^{\T}\modeprod{l\neq j}\matU_{l}^{\T}+\lambda\left(\tenC\modeprodrange{p=1}{d+1}\matU_{p}\right)\modeprod j\V_{j}^{\T}\modeprod{l\neq j}\matU_{l}^{\T}\\
 & =\tenJ_{n}^{(d+1)}\modeprod 1\matU_{1}^{\T}\matR\modeprod j\V_{j}^{\T}\X\modeprod{l\neq j}\matU_{l}^{\T}\X+\lambda\left(\tenC\modeprod j\V_{j}^{\T}\matU_{j}\modeprod{l\neq j}\matU_{l}^{\T}\matU_{l}\right)\\
 & =\tenJ_{n}^{(d+1)}\modeprod 1\matU_{1}^{\T}\matR\modeprod j\V_{j}^{\T}\X\modeprod{l\neq j}\matU_{l}^{\T}\X+0
\end{align*}
The last equality follows from $\V_{j}^{T}\matU_{j}=0$ for all $j\in[d+1]$.
Continuing to the next term, for $j\in[d+1]$: 

\begin{align*}
\V_{j}^{\T}\left[\nabla F_{\lambda}(\tenW)\modeprod{l\neq j}\matU_{l}^{\T}\right]_{(j)} & \tenC_{(j)}^{+}=\V_{j}^{\T}\left[\left(\tenJ_{n}^{(d+1)}\modeprod 1\matR\modeprodrange{j=2}{d+1}\matX\right)\modeprod{l\neq j}\matU_{l}^{\T}+\lambda\tenW\modeprod{l\neq j}\matU_{l}^{\T}\right]_{(j)}\tenC_{(j)}^{+}\\
 & =\V_{j}^{\T}\left[\left(\tenJ_{n}^{(d+1)}\modeprod 1\matU_{1}^{\T}\matR\modeprodrange j{}\matX\modeprod{l\neq j}\matU_{l}^{\T}\matX\right)+\lambda\left(\tenC\modeprodrange{p=1}{d+1}\matU_{p}\right)\modeprod{l\neq j}\matU_{l}^{\T}\right]_{(j)}\tenC_{(j)}^{+}\\
 & =\V_{j}^{\T}\left[\left(\tenJ_{n}^{(d+1)}\modeprod 1\matU_{1}^{\T}\matR\modeprodrange j{}\matX\modeprod{l\neq j}\matU_{l}^{\T}\matX\right)+\lambda\tenC\modeprod j\matU_{j}\right]_{(j)}\tenC_{(j)}^{+}\\
 & =\V_{j}^{\T}\X\left(\matU_{d+1}^{\T}\matX\odot\cdots\odot\matU_{j+1}^{\T}\matX\odot\matU_{j-1}^{\T}\matX\odot\cdots\odot\matU_{2}^{\T}\matX\odot\matU_{1}^{T}\matR\right)^{\T}\tenC_{(j)}^{\dagger}+\lambda\V_{j}^{\T}\matU_{j}\tenC_{(j)}\tenC_{(j)}^{+}\\
 & =\V_{j}^{\T}\X\left(\matU_{d+1}^{\T}\matX\odot\cdots\odot\matU_{j+1}^{\T}\matX\odot\matU_{j-1}^{\T}\matX\odot\cdots\odot\matU_{2}^{\T}\matX\odot\matU_{1}^{T}\matR\right)^{\T}\tenC_{(j)}^{+}
\end{align*}
Where the last equality follows from $\V_{j}^{T}\matU_{j}=0$. Concluding
the calculation for $\tilde{\mathcal{C}}$: 
\begin{align*}
\tilde{\tenC} & =\sum_{j=2}^{d+1}\left(\tenJ_{n}^{(d+1)}\modeprod 1\matU_{1}^{\T}\matR\modeprod j\V_{j}^{\T}\X\modeprod{l\neq j}\matU_{l}^{\T}\X\right.\\
 & \;\;\;\;\;\;\;\;\;\;\left.-\tenC\modeprod j\left(\V_{j}^{\T}\X\left(\matU_{d+1}^{\T}\matX\odot\cdots\odot\matU_{j+1}^{\T}\matX\odot\matU_{j-1}^{\T}\matX\odot\cdots\odot\matU_{2}^{\T}\matX\odot\matU_{1}^{T}\matR\right)^{\T}\tenC_{(j)}^{+}\right)\right)
\end{align*}

For a stationary point, the last addend, using lemma \ref{lem:condition for stationary points},
will be zero:

\[
\V_{j}^{\T}\X\left(\matU_{d+1}^{\T}\matX\odot\cdots\odot\matU_{j+1}^{\T}\matX\odot\matU_{j-1}^{\T}\matX\odot\cdots\odot\matU_{2}^{\T}\matX\odot\matU_{1}^{T}\matR\right)^{\T}=\lambda\V_{j}^{\T}\matU_{j}=0
\]
Thus we have a simplified formula: 

\[
\tilde{\tenC}=\sum_{j=2}^{d+1}\left(\tenJ_{n}^{(d+1)}\modeprod 1\matU_{1}^{\T}\matR\modeprod j\V_{j}^{\T}\X\modeprod{l\neq j}\matU_{l}^{\T}\X\right)
\]

We move on to calculate $\tilde{\mathbf{U}}_{i}$'s. Since it is in
the tangent space, we have $\tilde{\mathbf{U}}_{1}=0_{k\times k}$.
For $i=2,...,d+1$:

\[
\tilde{\mathbf{U}}_{i}=\P_{U_{i}}^{\perp}\left(\left[\nabla F_{\lambda}(\tenW)\modeprod{i\neq j}\matU_{j}^{\T}\right]_{(i)}\left(\I-\tenC_{(i)}^{+}\tenC_{(i)}\right)\tenG_{(i)}^{\T}\tenC_{(i)}^{+\T}+\sum_{l\neq i}\left[\nabla F_{\lambda}(\tenW)\modeprod l\V_{l}^{\T}\modeprod{l\neq j\neq i}\matU_{j}^{\T}\right]_{(i)}\right)\tenC_{(i)}^{+}
\]
As calculated before, we have

\begin{align*}
\left[\nabla F_{\lambda}(\tenW)\modeprod{i\neq j}\matU_{j}^{\T}\right]_{(i)} & =\X\left(\matU_{d+1}^{\T}\matX\odot\cdots\odot\matU_{i+1}^{\T}\matX\odot\matU_{i-1}^{\T}\matX\odot\cdots\odot\matU_{2}^{\T}\matX\odot\matU_{1}^{T}\matR\right)^{\T}+\lambda\matU_{i}\tenC_{(i)}
\end{align*}
and 

\[
\left[\nabla F_{\lambda}(\tenW)\modeprod{i\neq j}\matU_{j}^{\T}\right]_{(i)}\left(\I-\tenC_{(i)}^{+}\tenC_{(i)}\right)=\X\left(\matU_{d+1}^{\T}\matX\odot\cdots\odot\matU_{i+1}^{\T}\matX\odot\matU_{i-1}^{\T}\matX\odot\cdots\odot\matU_{2}^{\T}\matX\odot\matU_{1}^{T}\matR\right)^{\T}\left(\I-\tenC_{(i)}^{+}\tenC_{(i)}\right)
\]
Where the regularization term nullified since $\unfold{\tenC}i\unfold{\tenC}i^{\pinv}=\I_{r}$.

For a stationary point, once again, when applying the 2nd condition
from lemma \ref{lem:condition for stationary points}, in the following
form: 

\begin{align*}
\lambda\unfold{\tenC}i & =-\matU_{i}^{\T}\matX\left(\matU_{d+1}^{\T}\matX\odot\cdots\odot\matU_{i+1}^{\T}\matX\odot\matU_{i-1}^{\T}\matX\odot\cdots\odot\matU_{2}^{\T}\matX\odot\matU_{1}^{T}\matR\right)^{\T}
\end{align*}
we obtain

\[
\matU_{i}^{\T}\left[\nabla F_{0}(\tenW)\modeprod{i\neq j}\matU_{j}^{\T}\right]_{(i)}\left(\I-\tenC_{(i)}^{+}\tenC_{(i)}\right)=-\lambda\unfold{\tenC}i\left(\I-\tenC_{(i)}^{+}\tenC_{(i)}\right)=0
\]
Thus we have 

\[
\P_{U_{i}}^{\perp}\left(\left[\nabla F_{\lambda}(\tenW)\modeprod{i\neq j}\matU_{j}^{\T}\right]_{(i)}\left(\I-\tenC_{(i)}^{+}\tenC_{(i)}\right)\tenG_{(i)}^{\T}\tenC_{(i)}^{+\T}\tenC_{(i)}^{+}\right)=\left[\nabla F_{\lambda}(\tenW)\modeprod{i\neq j}\matU_{j}^{\T}\right]_{(i)}\left(\I-\tenC_{(i)}^{+}\tenC_{(i)}\right)\tenG_{(i)}^{\T}\left(\tenC_{(i)}\tenC_{(i)}^{T}\right)^{-1}
\]

As for the last addend:

\begin{align*}
\left[\nabla F_{\lambda}(\tenW)\modeprod{l\neq i}\V_{l}^{\T}\modeprod{l\neq j\neq i}\matU_{j}^{\T}\right]_{(i)} & =\left(\tenJ_{n}^{(d+1)}\modeprod 1\matR\modeprodrange{p=2}{d+1}\matX\right)\modeprod{l\neq i}\V_{l}^{\T}\modeprod{j\neq l\neq i}\matU_{j}^{\T}+\lambda\left(\tenC\modeprodrange{p=1}{d+1}\matU_{p}\right)\modeprod{l\neq i}\V_{l}^{\T}\modeprod{j\neq l\neq j}\matU_{j}^{\T}\\
 & =\left[\tenJ_{n}^{(d+1)}\modeprod 1\matU_{1}^{\T}\matR\modeprod i\X\modeprod l\V_{l}^{\T}\X\modeprod{j\neq l\neq i}\matU_{j}^{\T}\X+\lambda\left(\tenC\modeprod i\matU_{i}\modeprod j\V_{j}^{\T}\matU_{j}\modeprod{l\neq j}\matU_{l}^{\T}\matU_{l}\right)\right]_{(i)}\\
 & =\left[\tenJ_{n}^{(d+1)}\modeprod 1\matU_{1}^{\T}\matR\modeprod i\X\modeprod l\V_{l}^{\T}\X\modeprod{j\neq l\neq i}\matU_{j}^{\T}\X\right]_{(i)}\\
 & =\X\left(\matU_{d+1}^{\T}\matX\odot\cdots\odot\V_{l}^{\T}\X\odot...\odot\matU_{i+1}^{\T}\matX\odot\matU_{i-1}^{\T}\matX\odot\cdots\odot\matU_{2}^{\T}\matX\odot\matU_{1}^{T}\matR\right)^{\T}
\end{align*}
where in the 3rd equality we see that the regularization term nullifies
again. 

Concluding the calculation for $\tilde{\mathcal{\U}_{i}}$: 

\begin{align*}
\tilde{\mathbf{U}}_{i} & =\P_{U_{i}}^{\perp}\left(\X\left(\matU_{d+1}^{\T}\matX\odot\cdots\odot\matU_{i+1}^{\T}\matX\odot\matU_{i-1}^{\T}\matX\odot\cdots\odot\matU_{2}^{\T}\matX\odot\matU_{1}^{T}\matR\right)^{\T}\left(\I-\tenC_{(i)}^{+}\tenC_{(i)}\right)\tenG_{(i)}^{\T}\tenC_{(i)}^{+\T}\right.\\
 & \left.\;\;\;\;\;\;\;\;\;+\sum_{l\neq i}\left[\X\left(\matU_{d+1}^{\T}\matX\odot\cdots\odot\V_{l}^{\T}\X\odot...\odot\matU_{i+1}^{\T}\matX\odot\matU_{i-1}^{\T}\matX\odot\cdots\odot\matU_{2}^{\T}\matX\odot\matU_{1}^{T}\matR\right)^{\T}\right]\right)\tenC_{(i)}^{+}
\end{align*}

\section{\label{appendix: convergence}On Convergence of Riemannian Optimization
for HORRR}

Proposition \ref{prop: convergence} suggests the convergence of HORRR.
However, the set $\manM_{\rb}$ is not closed. To enforce that every
accumulation point $\tenW_{*}\in\manM_{\rb}$, we regularize the cost
function in such a way that the iterates $\tenW_{k}$ stay inside
a compact subset of $\manM_{\rb}$. Thus, we discuss the convergence
of a modification of the original cost function, which is done by
regularizing the singular values of the matricizations of $\tenW$.
We define the modified cost function $\widehat{F_{\lambda}}$ with
regularization parameter $\tau>0:$

\[
\widehat{F_{\lambda}}:\manM_{\rb}\rightarrow\R,\;\tenW\longmapsto F_{\lambda}(\tenW)+\tau^{2}\sum_{i_{1}=1}^{d+1}\left(\left\Vert \tenW_{(i)}\right\Vert _{F}^{2}+\left\Vert \unfold{\tenW}i^{\pinv}\right\Vert _{F}^{2}\right)
\]

\begin{prop}
Let $\left(\tenW_{k}\right)_{k\in\N}$ be an infinite sequence of
iterates generated by HORRR Algorithm with the modified function $\widehat{F_{\lambda}}$.
Then $\lim_{k\to\infty}\FNorm{\grad(\hat{F_{\lambda}}(\ten W_{k})}=0.$ 
\end{prop}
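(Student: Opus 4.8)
The plan is to follow the regularization argument of Kressner et al.~\cite{Kressner2013completion} and Vandereycken~\cite{vandereycken2013low}, thereby reducing the claim to the general convergence theorem for Riemannian descent methods~\cite[Theorem 4.3.1]{absil2008optimization-book}. The decisive property is that, unlike $F_{\lambda}$, the modified objective $\widehat{F_{\lambda}}$ is \emph{proper} on $\manM_{\rb}$: its sublevel sets are compact subsets of the (open, non-closed) manifold. Granting this, the HORRR iterates are generated by a gradient-related descent scheme with an Armijo-type line search, so $\widehat{F_{\lambda}}(\tenW_{k+1}) \le \widehat{F_{\lambda}}(\tenW_{k})$, and hence all iterates remain in the compact sublevel set $\mathcal{L} \coloneqq \{\tenW \in \manM_{\rb} : \widehat{F_{\lambda}}(\tenW) \le \widehat{F_{\lambda}}(\tenW_0)\}$, where $\tenW_0$ is the initial iterate. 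This is exactly the compactness that was missing in Proposition~\ref{prop: convergence}.

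First I would establish the two blow-up properties that together yield properness. Since each matricization is a rearrangement of the entries, $\sum_{i}\FNormS{\unfold{\tenW}i} = (d+1)\FNormS{\tenW}$, so the term $\tau^2 \sum_i \FNormS{\unfold{\tenW}i}$ is coercive and, using $F_{\lambda} \ge 0$, forces $\FNormS{\tenW} \le \widehat{F_{\lambda}}(\tenW_0)/((d+1)\tau^2)$ on $\mathcal{L}$; this prevents escape to infinity. The term $\tau^2 \sum_i \FNormS{\unfold{\tenW}i^{\pinv}} = \tau^2 \sum_i \sum_{j} \sigma_j(\unfold{\tenW}i)^{-2}$ controls the non-closed boundary of $\manM_{\rb}$: if a sequence in $\mathcal{L}$ converged in the ambient space to a tensor whose $i$-th multilinear rank drops below $r_i$, then $\sigma_{r_i}(\unfold{\tenW}i) \to 0$, so this penalty would diverge, contradicting the uniform bound on $\mathcal{L}$. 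Because ambient limits of rank-$\rb$ tensors can only lose rank, the limit must again lie in $\manM_{\rb}$, so $\mathcal{L}$ is a bounded subset of $\manM_{\rb}$ that is bounded away from the boundary, hence compact.

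Next I would verify the hypotheses of~\cite[Theorem 4.3.1]{absil2008optimization-book} on $\mathcal{L}$: the Riemannian conjugate gradient search directions are gradient-related (the method periodically resets to steepest descent), the HOSVD retraction is smooth, and the step sizes satisfy the Armijo condition. That theorem then guarantees that every accumulation point of $\{\tenW_k\}$ is a critical point of $\widehat{F_{\lambda}}$. To upgrade this to the stated limit I would argue by contradiction: if $\FNorm{\grad \widehat{F_{\lambda}}(\tenW_k)}$ did not tend to zero, some subsequence would be bounded below by an $\epsilon > 0$; by compactness of $\mathcal{L}$ it would admit a convergent sub-subsequence whose limit $\tenW_*$ is an accumulation point of $\{\tenW_k\}$, hence critical, while continuity of $\grad \widehat{F_{\lambda}}$ would give $\FNorm{\grad \widehat{F_{\lambda}}(\tenW_*)} \ge \epsilon > 0$, a contradiction. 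Therefore $\lim_k \FNorm{\grad \widehat{F_{\lambda}}(\tenW_k)} = 0$.

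I expect the main obstacle to be the rigorous verification that the pseudo-inverse penalty genuinely enforces compactness, that is, carefully tying the divergence of $\FNorm{\unfold{\tenW}i^{\pinv}}$ to the loss of multilinear rank and thereby ruling out any way for a bounded-energy sequence to have accumulation points outside $\manM_{\rb}$. The remaining ingredients (coercivity, and the gradient-related/Armijo checks) are routine, and since the steps are essentially identical to those in~\cite{Kressner2013completion}, I would state them briefly and refer there for the detailed singular-value estimates.
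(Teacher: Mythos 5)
Your proposal is correct and takes essentially the same route as the paper: the paper's proof is a one-line deferral to \cite[Prop.~3.2]{Kressner2013completion} (noting that the only property of the cost used there is non-negativity), and what you have written out — coercivity plus the pseudo-inverse penalty giving compact sublevel sets, then \cite[Theorem~4.3.1]{absil2008optimization-book} and a compactness/continuity argument to pass from ``accumulation points are critical'' to vanishing gradient norm — is precisely the argument of that cited proposition, adapted to $\widehat{F_{\lambda}}$.
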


\begin{proof}
The proof is essentially the same as the proof of \cite[Prop. 3.2]{Kressner2013completion},
only with a different cost function. The only property of the cost
function that the proof of \cite[Prop. 3.2]{Kressner2013completion}
uses is that it is non-negative, which holds for our $F_{\lambda}$
as well. 
\end{proof}
The regularization parameter $\tau$ can be chosen arbitrarily small.
If the matricizations are always of full row rank during the optimization
procedure (even when $\tau\rightarrow0$), then the accumulation points
$\tenW_{*}$ are guaranteed to stay inside $\manM_{\mathbf{r}}$ and
grad $\widehat{F_{\lambda}}(\tenW_{k})\rightarrow0$ as $\tau\rightarrow0$.
In this case, optimizing the modified cost function $\widehat{F_{\lambda}}$
is equivalent to the original cost function $F_{\lambda}$.
\end{document}